\DeclareMathOperator*{\argmin}{arg\,min}
\DeclareMathOperator*{\E}{\mathbb{E}}
\DeclareMathOperator*{\W}{\mathbb{W}}
\newtheorem{theorem}{Theorem}
\newtheorem{assum}{Assumption}
\newtheorem{lemma}{Lemma}
\newtheorem{proposition}{Proposition}
\newtheorem{corollary}{Corollary}
\newtheorem{definition}{Definition}
\title{Communication-Efficient Federated Hypergradient Computation via Aggregated Iterative Differentiation}
\author
{
Peiyao Xiao and Kaiyi Ji
\vspace{0.3cm}\\ Department of Computer Science and Engineering \vspace{0.15cm}\\ University at Buffalo\vspace{0.15cm}\\{\tt \{peiyaoxi,kaiyiji\}@buffalo.edu}
}
\begin{document}

\maketitle	

\begin{abstract}%
Federated bilevel optimization has attracted increasing attention due to emerging machine learning and communication applications. The biggest challenge lies in computing the gradient of the upper-level objective function (i.e., hypergradient) in the federated setting due to the nonlinear and distributed construction of a series of global Hessian matrices. In this paper, we propose a novel communication-efficient federated hypergradient estimator via aggregated iterative differentiation (AggITD). AggITD is simple to implement and significantly reduces the communication cost by conducting the federated hypergradient estimation and the lower-level optimization simultaneously. We show that the proposed AggITD-based algorithm achieves the same sample complexity as existing approximate implicit differentiation (AID)-based approaches with much fewer communication rounds in the presence of data heterogeneity. Our results also shed light on the great advantage of ITD over AID in the federated/distributed hypergradient estimation. This differs from the comparison in the non-distributed bilevel optimization, where ITD is less efficient than AID. Our extensive experiments demonstrate the great effectiveness and communication efficiency of the proposed method. 
\end{abstract}

\section{Introduction}
Bilevel optimization has drawn significant attention from the machine learning (ML) community due to its wide applications in ML including meta-learning~\citep{finn2017model,rajeswaran2019meta}, automated hyperparameter optimization \citep{franceschi2018bilevel,feurer2019hyperparameter}, reinforcement learning~\citep{konda1999actor,hong2020two}, adversarial learning~\citep{zhang2022revisiting,liu2021investigating}, signal processing~\citep{kunapuli2008classification} and AI-aware communication networks~\citep{ji2023network}. Existing studies on bilevel optimization have mainly focused on the single-machine scenario. However, due to computational challenges such as the second-order hypergradient computation and 
the increasing scale of problem models (e.g., deep neural networks), learning on a single machine turns out to be inefficient and unscalable. In addition, data privacy has also arisen as a critical concern in the single-machine setting recently~\citep{mcmahan2017communication}. These challenges have greatly motivated the recent development of federated bilevel optimization, with emerging applications such as federated meta-learning~\citep{tarzanagh2022fednest}, hyperparameter tuning for federated learning~\citep{huang2022federated}, resource allocation over edges~\citep{ji2022network} and graph-aided federated learning~\citep{xing2022big} etc.   

Mathematically, federated bilevel optimization takes the following formulation with $m$ clients. 
\begin{align}\label{object function}
&\min_{x\in\mathbb{R}^{d_1}}\ \  \ f(x)=\frac{1}{m}\sum_{i=1}^mf_i(x,y_{(x)}^\ast) \nonumber\\
&\text{subject to}\ \ 
y_{(x)}^\ast\in\argmin_{y\in\mathbb{R}^{d_2}}\frac{1}{m}\sum_{i=1}^{m}g_i(x,y),
\end{align}
where the upper- and lower-level functions  $f_i(x,y)=\mathbb{E}_{\xi_i} F_i(x,y;\xi_i)$ and $g_i(x,y)=\mathbb{E}_{\zeta_i} G_i(x,y;\zeta_i)$ for each client $i$ are jointly continuously differentiable. To efficiently solve the distributed nested problem in \cref{object function}, the biggest challenge lies in computing the gradient of the upper-level objective, i.e., the hypergradient $\nabla f(x)$, due to the approximation of a global Hessian inverse matrix and the client drift induced by the data  heterogeneity~\citep{karimireddy2020scaffold,hsu2019measuring}. To overcome these issues, existing approaches all focus on the AID-based federated hypergradient
estimation~\citep{huang2022federated,tarzanagh2022fednest}. However, the AID-based approaches naturally contain two consecutive loops at each outer iteration, each of which contains a large number of communication rounds,  for minimizing the lower-level objective and constructing the federated hypergradient estimate, separately, as shown in the left illustration in \Cref{Fig.process}. This heavily complicates the implementation 
and increases the communication cost.

\begin{figure*}[t]
    \centering
    \vspace{-0.1cm}
    \includegraphics[scale=0.32]{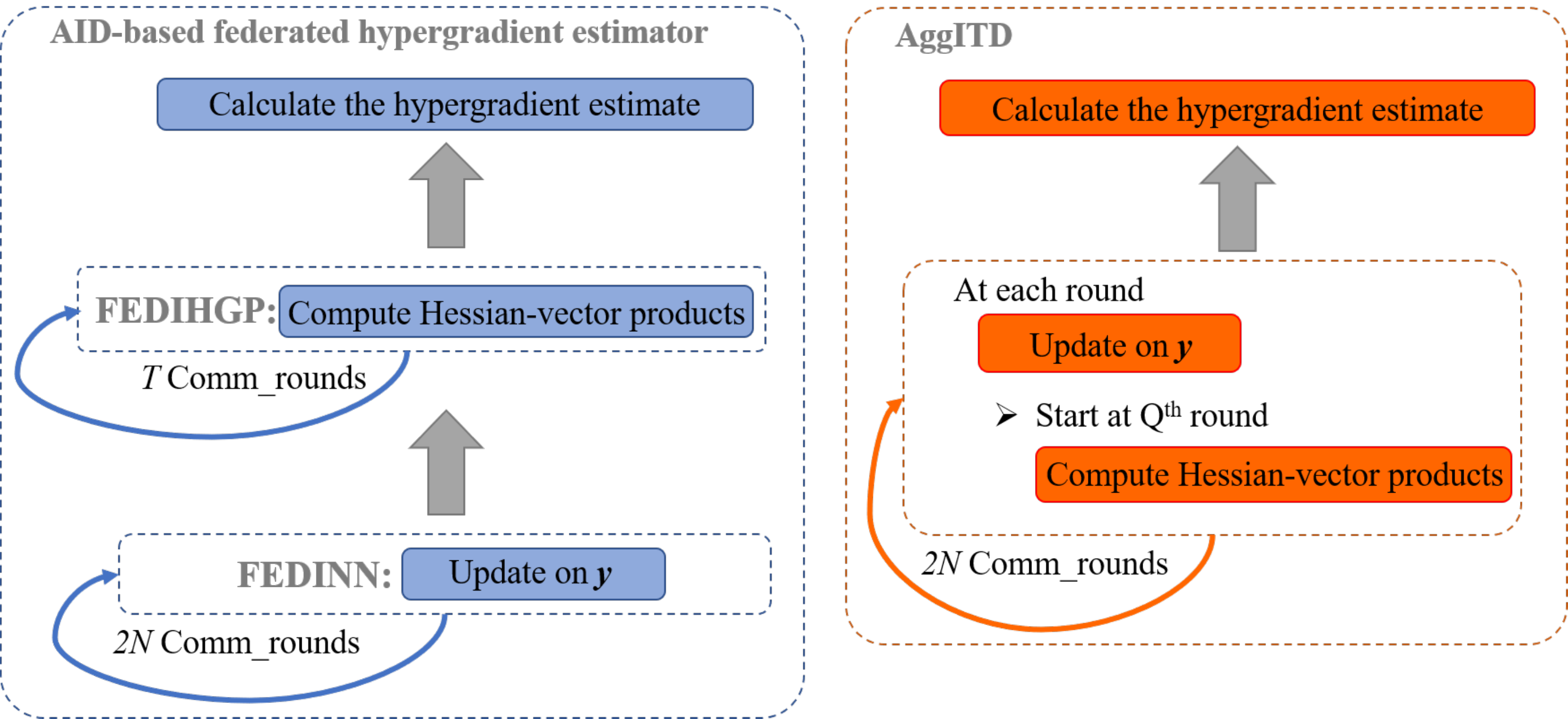}
    \hspace{0.2cm}
    \includegraphics[scale=0.2]{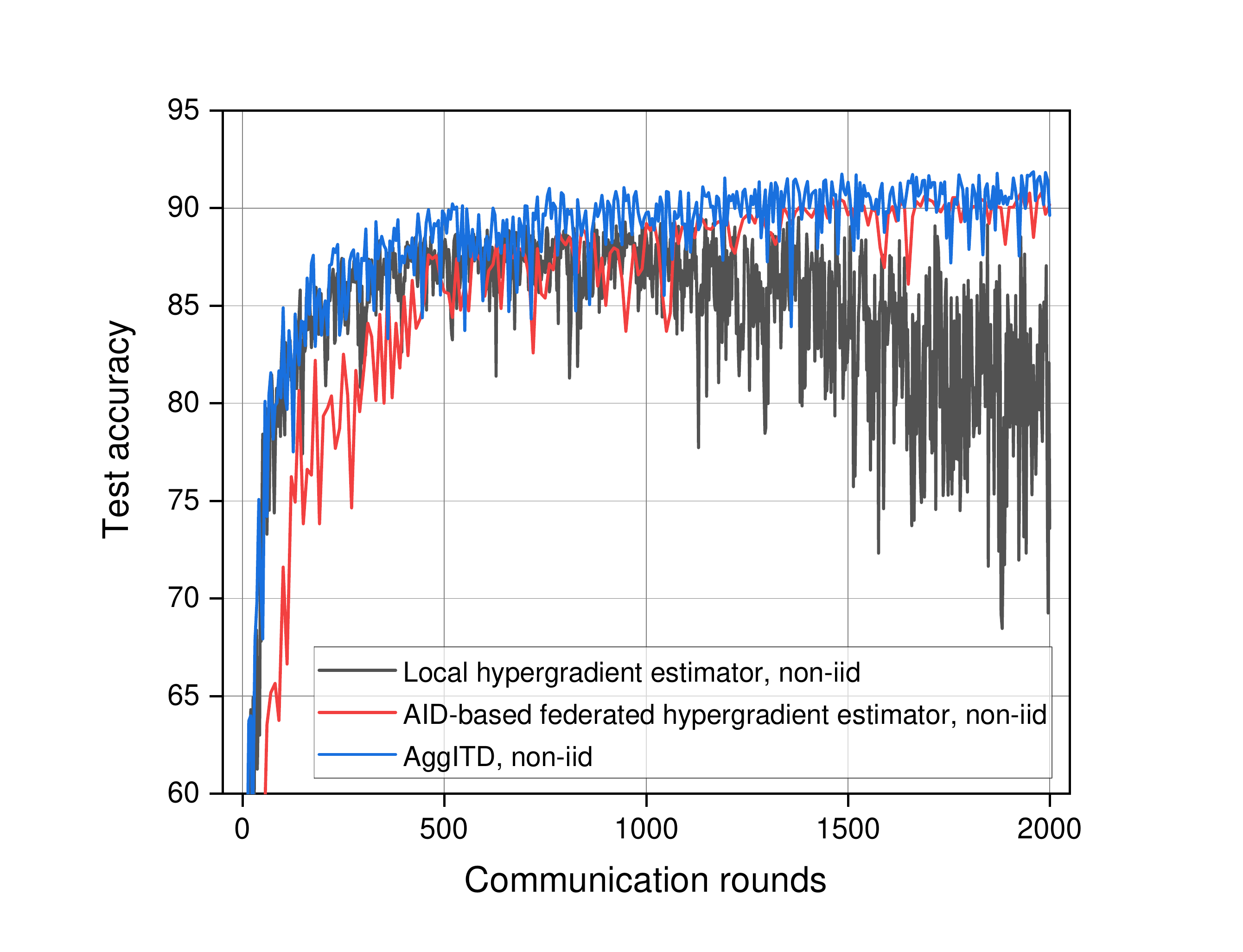}
        \vspace{-0.2cm}
    \caption{Comparison between AID-based FHE (left)~in FedNest~\citep{tarzanagh2022fednest} and our proposed AggITD estimator (middle). The right plot compares the performance  among fully local hypergradient estimator (i.e., using only local information), AID-based FHE and AggITD in federeated hyper-representation learning  in the presence of data heterogeneity.} 
    \label{Fig.process}
\end{figure*}
\subsection{Main contributions}
In this paper, we propose a new federated hypergradient estimator (FHE) via aggregated iterative differentiation, which we refer to  as AggITD. 
As shown in \Cref{Fig.process}, our AggITD estimator 
leverages  intermediate iterates of the lower-level updates on $y$ for  the federated hypergradient estimation rather than the last iterate as in AID-based methods, and hence admits a simpler implementation  and much fewer communication rounds by conducting the lower-level updates on $y$ and the Hessian-vector-based hypergradient estimation simultaneously within the same communication loop. 
Our detailed contributions are summarized as below. 

{\bf A new ITD scheme.} We first show that existing ITD-based approaches in the non-distributed  setting~\citep{franceschi2018bilevel,grazzi2020iteration,ji2021bilevel} rely on the accomplishment of the lower-level updates on $y$ for the matrix-vector-based hypergradient estimation, and hence still requires two long communication loops for the federated hypergradient estimation (see \Cref{sec:aggaid} for more details). In contrast, we propose a new iterative differentiation process suitable for the efficient distributed implementation, which starts the matrix-vector based hypergradient estimation at a randomly sampled intermediate lower-level iterate, as illustrated in \Cref{Fig.process}. We anticipate that our estimator can be of independent interest to other distributed settings such as decentralized or asynchronous bilevel optimization. 

\begin{table*}[!t]
\renewcommand{\arraystretch}{1.5}
\centering
\small
\begin{tabular}{|c|c|c|c|} \hline
 \textbf{Hypergradient estimators} & Comm\_rounds$/$Outer\_itr & Comm\_loops$/$Outer\_itr & Sample complexity
 \\ \hline 
 AID-based FHE~\citep{tarzanagh2022fednest} & $2N+T+3$   & $2$  & {\small $\mathcal{\widetilde O}(\epsilon^{-2})$}  
 \\\cline{1-4}
AggITD (this paper) \cellcolor{blue!15}&  \cellcolor{blue!15} $2N+3$ &  \cellcolor{blue!15} 1& \cellcolor{blue!15}
{\small $\mathcal{\widetilde O}(\epsilon^{-2})$} \\ \hline
\end{tabular}
\caption{Comparison of AID-based FHE and the proposed AggITD in the presence of data heterogeneity. Communication round: the procedure that "\textbf{for} $i \in S$, in parallel \textbf{do}", where the participating clients send their local information (gradients or Hessian-vector products) to the server for aggregation, and the aggregated information is then broadcast back to clients. $N$ and $T$: the number of iterations for optimizing the lower-level objective and approximating the global Hessian-inverse-vector product, respectively.  
Sample complexity: the total number of samples to achieve an $\epsilon$-accurate stationary point.  
$\mathcal{\widetilde O}$: hide $\log$ factors.}\label{tab:compare}
\vspace{-0.3cm}
\end{table*}
 
{\bf Communication-efficient bilevel optimization.} Building on the proposed AggITD, we further develop a federated bilevel optimization algorithm named FBO-AggITD, which incorporates the technique of federated variance reduction into the lower- and upper-level updates on $y$ and $x$ to mitigate the impact of the client drift on the hypergradient estimation accuracy.  FBO-AggITD contains only a single communication loop, where only efficient matrix-vector products rather than Hessian or Hessian-inverse matrices are computed and communicated for the global Hessian-inverse-vector approximation. 

{\bf New theoretical analysis.} We provide a novel error and convergence analysis for the proposed AggITD estimator and FBO-AggITD algorithm, respectively. The analysis addresses two major challenges. First, differently from the AID-based estimator, the proposed AggITD depends on less accurate intermediate iterates $y^{t},t=Q+1,...,N$ at a random index  $Q$, which may introduce uncontrollable estimation errors due to the client drift. Second, the randomness from stochastic Hessian matrices and gradients further complicates the analysis. In fact, there has been no analysis even for non-distributed stochastic ITD-based estimators. To this end, a tighter recursion type of analysis is developed by decoupling the errors induced by the lower-level updates and the global Hessian-inverse-vector approximation. As shown in \Cref{tab:compare}, AggITD achieves the same sample complexity of $\mathcal{\widetilde O}(\epsilon^{-2})$ as the AID-based FHE~\citep{tarzanagh2022fednest}, with much fewer communication rounds. 

{\bf Strong empirical performance.} As shown in the right plot of \Cref{Fig.process}, AggITD admits a much faster convergence rate w.r.t.~communication rounds and better test accuracy than AID-based FHE. In addition, compared to the fully local hypergradient estimator (which is computed using only local client data), AggITD achieves a much higher test accuracy with a comparable rate and is much more stable with lower variance. This demonstrates the importance of aggregation under the lower-level heterogeneity. 
Such comparisons are also observed in the experiments in \Cref{sec:exp}.

\vspace{-0.1cm}
\subsection{Related work}
\vspace{-0.1cm}
\textbf{Bilevel optimization.} A large body of bilevel optimization methods have been proposed since the work in~\citealt{bracken1973mathematical}. 
For example,  \citealt{hansen1992new, gould2016differentiating, shi2005extended, sinha2017review} reduced the bilevel problem to the single-level constraint-based problem. Gradient-based methods have drawn more attention in machine learning recently, which can be generally categorized into 
AID~\citep{domke2012generic, pedregosa2016hyperparameter, liao2018reviving,arbel2022amortized} and ITD~\citep{maclaurin2015gradient, franceschi2017forward,finn2017model, shaban2019truncated, grazzi2020iteration} based methods. Various stochastic bilevel optimizers have also been developed via momentum~\citep{yang2021provably,huang2021biadam,guo2021randomized}, variance reduction~\citep{yang2021provably,dagreou2022framework}, Neumann series~\citep{chen2021single,ji2021bilevel}. 
Theoretically, the convergence of bilevel optimization has been analyzed by \citealt{franceschi2018bilevel,shaban2019truncated,liu2021value,ghadimi2018approximation,ji2021bilevel,hong2020two}. More results and details can be found in the survey by  \citealt{liu2021investigating}. 
In this paper, we propose a new stochastic ITD-based hypergradient estimator, which is further extended to the federated setting.  


\noindent \textbf{Federated learning.} Federated Learning was firstly introduced to allow different clients to train a model collaboratively without sharing data \citep{konevcny2015federated, shokri2015privacy,mohri2019agnostic}. As one of the earliest methods, FedAvg has been shown to effectively reduce the communication cost~\citep{mcmahan2017communication}. An increasing number of variants of FedAvg have been further proposed to address the issues such as the slow convergence and client drift via regularization~\citep{li2020federated,acar2021federated}, variance reduction~\citep{mitra2021linear,karimireddy2020scaffold}, proximal splitting~\citep{pathak2020fedsplit} and adaptive optimization~\citep{reddi2020adaptive}. In the homogeneous setting, FedAvg is relevant to local SGD, and has been analyzed in  \citealt{stich2019local,wang2018cooperative,stich2019error,basu2019qsparse}. 
In the heterogeneous setting,  \citealt{li2020federated,wang2020zeroth, mitra2021linear,li2019convergence,khaled2019first} provided the convergence analysis of their methods.

\noindent \textbf{Federated bilevel optimization.} Recent works~\cite{gao2022convergence,li2022local} focused on the homogeneous setting, and proposed momentum-based methods with fully local hypergradient estimators. The most relevant work  \cite{tarzanagh2022fednest} proposed FedNest using an AID-based FHE, and further provided its convergence rate guarantee despite the data heterogeneity. This paper proposes a simple and communication-efficient method via an ITD-based FHE.



Bilevel optimization has also been studied in other distributed setups such as decentralized bilevel optimization~\citep{chen2022decentralized,yang2022decentralized,lu2022decentralized} and  asynchronous bilevel optimization over directed network~\citep{yousefian2021bilevel}. 
We anticipate that our proposed ITD-based estimator can be also applied to these scenarios.   


\noindent {\bf Notations.} We use $\partial  f(x,y^*_{(x)})/\partial x$ to denote the gradient of $f$ as a function of $x$, and $\nabla_xf$ and $\nabla_yf$ are partial derivatives of $f$ with respect to $x$ and $y$. 
For any vector $v$ and matrix $M$, we denote $\|v\|$ and $\|M\|$ as Euclidean and spectral norms, respectively. We let $f(x,y)=\frac{1}{m}\sum_{i=1}^mf_i(x,y)$ and $g(x,y)=\frac{1}{m}\sum_{i=1}^m g_i(x,y)$ denote the averaged upper- and lower-level objective functions across all clients $i$. 
Finally, let $S=\{1,...,m\}$ denote the set of all clients.

\vspace{-0.2cm}
\section{Federated Hypergradient Computation}
\subsection{Federated Hypergradient and Existing Approach}
\noindent {\bf Federated hypergradient.} The biggest challenge of federated bilevel optimization lies in computing the aggregated hypergradient 
$\nabla f(x)=\frac{1}{m}\sum_{i=1}^m\frac{\partial f_i(x,y_{(x)}^\ast)}{\partial x}$ due to the implicit dependence of the global lower-level solution $y^*_{(x)}$ on  $x$. 
Using the implicit function theorem~\citep{griewank2008evaluating} and  if $ g(\cdot)$ is twice differentiable and 
$\nabla_y^2g(x,y^*_{(x)})$ is invertible, an explicit form of $\nabla f(x)$ is 
\begin{align}\label{eq:implicitiHg}
\nabla f(x) =& \frac{1}{m}\sum_{i=1}^m\Big(\nabla_x f_i(x,y^*_{(x)}) - \nabla_x\nabla_yg(x,y^*_{(x)}) \nonumber
\\&\times\big[\nabla_y^2g(x,y^*_{(x)})\big]^{-1}\nabla_y f_i(x,y^*_{(x)})\Big),
\end{align}
where the first and second terms on the right side are direct and indirect parts of the federated hypergradient. 
As shown by \cref{eq:implicitiHg}, two challenges arise in the federated hypergradient computation. First, the second-order derivatives  $\nabla_x\nabla_yg(x,y^*_{(x)})$ and $\nabla_y^2g(x,y^*_{(x)})$ are all {global} information that is not accessible to each client $i$.    
 This greatly complicates the design of an unbiased estimate of $\nabla f(x)$. For example, it can be seen that a straightforward estimator by replacing such two global quantities with their local counterparts, i.e.,  $\nabla_x\nabla_yg_i(x,y^*_{(x)})$ and $\nabla_y^2g_i(x,y^*_{(x)})$ is a {biased} approximation of $\nabla f(x)$ due to the client drift.   
Second, it is highly infeasible to compute and communicate second-order information (such as Hessian inverse or even Hessian/Jacobian matrices) due to the restrictive computing and communication resource. 

\noindent {\bf AID-based FHE.} To address these challenges, \citealt{tarzanagh2022fednest} recently proposed  
a matrix-vector-based FHE  building on a non-federated AID-based estimate used in \citealt{ghadimi2018approximation}, which takes the form of 
\begin{align*}
 \hat h^I(x)=\frac{1}{m}\sum_{i=1}^m \big[ &\nabla_x F_i(x,y^N;\xi_i) -\nabla_x\nabla_yG_i(x,y^N)p_{T^\prime} \big]
\end{align*}
where $y^N$ is first obtained to estimate the global $y^*_{(x)}$ via a FedSVRG~\citep{mitra2021linear,konevcny2016federated} type of method with $2N$ communication rounds and an  aggregated Hessian-inverse-vector (HessIV) estimate
\begin{align*}
   &p_{T^\prime} = \prod_{t=1}^{T^\prime}\big(I-\lambda\frac{1}{|S_t|}\sum_{i\in S_t}\nabla_y^2G_i(x,y^N;\zeta_{i,t})\big) p_0 
   \\&\text{with}\quad p_0= \frac{\lambda T}{|S|}\sum_{i\in S}\nabla_y F_i(x,y^N;\xi_{i,0})
\end{align*}
is then constructed based on the inner output $y^N$ using extra $T^\prime$ communication rounds as, for $t=1,...,T^\prime$
\begin{align*}
&\text{Local client $i$:} \quad p_{i,t} = (I-\lambda\nabla_y^2G_i(x,y^N;\zeta_{i,t}))p_{t-1}\nonumber
\\& \text{Server aggregates:} \quad 
p_t = \frac{1}{|S_t|}\sum_{i\in S_t}p_{i,t},
\end{align*}
where $T^\prime$ is chosen from $\{0,...,N-1\}$ uniformly at random. However, several challenges still remain, as elaborated in the next \Cref{sec:aggaid}.

\subsection{Our Method: Aggregated Iterative Differentiation}\label{sec:aggaid} 
\noindent {\bf Challenges in AID-based FHC.} Note that at each outer iteration $k$, AID-based FedIHGP includes two major communication loops, i.e., $2N$ rounds for inner  $y$ updates and $T^\prime$ rounds for outer  FHC, which introduce two challenges in practice. First, the construction of an AID-based hypergradient estimate is built on the output $y^N$ is inherently separated from the inner $y$ updating loop, and the resulting two communication and optimization loops complicate the implementation in practice. Second, the separate $T^\prime$ (which can be large at an order of $\kappa\log \frac{1}{\epsilon}$ in the worst case~\citep{tarzanagh2022fednest} communication rounds for the HessIV estimation can add a non-trivial communication burden on the FL systems due to the limited communication bandwidth and resource (e.g., in wireless setting). Then, an important question here is: {\em Can we develop a new FHE that can address these implementation and communication challenges simultaneously, while achieving better communication and computational performance in theory and in practice?} In this section, we provide an affirmative answer to this question by developing a novel aggregated iterative differentiation (AggITD)  for communication-efficient FHC.


\begin{algorithm}[t]
\caption{$\widetilde{h},\;y^N=\textbf{AggITD}(x,y,\beta)$}
\begin{algorithmic}[1]\label{algorithm2}
\STATE{Set $y^0=y$ and choose $Q$ from $\{0,...,N\}$ UAR}
\FOR{$t=0,1,2,...,N$}
\FOR{$i\in S$ in parallel 
}
\STATE{
Compute $q_i^t=\nabla_yG_i(x,y^t;\zeta_{i,t})$  for $y$ updates}
\STATE{{if} $t=Q$, compute $r_i^t =\nabla_yF_i(x,y^t;\xi_{i,t}) $
}
\STATE{{if} $t\geq Q+1$ and $t\leq N$,  compute $z_i^t=z^{t-1}-\frac{\partial \langle\nabla_yG_i(x,y^t;u_{i,t}),z^{t-1}\rangle}{\partial y^t}$ via autograd}
\ENDFOR
\IF{$t\leq N-1$}
\STATE{Server aggregates and broadcasts
$q^t=\frac{1}{|S|}\sum_{i \in S}q_i^t$}
\STATE{$y^{t+1}=\textbf{One-Round-Lower}(x,y^t, q^t, \beta)$}
\ENDIF
\STATE{{if} $t=Q$, aggregate $z^t:=r^t=\frac{1}{|S|}\sum_{i \in S}r_i^t$} 
\STATE{{if} $t \geq Q+1$ and $t\leq N$, aggregate $z^{t}=\frac{1}{|S|}\sum_{i\in S}z_i^t$ }
\ENDFOR
\STATE{{\small$p=\lambda(N+1)z^{N-1}$} if {\small$Q<N$} or {\small$\lambda(N+1)z^N$} otherwise.}
\FOR{$i\in S$ in parallel}
\STATE{$\widetilde{h}_i=\nabla_xF_i(x, y^N;\xi_{i})-\frac{\partial\langle\nabla_yG_i(x, y^N;\chi_i), p\rangle}{\partial x}$}
\ENDFOR
\STATE{Server aggregates $\widetilde{h}=\frac{1}{|S|}\sum_{i\in S}\widetilde{h}_i$}
\end{algorithmic}
\end{algorithm}

\noindent {\bf Our idea.} Instead of constructing the federated hypergradient after obtaining the inner output $y^N$, our idea is to utilize the intermediate iterates $y^1,...,y^N$ and communication rounds of the inner $y$ loop also for the federated hypergradient approximation, and hence remove the expensive $T^\prime$ communication rounds. To do this, one possible solution is to use the idea of an ITD-based method from the non-federated bilevel optimization~\citep{ji2021bilevel,grazzi2020iteration}, which approximates the hypergradient $\frac{\partial f(x,y^*(x))}{\partial x}$ by computing $\frac{\partial f(x,y^N)}{\partial x}$ via the automatic differentiation, where $y^N$ is the $N$-step output of gradient descent\footnote{We take GD as an illustration example, and other solvers can also be used}, i.e., $y_{t+1}=y_t-\alpha\nabla_y g(x,y_t)$ for $t=0,...,N-1$. The explicit form of the indirect part of $\frac{\partial f(x,y^N)}{\partial x}$ is then taken as 
\begin{align}\label{aid_fedhyper}
    -\alpha&\sum_{t=0}^{N-1}\nabla_x\nabla_y  g(x,y^t)\nonumber
    \\&\times\prod_{j=t+1}^{N-1}(I-\alpha\nabla_y^2g(x,y^j))\nabla_y f(x,y^N),
\end{align}
which, however, still needs an extra communication loop for the construction because its matrix-vector computations require the information of $\nabla_y f(x,y^N)$ at the output $y^N$, and in addition, 
the $N$ summations complicate the federated implementation. To this end, we next provide a novel aggregated ITD-based estimator for FHC, which uses the same communication loop for both the $y$ updates and the federated hypergradient construction. 

\begin{algorithm}[t]
\caption{$y_+=\textbf{One-Round-Lower}(x,y,q,\beta)$}   
\begin{algorithmic}[1]\label{algorithm3}
\FOR{$i \in S$ in parallel}
\STATE{$y_{0}^{i}=y$ and choose $\beta^i\in(0,\beta]$}
\FOR{$\upsilon=0,1,2,...,\tau_i-1$}
\STATE{$q_{\upsilon}^i=\nabla_yG_i(x,y_{\upsilon}^{i};\zeta^i_\upsilon)-\nabla_yG_i(x,y;\zeta^i_\upsilon)+q$}
\STATE{$y_{\upsilon+1}^{i}=y_{\upsilon}^{i}-\beta^i q_{\upsilon}^i$}
\ENDFOR
\ENDFOR
\STATE{$y_+=\frac{1}{|S|}\sum_{i\in S}y_{\tau_i}^{i}$}
\end{algorithmic}
\end{algorithm}

\noindent {\bf Proposed AggITD.} As shown in \cref{algorithm2} and the illustration in \Cref{Fig.process},  AggITD 
first samples an index $Q$ from the set $\{0,...,N\}$ uniformly at random, and then at each inner iteration $t$, each client $i$ computes the local gradient $\nabla_yG_i(x,y^t;\zeta_{i,t})$, which are aggregated for optimizing the lower-level objective via the federated SVRG-type One-Round-Lower sub-procedure in \Cref{algorithm3}. The steps in lines 5-6 and 12-13 provide an efficient iterative way to construct a novel estimate of federated Hessian-inverse-vector product {\small$(\nabla_y^2g(x,y^*_{(x)}))^{-1}\nabla_yf(x,y^*_{(x)})$}, which is given by 
\begin{align*}
  \widehat{\text{HessIV}}=&  \lambda(N+1)\prod_{t=N}^{Q+1}
\big(I-\frac{\lambda}{|S|}\sum_{i\in S}\nabla_y^2G_i(x,y^t;u_{i,t})\big)
\\&\times\bigg [\frac{1}{|S|}\sum_{i\in S}\nabla_yF_i(x,y^Q;\xi_{i,Q})\bigg],
\end{align*}
where we use $\prod_{j=N}^{N+1}(\cdot)=I$ for simplicity.  
Note that these steps for the FHC process compute and communicate only efficient Hessian-vector products {\small $\frac{\partial \langle\nabla_yG_i(x,y^t;u_{i,t}),z^{t-1}\rangle}{\partial y^t}=\nabla_y^2G_i(x,y^t;u_{i,t})z^{t-1}$} using automatic differentiation (e.g., {\bf torch.autograd}), rather than Hessian or Hessian inverse matrices. After broadcasting the global $\widehat{\text{HessIV}}$, each client $i$ builds a local FHE $\widetilde h_i(x)=\widetilde h_i^D(x) - \widetilde h_i^I(x)$, where the direct and indirect parts are given by 
\begin{align*}
    \widetilde h_i^D(x)=&\nabla_xF_i(x, y^N;\xi_{i})
   \\ \widetilde h_i^I(x) = &\nabla_x\nabla_yG_i(x, y^N;\chi_i)\widehat{\text{HessIV}}.
\end{align*}
Then, the aggregated hypergradient estimate is given by $\widetilde h(x)=\widetilde h^D(x)-\widetilde h^I(x)=\frac{1}{|S|}\sum_{i\in S}\widetilde h_i(x)$. Meanwhile, we would like to point out the differences between our method and distributed bilevel problems, such as \cite{yang2022decentralized}. First, in our algorithm, the server is to aggregate the local weights from clients and broadcast the aggregated weights back to the clients. In contrast, for such decentralized methods, the server needs to compute the gradients or hypergradients. Then, our method runs multiple local updates to improve communication efficiency, whereas the decentralized methods do not have such operations. Third, all such decentralized methods use the AID-based hypergradient estimator, whereas our method uses the ITD-based scheme.
However, to analyze this AggITD-based estimator, several technical challenges arise as below.

\noindent {\bf Technical challenges.} First, differently from the AID-based FHE that is evaluated at the last iterate $y^N$, our proposed estimator depends on less accurate intermediate iterates $y^{t},t=Q+1,...,N$, which may introduce larger or even uncontrollable estimation errors given the client drift effect. Thus, a more careful and tighter analysis is required. Second, the  randomness from stochastic Hessian matrices and gradients further complicates the analysis. In fact, there has been no analysis for even non-federated (i.e., $|S|=1$) {\bf stochastic} ITD-based estimators. 
Third, the aggregation $\frac{1}{|S|}\sum_{i\in S}$ complicates the bias and variance analysis.

\vspace{-0.2cm}
\section{Proposed Algorithm}
We now develop a new federated bilevel optimizer named FBO-AggITD based on the proposed AggITD estimator. As shown in \Cref{algorithm1}, FBO-AggITD first obtains the federated hypergradient estimate $\widetilde h$ and the approximate $y_{k+1}=y_k^N$ of the lower-level solution $y_k^*$ via the {\bf AggITD} sub-procedure in \Cref{algorithm2}. Then, building on $\widetilde h$ and $y_{k+1}$, similarly to \cite{tarzanagh2022fednest}, we use a local SVRG-type {\bf One-Round-Upper} sub-procedure for solving the upper-level problem w.r.t.~$x$, where 
each client $i$ runs $\tau_i$ steps based on the radient $h_{i,\upsilon}$ given by 
\begin{align*}
h_{i,\upsilon}=&h-\nabla_xF_i(x,y_+;\xi^i_{\upsilon})+\nabla_xF_i(x_\upsilon^i,y_+;\xi^i_{\upsilon}) \nonumber
\\ =& \widetilde h^D(x) - \widetilde h^I(x)- \nabla_xF_i(x,y_+;\xi^i_{\upsilon})
\\&+\nabla_xF_i(x_\upsilon^i,y_+;\xi^i_{\upsilon}), 
\end{align*}
where the direct part $\widetilde h^D(x)=\frac{1}{|S|}\sum_{i\in S}\nabla_xF_i(x, y_+;\xi_{i})$ of the global hypergradient estimate $\widetilde h$ uses different samples $\xi_i$ from $\xi_\upsilon^i$ of the local gradient $\nabla_x F_i(x,y_+;\xi_\upsilon^i),i\in S$ to provide an SVRG-type variance reduction effect on the direct part of the hypergradient. This is in contrast to  the upper update in FedNest~\citep{tarzanagh2022fednest} where 
the data samples $\xi_i$ and $\xi_\upsilon^i$ are chosen to be the same. 
Note that 
we do not apply the SVRG-type updates to the entire hypergradient but only the direct part because the indirect part requires the global Hessian information at iterates $x_\upsilon^i$, which is infeasible at each client $i$. 

\begin{algorithm}[H]
\caption{FBO-AggITD}   
\begin{algorithmic}[1]\label{algorithm1}
\STATE {\bfseries Input:}  $K,N\in\mathbb{N}$, $\alpha_k, \beta_k >0$, initializations $x_0, y_0$.
\FOR{$k=0,1,2,...,K$}
\STATE{$\widetilde{h},\;y_{k+1}=\textbf{AggITD}(x_k,y_k,\beta_k)$}
\STATE{$x_{k+1}=\textbf{One-Round-Upper}(x_k,y_{k+1}, \widetilde{h}, 
\alpha_k)$}
\ENDFOR		
\end{algorithmic}
\end{algorithm}

\vspace{-0.5cm}
\begin{algorithm}[ht]
\caption{$x_{+}=\textbf{One-Round-Upper}(x,y_+,h,\alpha)$}   
\begin{algorithmic}[1]\label{algorithm4}
\FOR{$i \in S$ in parallel}
\STATE{$x_0^i=x$ and choose $\alpha^i\in(0,\alpha]$}
\FOR{$\upsilon=0,1,2,...,\tau_i-1$}
\STATE{$h_{i,\upsilon}=h-\nabla_xF_i(x,y_+;\xi^i_{\upsilon})+\nabla_xF_i(x_\upsilon^i,y_+;\xi^i_{\upsilon})$}
\STATE{$x_{\upsilon+1}^{i}=x_{\upsilon}^{i}-\alpha^i h_{i,\upsilon}$}
\ENDFOR
\ENDFOR
\STATE{$x_+=\frac{1}{|S|}\sum_{i\in S}x_{\tau_i}^{i}$}
\end{algorithmic}
\end{algorithm}

\vspace{-0.3cm}
\section{Main Results}
\vspace{-0.1cm}
\subsection{Definitions and Assumptions}\label{sec:def}
Let $z=(x,y) \in \mathbb{R}^{d_1+d_2}$.  Throughout this paper, we make the following definitions and standard assumptions on the lower- and upper-level objectives, as also adopted in stochastic bilevel optimization~\citep{ji2021bilevel,hong2020two,khanduri2021near,chen2021closing} as well as in the federated bilevel optimization~\citep{tarzanagh2022fednest}.
\begin{definition}
A mapping $f$ is $L$-Lipschitz continuous if for  $\forall\,z,z^\prime$,  
$\|f(z)-f(z^\prime)\|\leq L\|z-z^\prime\|.$
\end{definition} 
Since the objective $f(x)$ is nonconvex, algorithms are expected to find an $\epsilon$-accurate stationary point defined below.
\begin{definition}
We say $\bar x$ is an $\epsilon$-accurate stationary point of the objective function $f(x)$ if $\mathbb{E}\|\nabla f(\bar x)\|^2\leq \epsilon$, where $\bar x$ is the output of an algorithm. 
\end{definition}
\begin{assum}\label{assum:muconvex}
The lower-level function $G_i(x, y;\zeta_i)$ is $\mu$-strongly-convex w.r.t. $y$ for any $\zeta_i$. \end{assum}
The following assumption imposes the Lipschitz conditions on the lower- and upper-level functions for each client $i$. 
\begin{assum}\label{assum:lip}
The objective functions satisfy
\begin{list}{$\bullet$}{\topsep=0.2ex \leftmargin=0.2in \rightmargin=0.in \itemsep =0.06in}
\item The function $F_i(z;\xi_i)$ is $M$-Lipschitz continuous.
\item The gradients $\nabla F_i(z;\xi_i)$ and $\nabla G_i(z;\zeta_i)$ are unbiased estimators of $\nabla f_i(z)$ and $\nabla g_i(z)$.
\item The gradients $\nabla F_i(z;\xi_i)$ and $\nabla G_i(z;\zeta_i)$ are 
$L_f$- and $L_g$-Lipschitz continuous, respectively.
\end{list}
\end{assum}
\begin{assum}\label{high_lip}
The second-order derivatives  satisfy 
\begin{list}{$\bullet$}{\topsep=0.2ex \leftmargin=0.2in \rightmargin=0.in \itemsep =0.06in}
\item The derivatives $\nabla_x\nabla_y  G_i(z;\zeta_i)$ and $\nabla_y^2 G_i(z;\zeta_i)$ are unbiased estimators of $\nabla_x\nabla_y  g_i(z)$ and $\nabla_y^2 g_i(z)$.
\item The derivatives  $\nabla_x\nabla_y  G_i(z;\zeta_i)$ and $\nabla_y^2 G_i(z;\zeta_i)$ are $\rho$-Lipschitz continuous.  
\end{list}
\end{assum}
\begin{assum}\label{bounded_variance}
The variances of gradients  $\nabla F_i(z;\xi_i)$ and $\nabla G_i(z;\zeta_i)$
are bounded by $\sigma_f^2$ and $\sigma_{1}^2$. Moreover, the lower-level client  dissimilarity $\mathbb{E}\|\nabla g_i(z)-\nabla g(z)\|^2\leq \sigma_{2}^2$.
\end{assum}
In this paper, let $\sigma_g^2=\max\{\sigma_{1}^2,\sigma_{2}^2\}$ for notational simplicity. \Cref{bounded_variance} is commonly adopted in the heterogeneous FL, and it is reduced to the homogeneous setting when $\sigma_2=0$. It is worth noting that our assumptions are exactly the same as existing AID-based federated/distributed bilevel studies such as \citep{tarzanagh2022fednest}.

\subsection{Estimation Properties for AggITD}
We analyze the estimation properties of AggITD. Let 
\begin{align}
    B^I=&\E\big[\|{\E}[\widetilde{h}^I(x)]-\nabla_x\nabla_yg(x,y^N) \nonumber
\\&\times(\nabla_y^2g(x,y^N)^{-1})\nabla_yf(x, y^N)\|^2\,|\,x,y^N\big]\nonumber
\end{align}
denote the estimation error of the indirect part of $\widetilde{h}^I(x)$. 

\begin{proposition}\label{propsfrvcc}
Suppose Assumptions~\ref{assum:muconvex}-\ref{bounded_variance} are satisfied  and let $y_{(x)}^{\ast}=\argmin_{y}g(x,y)$. Further, set $\lambda\leq\min\{10, \frac{1}{L_g}\}$ and  $\beta^i=\frac{\beta}{\tau_i}$ and any stepsize $\alpha>0$, where $\beta\leq\min\{1,\lambda, \frac{1}{6L_{g}}\}$. Then, we have 
\begin{align}\label{prop:auzilix}
&B^I\leq[4\lambda^2L_g^2M^2\alpha_1(N)+4\lambda^2L_f^2L_g^2\alpha_3(N)]\E\|y-y_{(x)}^\ast\|^2\nonumber
\\&\;\ +\frac{4L_g^2M^2(1-\lambda\mu)^{2N+2}}{\mu^2}+400\lambda^2\beta^2L_g^2M^2\sigma_g^2\rho^2\alpha_2(N)\nonumber
\\&\;\ +\frac{200\lambda\beta^2\sigma_g^2L_f^2L_g^2N(N+1)}{\mu},
\end{align}
where 
$\alpha_1(N)=4(N+1)(1-\frac{\beta\mu}{2})^N[\frac{\rho^2}{\lambda\mu^3}+\frac{4\rho^2}{\beta\mu^3}],\alpha_2(N)=\frac{N(N+1)(1+(1-\lambda\mu)^2)}{\lambda\mu^3}$ and $\alpha_3(N)=\frac{3(N+1)(1-\frac{\beta\mu}{2})^N}{\lambda\mu}$.
\end{proposition}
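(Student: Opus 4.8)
The plan is to first integrate out the fresh stochastic samples to obtain the conditional mean of the estimator, then decompose the resulting bias into a Neumann-truncation error, a \emph{Hessian-location} error and a \emph{gradient-location} error, and finally bound each piece using Lipschitz continuity together with the contraction of the lower-level iterates.

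First I would condition on the whole lower-level trajectory $y^0,\dots,y^N$ and integrate out the curvature and gradient samples. Since the Hessian samples $u_{i,t}$ are drawn independently across $t$ and are independent of the iterates, each new factor $I-\lambda\frac{1}{|S|}\sum_{i}\nabla_y^2 G_i(x,y^t;u_{i,t})$ is independent of the accumulated vector $z^{t-1}$; combined with the unbiasedness in Assumptions~\ref{assum:lip}--\ref{high_lip} and the independence of $\chi_i$ and $\xi_{i,Q}$, an easy induction factorizes the expectation of the product. Averaging over $Q\sim\mathrm{Unif}\{0,\dots,N\}$ cancels the prefactor $N+1$, giving
\begin{align*}
\E[\widetilde h^I(x)\,|\,\{y^t\}]=\lambda\,\nabla_x\nabla_y g(x,y^N)\sum_{Q=0}^{N}\prod_{t=N}^{Q+1}\big(I-\lambda\nabla_y^2 g(x,y^t)\big)\nabla_y f(x,y^Q),
\end{align*}
with the convention $\prod_{j=N}^{N+1}(\cdot)=I$ handling $Q=N$. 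A key point is that taking expectation first removes the Hessian-sampling variance, so $\sigma_g^2$ can enter $B^I$ only later, through the randomness of the trajectory itself.

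Next I would use $\|\nabla_x\nabla_y g\|\le L_g$ to reduce the bias to $L_g^2\|A-B\|^2$, where $A$ is the sum above and $B=(\nabla_y^2 g(x,y^N))^{-1}\nabla_y f(x,y^N)$. Inserting the intermediate quantity $A_1$ obtained by replacing every $y^t,y^Q$ with $y^N$ gives $A_1=(\nabla_y^2 g(x,y^N))^{-1}\big(I-(I-\lambda\nabla_y^2 g(x,y^N))^{N+1}\big)\nabla_y f(x,y^N)$, so $A_1-B=-(\nabla_y^2 g(x,y^N))^{-1}(I-\lambda\nabla_y^2 g(x,y^N))^{N+1}\nabla_y f(x,y^N)$; with $\mu$-strong convexity, $\lambda\le 1/L_g$ and $\|\nabla_y f\|\le M$ this contributes the $\tfrac{4L_g^2M^2(1-\lambda\mu)^{2N+2}}{\mu^2}$ term. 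The remaining $A-A_1$ I split, for each $Q$, into a gradient part $\prod_{t=N}^{Q+1}(I-\lambda\nabla_y^2 g(x,y^N))\big(\nabla_y f(x,y^Q)-\nabla_y f(x,y^N)\big)$, controlled by $L_f\|y^Q-y^N\|$ and the contraction $(1-\lambda\mu)^{N-Q}$, and a Hessian part $\big(\prod_{t=N}^{Q+1}(I-\lambda\nabla_y^2 g(x,y^t))-\prod_{t=N}^{Q+1}(I-\lambda\nabla_y^2 g(x,y^N))\big)\nabla_y f(x,y^Q)$, controlled by $M$ together with the telescoping identity $\prod_t A_t-\prod_t\bar A=\sum_{s}(\prod_{t>s}A_t)(A_s-\bar A)(\prod_{t<s}\bar A)$ and $\|A_s-\bar A\|\le\lambda\rho\|y^s-y^N\|$.

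Finally I would take the outer expectation over the trajectory given $(x,y^N)$ and invoke the stability/convergence estimates for \textbf{One-Round-Lower}, which bound $\E\|y^t-y^N\|^2$ and $\E\|y^t-y_{(x)}^\ast\|^2$ by a contraction term $(1-\tfrac{\beta\mu}{2})^{t}\|y-y_{(x)}^\ast\|^2$ plus a FedSVRG variance term of order $\beta^2\sigma_g^2$. Substituting these, using $\|\sum_j v_j\|^2\le(\text{count})\sum_j\|v_j\|^2$ for the four-way split, and evaluating the geometric sum $\sum_Q(1-\lambda\mu)^{2(N-Q)}\le\frac{1}{\lambda\mu}$ together with the nested $(Q,s)$ double sums that generate the $N(N+1)$ and $\frac{1}{\beta\mu}$ factors, reproduces $\alpha_1,\alpha_2,\alpha_3$ and hence the four stated terms. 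I expect the Hessian part to be the main obstacle: bounding the difference of two matrix products forces the telescoping expansion together with careful bookkeeping of the two summations (the truncation index $Q$ and the swap index $s$) and their contraction factors, while ensuring that the less accurate intermediate iterates $y^t$ with $t<N$ do not inflate the error — precisely the point at which the lower-level drift must be decoupled from the Neumann approximation.
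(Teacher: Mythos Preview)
Your proposal is correct and follows essentially the same approach as the paper: both condition on the lower-level trajectory to obtain the explicit bias, factor out $\|\nabla_x\nabla_y g\|\le L_g$, split the remainder into Neumann-truncation, Hessian-location, and gradient-location pieces, control the matrix-product difference via the same telescoping argument, and finish by plugging in the One-Round-Lower contraction estimate $\E\|y^Q-y_{(x)}^\ast\|^2\le(1-\tfrac{\beta\mu}{2})^Q\E\|y-y_{(x)}^\ast\|^2+25N\beta^2\sigma_g^2$. The only cosmetic differences are the order of the split (the paper peels off the gradient-location term first, with the products left at the iterates $y^t$, then separates Hessian-location from truncation, whereas you peel off truncation first) and that the paper phrases the telescoping as a one-step recursion $M_{N-Q}\le(1-\lambda\mu)M_{N-Q-1}+\lambda\rho(1-\lambda\mu)^{N-Q-1}\|y^N-y^{Q+1}\|$ rather than invoking the full product-difference identity directly.
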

\Cref{propsfrvcc} provides an upper bound on the second moment of the estimation bias of the AggITD estimator. As shown in \cref{prop:auzilix}, the first two terms $\mathcal{O}((1-\frac{\beta\mu}{2})^N\E[\|y-y_{(x)}^\ast\|^2])$ and $\mathcal{O}((1-\lambda\mu)^{2N+2})$ correspond to the estimation errors without the client drift, which can be made small by choosing $N$ properly. In addition, the initialization gap $\E[\|y-y_{(x)}^\ast\|^2]$ further relaxes the requirement of $N$ due to the warm start   $y_{k}=y_{k-1}^N$ (see \Cref{algorithm1}), as shown in the final convergence analysis.
It is worth mentioning that these two terms match the error bound of the stochastic AID-based hypergradient estimator in non-federated setting~\citep{ji2021bilevel,ghadimi2018approximation,chen2021closing}, and hence our analysis can be of independent interest to non-federated bilevel optimization. 
Also note that the last two error terms $\mathcal{O}(\lambda^2\beta^2)$ and $\mathcal{O}(\lambda\beta^2)$
are induced by the client drift in the $y$ updates, which exists especially in the FL, can be addressed by choosing a sufficiently small stepsize $\beta$. 
Technically, we first show via a recursive analysis that the key approximation error between the expected indirect part of the AggITD estimator 
\begin{align}\label{eq:intermidB}
\mathbb{E}[\widetilde  h^I(x)|x,y^N]=&\lambda\nabla_x\nabla_yg(x,y^N) \nonumber
\\\times\sum_{Q=0}^{N}\prod_{t=N}^{Q+1}&(I-\lambda\nabla_y^2g(x,y^t))\nabla_yf(x,y^Q)
\end{align}
and the underlying truth is bounded by {\small$$\mathcal{O}\Big(\sum_{Q=0}^{N}(1-\lambda\mu)^{2N-2Q}\|y^Q-y_{(x)}^\ast\|^2+\|y^N-y_{(x)}^\ast\|^2\Big).$$} 
\hspace{-0.12cm}Note from \cref{eq:intermidB} that although the optimality gap $\|y^Q-y_{(x)}^\ast\|$ can be large for small $Q$ (which is induced by our ITD-based construction), the coupling factor $(1-\lambda\mu)^{2N-2Q}$ still makes the overall bound to be small, and this validates the design principle of our AggITD estimator.      
Then, unconditioning on $x, y^N$, incorporating the convergence bounds on the iterates $y^Q$ with intrinsic client drift, we derive the final estimation bounds on AggITD. 
The following proposition  characterizes the estimation variance of the global indirect hypergradient estimate $\widetilde h_i^I(x)$ and the local hypergradient estimate at iteration $\upsilon$ of client $i$. 

\begin{proposition}\label{propecevd}
Suppose Assumptions~\ref{assum:muconvex}-\ref{high_lip} are satisfied. Set $\lambda\leq\min\{10, \frac{1}{L_g}\}$. Then, conditioning on $x,y_+$, we have
\begin{align*}
&\E\|\widetilde{h}_i^I(x)-\Bar{h}^I_i(x)\|^2\leq\sigma_h^2,
\\&\E\|\widetilde{h}_i^D(x_{\upsilon}^i,y_+)-\widetilde{h}_i^D(x_0^i,y_+)+\widetilde{h}^D(x)-\widetilde{h}^I(x)\|^2\leq D_h^2
\end{align*}
where the constants are given by $\sigma_h^2=\frac{\lambda(N+1)L_g^2M^2}{\mu}$ and $D_h^2=12M^2 + \frac{4\lambda(N+1)L_g^2M^2}{\mu}$.
\end{proposition}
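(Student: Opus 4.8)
The plan is to treat the two inequalities separately, establishing the first as a second-moment (hence variance) bound and then deriving the second as a short corollary. For the first bound, since $\Bar{h}_i^I(x)$ is the conditional mean $\E[\widetilde{h}_i^I(x)\mid x,y_+]$, I would use $\E\|\widetilde{h}_i^I(x)-\Bar{h}_i^I(x)\|^2\le \E\|\widetilde{h}_i^I(x)\|^2$, so it suffices to control the conditional second moment of $\widetilde{h}_i^I(x)=\nabla_x\nabla_yG_i(x,y^N;\chi_i)\,\widehat{\text{HessIV}}$. First I would peel off the Jacobian factor using the smoothness in Assumption~\ref{assum:lip}, which gives $\|\nabla_x\nabla_yG_i(x,y^N;\chi_i)\|\le L_g$ for every sample, reducing the task to bounding $\E\|\widehat{\text{HessIV}}\|^2$.

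The core estimate is the conditional norm of $\widehat{\text{HessIV}}$, which consists of the prefactor $\lambda(N+1)$, a product of $N-Q$ matrix factors of the form $I-\tfrac{\lambda}{|S|}\sum_{i\in S}\nabla_y^2G_i(x,y^t;u_{i,t})$, and the aggregated gradient $\tfrac{1}{|S|}\sum_{i\in S}\nabla_yF_i(x,y^Q;\xi_{i,Q})$. The crucial point is that each matrix factor contracts deterministically: by the per-sample strong convexity in Assumption~\ref{assum:muconvex} and the $L_g$-smoothness in Assumption~\ref{assum:lip}, the aggregated stochastic Hessian $\tfrac{1}{|S|}\sum_{i\in S}\nabla_y^2G_i(x,y^t;u_{i,t})$ has spectrum in $[\mu,L_g]$, so with $\lambda\le 1/L_g$ each factor has spectral norm at most $1-\lambda\mu$. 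Combined with $\|\nabla_yF_i\|\le M$ (Lipschitzness of $F_i$), this gives $\|\widehat{\text{HessIV}}\|\le \lambda(N+1)(1-\lambda\mu)^{N-Q}M$ for the random index $Q$. The key step is then to average over $Q\sim\mathrm{Unif}\{0,\dots,N\}$: the uniform weight $\tfrac{1}{N+1}$ cancels one power of $(N+1)$, and the geometric sum $\sum_{k=0}^{N}(1-\lambda\mu)^{2k}\le \tfrac{1}{1-(1-\lambda\mu)^2}\le \tfrac{1}{\lambda\mu}$ (using $\lambda\mu\le 1$, which follows from $\lambda\le 1/L_g$ and $\mu\le L_g$) yields $\E\|\widehat{\text{HessIV}}\|^2\le \tfrac{\lambda(N+1)M^2}{\mu}$. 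Multiplying by $L_g^2$ recovers $\sigma_h^2=\tfrac{\lambda(N+1)L_g^2M^2}{\mu}$.

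For the second inequality, I would first observe that the quantity inside the norm is exactly the local SVRG search direction $h_{i,\upsilon}=\widetilde{h}_i^D(x_\upsilon^i,y_+)-\widetilde{h}_i^D(x_0^i,y_+)+\widetilde{h}^D(x)-\widetilde{h}^I(x)$, a sum of four terms. Three of them are direct gradients of the form $\nabla_xF_i(\cdot,y_+;\cdot)$, each bounded by $M$ in norm by the $M$-Lipschitzness of $F_i$ (so $\|\widetilde{h}^D(x)\|\le M$ survives the averaging by Jensen), while the fourth is $\widetilde{h}^I(x)=\tfrac{1}{|S|}\sum_{i\in S}\widetilde{h}_i^I(x)$, whose conditional second moment is at most $\sigma_h^2$ by the first bound together with $\|\tfrac{1}{|S|}\sum_i a_i\|^2\le \tfrac{1}{|S|}\sum_i\|a_i\|^2$. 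Applying $\|\sum_{j=1}^4 a_j\|^2\le 4\sum_{j=1}^4\|a_j\|^2$ then gives $\E\|h_{i,\upsilon}\|^2\le 4(3M^2+\sigma_h^2)=12M^2+\tfrac{4\lambda(N+1)L_g^2M^2}{\mu}=D_h^2$.

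The main obstacle is the core estimate in the second paragraph: handling the product of stochastic, data-heterogeneous Hessian factors and exploiting the randomization over $Q$. The nontrivial observation is that the $(N+1)^2$ arising from squaring the prefactor $\lambda(N+1)$ collapses to a linear $(N+1)$ precisely because the uniform averaging over the starting index $Q$ turns the squared matrix product into a convergent geometric series; establishing the \emph{deterministic} per-factor contraction for the aggregated Hessian (rather than merely in expectation) is what makes this telescoping clean despite the heterogeneity and the stochastic sampling.
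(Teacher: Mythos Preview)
Your proposal is correct and follows essentially the same route as the paper: bound the variance by the second moment, peel off the Jacobian via $L_g$-smoothness, use the per-sample strong convexity and $\lambda\le 1/L_g$ to get the deterministic $(1-\lambda\mu)$-contraction of each aggregated Hessian factor, then average over $Q$ to collapse $(N+1)^2$ to $(N+1)$ via the geometric sum $\sum_k(1-\lambda\mu)^{2k}\le 1/(\lambda\mu)$; for the second bound, the paper likewise splits into four terms with the factor $4$, bounds the three direct terms by $M^2$ and the aggregated indirect term by $\sigma_h^2$ through Jensen. The only cosmetic difference is that the paper groups the first two $M^2$ contributions as $8M^2$ before adding the remaining $4M^2+4\sigma_h^2$, whereas you write $4(3M^2+\sigma_h^2)$ directly.
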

\Cref{propecevd} demonstrates that the varaince of our AggITD estimation is bounded. Based on the important bias and variance characterizations in Propositions~\ref{propsfrvcc}  and~\ref{propecevd}, we next provide the total convergence and complexity analysis for the proposed FBO-AggITD algorithm. 
\subsection{Convergence and Complexity  Analysis}  
We first provide a descent lemma on the total objective  $f(x)$. 
\begin{lemma}[Objective descent]\label{lemma:descentl}
Suppose Assumptions~\ref{assum:muconvex}-\ref{bounded_variance} hold. Let $y^{\ast}=\argmin_{y}g(x,y)$. Further, we set $\lambda\leq\min\{10, \frac{1}{L_g}\}$, $\alpha^i=\frac{\alpha}{\tau_i}$ with $\tau_i\geq1$ for some positive $\alpha$ and $\beta^i=\frac{\beta}{\tau_i}$, where $\beta\leq\min\{1,\lambda,\frac{1}{6L_{g}}\}$  $\forall i \in S$. We have
\begin{align}
&\E[f(x_+)]-\E[f(x)]\nonumber
\\&\leq-\frac{\alpha}{2}\E[\|\nabla f(x)\|^2]+4\alpha^2(\sigma_h^2+\sigma_f^2)L_f^\prime+2\alpha^2M^2L_f^\prime\nonumber
\\&-\frac{\alpha}{2}(1-4\alpha L_f^\prime)\E{\footnotesize\Big\|\frac{1}{m}\sum_{i=1}^{m}\frac{1}{\tau_i}\sum_{\upsilon=0}^{\tau_i-1}\bar{h}_i^D(x_{\upsilon}^i,y_+)-\bar{h}^I(x)\Big\|^2}\nonumber
\\&+\frac{3\alpha}{2}\Big[B^I(x,y)
+\frac{M_f^2}{m}\sum_{i=1}^{m}\frac{1}{\tau_i}\sum_{\upsilon=0}^{\tau_i-1}\E[\|x_{\upsilon}^i-x\|^2]\nonumber
\\&\qquad\quad+M_f^2\E[\|y_+-y^\ast\|^2]\Big]
\end{align}
where 
the estimation bias $B^I(x,y)$ is  defined in \Cref{propsfrvcc}, and the expected quantities $\Bar{h}^I(x)={\E}[\widetilde{h}^I(x)|x,y_+]$, $\Bar{h}_i^D(x_{\upsilon}^i,y_+)={\E}[\widetilde{h}_i^D(x_{\upsilon}^i,y_+)|x_\upsilon^i]$
\end{lemma}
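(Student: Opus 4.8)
The plan is to combine the $L_f^\prime$-smoothness of the total objective $f$ with a careful decomposition of the averaged update direction produced by \textbf{One-Round-Upper}, separating its conditional mean (which yields the descent and bias terms) from its fluctuation (which yields the variance terms via \Cref{propecevd}). Since $f$ is $L_f^\prime$-smooth, I would start from $\E[f(x_+)]\leq \E[f(x)]+\E\langle\nabla f(x),x_+-x\rangle+\frac{L_f^\prime}{2}\E\|x_+-x\|^2$. From \Cref{algorithm4} with $\alpha^i=\alpha/\tau_i$ and $|S|=m$, the increment telescopes to $x_+-x=-\alpha\,\bar h$ with $\bar h=\frac{1}{m}\sum_{i}\frac{1}{\tau_i}\sum_{\upsilon}h_{i,\upsilon}$. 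The first task is to identify the conditional mean of $\bar h$ given the inner iterates: because $h=\widetilde h^D(x)-\widetilde h^I(x)$ is fixed across local steps and the SVRG control variate $-\nabla_xF_i(x,y_+;\xi^i_\upsilon)$ has conditional mean $\nabla_xf_i(x,y_+)$, the anchor terms cancel against the direct part $\widetilde h^D(x)$, so that $\E[\bar h\mid\cdot]=G:=\frac{1}{m}\sum_i\frac{1}{\tau_i}\sum_\upsilon \bar h_i^D(x_\upsilon^i,y_+)-\bar h^I(x)$, precisely the vector appearing in the lemma.

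Next I would rewrite $\E\langle\nabla f(x),x_+-x\rangle=-\alpha\,\E\langle\nabla f(x),G\rangle$ and apply the polarization identity $-\langle a,b\rangle=\tfrac12(\|a-b\|^2-\|a\|^2-\|b\|^2)$ to obtain $-\tfrac{\alpha}{2}\E\|\nabla f(x)\|^2-\tfrac{\alpha}{2}\E\|G\|^2+\tfrac{\alpha}{2}\E\|\nabla f(x)-G\|^2$. To bound the approximation error $\|\nabla f(x)-G\|^2$, I would write $\nabla f(x)$ in the implicit form \cref{eq:implicitiHg} and split $\nabla f(x)-G$ into three groups: (i) the indirect-part bias between $\bar h^I(x)$ and the true Hessian-inverse-vector term, controlled by $B^I(x,y)$ from \Cref{propsfrvcc} (the $y^N$-versus-$y^\ast$ mismatch in the definition of $B^I$ being absorbed by group (iii)); (ii) the direct-part drift $\tfrac1m\sum_i\tfrac1{\tau_i}\sum_\upsilon(\nabla_xf_i(x,y^\ast)-\nabla_xf_i(x_\upsilon^i,y_+))$; and (iii) the lower-level error at $y_+$. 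A three-way Young's inequality together with the $M_f$-Lipschitzness of $\nabla_x f_i$ then produces the factor $\tfrac{3\alpha}{2}$ in front of $B^I(x,y)$, $\tfrac{M_f^2}{m}\sum_i\tfrac1{\tau_i}\sum_\upsilon\E\|x_\upsilon^i-x\|^2$, and $M_f^2\E\|y_+-y^\ast\|^2$.

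For the smoothness term I would use the conditional orthogonality of the mean and the fluctuation, $\E\|\bar h\|^2=\E\|G\|^2+\E\|\bar h-\E[\bar h\mid\cdot]\|^2$. The fluctuation collects the variance of the stochastic direct gradients (bounded by $\sigma_f^2$ and, via the $M$-Lipschitz bound $\|\nabla_xF_i\|\le M$, by $M^2$) and the variance of the HessIV-based indirect estimate (bounded by $\sigma_h^2$ through \Cref{propecevd}); after a Young-type splitting this is what yields $4\alpha^2(\sigma_h^2+\sigma_f^2)L_f^\prime+2\alpha^2M^2L_f^\prime$ once multiplied by $\tfrac{L_f^\prime\alpha^2}{2}$, while the residual $\|G\|^2$ contribution sharpens the coefficient into $-\tfrac{\alpha}{2}(1-4\alpha L_f^\prime)$. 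Collecting the inner-product decomposition, the approximation-error bound, and this second-moment bound gives the claimed inequality.

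The hardest part is the last step combined with the conditional-mean computation: the local steps within \textbf{One-Round-Upper} share the single aggregated vector $h$ while drawing fresh samples $\xi^i_\upsilon$ sequentially, so the summands $h_{i,\upsilon}$ are neither independent nor identically conditioned, and the SVRG control variate is applied \emph{only} to the direct part (the indirect part cannot be corrected, since global Hessian information is unavailable at the local iterates $x_\upsilon^i$). Disentangling these correlated error sources, while simultaneously keeping the client-drift term $\|x_\upsilon^i-x\|$ and the lower-level gap $\|y_+-y^\ast\|$ explicit so they can later be controlled in the convergence proof, is the crux of the argument.
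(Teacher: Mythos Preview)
Your overall architecture matches the paper's proof closely: descent lemma from $L_f^\prime$-smoothness, conditional mean $G=\frac{1}{m}\sum_i\frac{1}{\tau_i}\sum_\upsilon\bar h_i^D(x_\upsilon^i,y_+)-\bar h^I(x)$ for the inner-product term, polarization, and a three-way Young split on $\|G-\nabla f(x)\|^2$. Two points deserve correction.

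\textbf{The second-moment term.} Your invocation of ``conditional orthogonality'' $\E\|\bar h\|^2=\E\|G\|^2+\E\|\bar h-G\|^2$ is not justified here, and the paper does \emph{not} use it. The vector $G$ depends on the local iterates $\{x_\upsilon^i\}$, which themselves depend on the stochastic aggregate $\widetilde h(x)=\widetilde h^D(x)-\widetilde h^I(x)$ broadcast at the start of \textbf{One-Round-Upper}; consequently $G$ is correlated with the fluctuations $\widetilde h^I(x)-\bar h^I(x)$ and $\widetilde h^D(x)-\nabla_x f(x,y_+)$, and there is no single $\sigma$-algebra $\mathcal F$ for which $G=\E[\bar h\mid\mathcal F]$. (The inner-product step survives because $\nabla f(x)$ is $\sigma(x)$-measurable, so iterated conditioning on $x,y_+$ and then on $x_\upsilon^i$ is legitimate there.) The paper instead bounds $\E\|\bar h\|^2$ by two rounds of Young's inequality: first splitting off the SVRG correction $-\widetilde h_i^D(x_0^i,y_+)+\widetilde h^D(x)$ and bounding it crudely by $4M^2$ via $\|\nabla_x F_i\|\le M$, then decomposing $\widetilde h_i^D(x_\upsilon^i,y_+)-\widetilde h^I(x)$ into $G$ plus centered noise, yielding
\[
\E\|\bar h\|^2\;\le\;4\,\E\|G\|^2+8\sigma_f^2+8\sigma_h^2+4M^2.
\]
It is precisely this factor $4$ in front of $\E\|G\|^2$ that produces $-\tfrac{\alpha}{2}(1-4\alpha L_f^\prime)$; genuine orthogonality would give $-\tfrac{\alpha}{2}(1-\alpha L_f^\prime)$. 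Since you quote the paper's constant, you evidently want the Young route---replace the orthogonality sentence accordingly.

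\textbf{The three-way split.} The paper's decomposition is slightly cleaner than yours: (a) $\bar h_i^D(x_\upsilon^i,y_+)-\nabla_x f_i(x,y_+)$, a pure $x$-drift at the \emph{fixed} $y_+$, bounded by $M_f\|x_\upsilon^i-x\|$; (b) $\bar h^I(x)-\bar\nabla f^I(x,y_+)$, which is \emph{exactly} the quantity whose squared norm is $B^I$ (recall $B^I$ is defined relative to the true indirect gradient at $y^N=y_+$, so there is no ``$y^N$-versus-$y^\ast$ mismatch'' to absorb); and (c) $\bar\nabla f(x,y_+)-\nabla f(x)$, bounded by $M_f\|y_+-y^\ast\|$ via \Cref{lemma1}. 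Your group (ii) as written mixes the $x$- and $y$-drifts and the parenthetical about absorbing a mismatch is unnecessary; aligning with (a)--(c) makes the bookkeeping cleaner and matches the stated bound directly.
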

Note from \Cref{lemma:descentl} that the bound on the total objective descent contains three error terms including the FHC bias $B^I(x,y)$, which is handled by \Cref{propsfrvcc}, the lower-level estimation error $\mathbb{E}\|y_+-y^*\|^2$, which is handled by the descent lemma on the lower-level objective function $g(x,\cdot)$, and 
the upper-level client drift $\sum_{i=1}^{m}\frac{1}{\tau_i}\sum_{\upsilon=0}^{\tau_i-1}\E[\|x_{\upsilon}^i-x\|^2]$. Also note that the bias error $B^I(x,y)$ contains the lower-level initialization gap $\mathbb{E}\|y-y^*\|^2$, which is characterized by the following lemma.
\begin{lemma}[Lower-level initialization gap under warm start]\label{le:warm_start}
Suppose Assumptions~\ref{assum:muconvex}-\ref{bounded_variance} hold. Let $y^{\ast}=\argmin_{y}g(x,y)$ and  $y_{(x_+)}^{\ast}=\argmin_{y}g(x_+,y)$. Further, set $\alpha^i=\frac{\alpha}{\tau_i}$ with $\tau_i\geq1$ with some  $\alpha>0$, $\forall i \in S$. Then, we have 
\begin{align*}
\begin{split}
\E[\|&y_+-y_{(x_+)}^\ast\|^2]
\\\leq& b_1(\alpha)\E\Big[\Big\|\frac{1}{m}\sum_{i=1}^{m}\frac{1}{\tau_i}\sum_{\upsilon=0}^{\tau_i-1}\big(\Bar{h}_i^D(x_{v}^i,y_+)-\bar{h}^I(x)\big)\Big\|^2\Big]
\\&+b_2(\alpha)\E[\|y_+-y^\ast\|^2]+b_3(\alpha)(2\sigma_h^2+2\sigma_f^2+M^2)
\end{split}
\end{align*}
where the constants are given by 
$b_1(\alpha)=4L_y^2\alpha^2+\frac{L_y^2\alpha^2}{4\gamma}+\frac{2L_{yx}\alpha^2}{\eta},
b_2(\alpha)=1+4\gamma+\frac{\eta L_{yx}D_h^2\alpha^2}{2},
b_3(\alpha)=4\alpha^2L_y^2+\frac{2L_{yx}\alpha^2}{\eta}$ 
with a flexible parameter $\gamma>0$.
\end{lemma}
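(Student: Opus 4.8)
The plan is to bound the new initialization gap $\E\|y_+ - y_{(x_+)}^*\|^2$—which becomes the starting gap of the next outer iteration under the warm start $y_{k+1}=y_k^N$—by peeling it into the gap at the current outer point $x$ and the drift of the lower-level minimizer as $x$ moves to $x_+$. Throughout I write $y^* = y_{(x)}^*$, and I use that, since $\alpha^i = \alpha/\tau_i$, \Cref{algorithm4} yields the closed form
\begin{equation*}
x_+ - x = -\frac{\alpha}{m}\sum_{i=1}^m \frac{1}{\tau_i}\sum_{\upsilon=0}^{\tau_i-1} h_{i,\upsilon}.
\end{equation*}
First I would apply Young's inequality with the free parameter $\gamma>0$ to the cross term in the expansion of $\|y_+ - y_{(x_+)}^*\|^2 = \|(y_+ - y^*) + (y^* - y_{(x_+)}^*)\|^2$, giving
\begin{equation*}
\|y_+ - y_{(x_+)}^*\|^2 \le (1 + 4\gamma)\|y_+ - y^*\|^2 + \Big(1 + \tfrac{1}{4\gamma}\Big)\|y^* - y_{(x_+)}^*\|^2,
\end{equation*}
which already produces the leading $1+4\gamma$ of $b_2(\alpha)$ and reduces the task to the solution-map drift $\|y^* - y_{(x_+)}^*\|^2$.

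Second, I would control the drift $\|y_{(x)}^* - y_{(x_+)}^*\|^2$ in terms of $\|x_+ - x\|^2$. The crude route is the Lipschitz continuity of the minimizer map $x\mapsto y_{(x)}^*$, with constant $L_y$ (which follows from the strong convexity of \Cref{assum:muconvex} and the smoothness of \Cref{assum:lip}), giving $\|y_{(x)}^* - y_{(x_+)}^*\|^2 \le L_y^2\|x_+ - x\|^2$ and accounting for the $L_y^2$ pieces of $b_1(\alpha)$ and $b_3(\alpha)$. To obtain the additional $L_{yx}$ pieces and the second free parameter $\eta$, I would use a finer, second-order displacement estimate built from the optimality condition $\nabla_y g(x,y^*)=0$ and the $\rho$-smoothness of \Cref{high_lip}, in which a cross term is split by a second Young's inequality with parameter $\eta>0$; invoking the uniform second-moment bound $\E\|h_{i,\upsilon}\|^2 \le D_h^2$ from \Cref{propecevd} on the $x$-displacement, this separates a term proportional to $\tfrac{L_{yx}}{\eta}\|x_+ - x\|^2$ (feeding the $\tfrac{2L_{yx}\alpha^2}{\eta}$ pieces of $b_1$ and $b_3$) from a term proportional to $\eta L_{yx}D_h^2\alpha^2\|y_+ - y^*\|^2$ (the $\tfrac{\eta L_{yx}D_h^2\alpha^2}{2}$ piece of $b_2$).

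Third, I would substitute the closed form for $x_+-x$ and decompose the stochastic update direction $\tfrac{1}{m}\sum_i\tfrac{1}{\tau_i}\sum_\upsilon h_{i,\upsilon}$ into its conditional mean and a zero-mean fluctuation. The key observation is that, after the aggregation $\tfrac1m\sum_i$, the per-client drift cross terms $\nabla_x f(x,y_+) - \nabla_x f_i(x,y_+)$ cancel, so the conditional mean equals $\bar U := \tfrac{1}{m}\sum_i\tfrac{1}{\tau_i}\sum_\upsilon\big(\bar h_i^D(x_\upsilon^i, y_+) - \bar h^I(x)\big)$, which is exactly the quantity carried by $b_1(\alpha)$. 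Splitting $\E\|x_+-x\|^2$ accordingly, the fluctuation's second moment is controlled by the variance constants $\sigma_h^2$ (indirect part, \Cref{propecevd}), $\sigma_f^2$ (upper-level gradient variance, \Cref{bounded_variance}) and the bound $M$ on $\nabla_x F_i$ (\Cref{assum:lip}), assembling into $2\sigma_h^2 + 2\sigma_f^2 + M^2$ and hence the $b_3(\alpha)$ term. Collecting the three groups of terms—$\E\|\bar U\|^2$ with coefficient $b_1(\alpha)$, $\E\|y_+-y^*\|^2$ with coefficient $b_2(\alpha)$, and the variance constant with coefficient $b_3(\alpha)$—and taking total expectation gives the claim.

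The main obstacle is the mean/fluctuation decomposition in the third step: because the inner iterates $x_\upsilon^i$ are themselves random through the noisy local SVRG recursions of \Cref{algorithm4}, the conditioning must be arranged so that the fluctuations are genuinely zero-mean under a consistent filtration, while the per-client conditional means—biased by the lower-level heterogeneity—cancel only after averaging over clients. Disentangling this client drift while keeping the variance as tight as $2\sigma_h^2 + 2\sigma_f^2 + M^2$ is the delicate step; a secondary difficulty is setting up the second-order displacement estimate so that the $L_{yx}/\eta$ split is clean rather than collapsing to the crude $L_y^2\|x_+-x\|^2$ bound.
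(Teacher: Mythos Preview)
Your very first move---applying Young's inequality with parameter $\gamma$ to split $\|y_+-y_{(x_+)}^*\|^2$---is the gap. The paper keeps the \emph{exact} expansion
\[
\|y_+-y_{(x_+)}^*\|^2=\|y_+-y^*\|^2+\|y_{(x_+)}^*-y^*\|^2+2\langle y_+-y^*,\,y^*-y_{(x_+)}^*\rangle,
\]
and all of the $\gamma$-, $\eta$-, $L_{yx}$- and $D_h^2$-dependence in the stated constants comes from the cross term, not from $\|y^*-y_{(x_+)}^*\|^2$. Concretely, the paper writes $y^*-y_{(x_+)}^*=-\nabla y^*(x_+-x)-R$ with the Taylor remainder satisfying $\|R\|\le \tfrac{L_{yx}}{2}\|x_+-x\|^2$ (this is where $L_{yx}$ enters). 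The \emph{linear} piece $-\langle y_+-y^*,\nabla y^*(x_+-x)\rangle$ is handled by conditioning on $(x,y_+)$: since $y_+-y^*$ is then measurable and $x_+-x$ is linear in the stochastic update, the zero-mean fluctuations drop out and only $\bar U$ survives before Cauchy--Schwarz and Young with $\gamma$ are applied. This is precisely why $\tfrac{L_y^2\alpha^2}{4\gamma}$ sits in $b_1$ but has \emph{no} counterpart in $b_3$. The \emph{remainder} piece $-\langle y_+-y^*,R\rangle\le \tfrac{L_{yx}}{2}\|y_+-y^*\|\,\|x_+-x\|^2$ is where Young with $\eta$ is used, splitting into $\tfrac{\eta L_{yx}}{4}\|y_+-y^*\|^2\|x_+-x\|^2$ (bounded via the conditional second-moment $D_h^2$, yielding the $\tfrac{\eta L_{yx}D_h^2\alpha^2}{2}$ piece of $b_2$) and $\tfrac{L_{yx}}{4\eta}\|x_+-x\|^2$ (yielding the $\tfrac{2L_{yx}\alpha^2}{\eta}$ pieces of $b_1$ and $b_3$).

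Once you apply Young upfront, both of these mechanisms are destroyed. First, $\|y^*-y_{(x_+)}^*\|^2$ is purely quadratic in $x_+-x$, so any bound through $\E\|x_+-x\|^2$ puts the \emph{same} multiplier on $\E\|\bar U\|^2$ and on the variance block $2\sigma_h^2+2\sigma_f^2+M^2$; you cannot reproduce the asymmetry $b_1\ne b_3$. Second, after your step 1 there is no factor of $\|y_+-y^*\|$ left to couple with $\|x_+-x\|^2$, so there is no route to the $\tfrac{\eta L_{yx}D_h^2\alpha^2}{2}\|y_+-y^*\|^2$ contribution in $b_2$; the ``cross term'' you invoke in step 2 no longer exists. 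Your step 3 (mean/fluctuation decomposition of $x_+-x$) matches the paper's use of \cref{lemma4secondterm}, but it must be fed into the exact-expansion structure above, not into a Young-upfront split.
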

As shown in the above \Cref{le:warm_start}, the lower-level initialization gap contains a hypergradient estimate norm $\mathcal{O}(\alpha^2)\E\big\|\frac{1}{m}\sum_{i=1}^{m}\frac{1}{\tau_i}\sum_{\upsilon=0}^{\tau_i-1}\big(\Bar{h}_i^D(x_{v}^i,y_+)-\bar{h}^I(x)\big)\big\|^2$, which is dominated by the same hypergradient norm with the factor  $\Theta(-\alpha)$ in \Cref{lemma:descentl} for the stepsize $\alpha$ small enough. Then, the remaining step is to upper bound the upper-level client drift $\mathbb{E}\|x_\upsilon^i-x\|^2$.
\begin{lemma}[Upper client drift]\label{client_dripfscs}
Suppose Assumptions~ \ref{assum:muconvex}-\ref{bounded_variance} are satisfied. Set $\lambda\leq\min\{10, \frac{1}{L_g}\}$, $\alpha^i = \frac{\alpha}{\tau_i}\; and\; \beta^i = \frac{\beta}{\tau_i},\tau_i\geq1\ where\ \alpha\leq\frac{1}{324M_f^2+6M_f}\leq\frac{1}{6M_f},\; \beta\leq\min\{1,\lambda, \frac{1}{6L_{g}}\}$  $\forall i \in S$. Recall the definitions of $y^{\ast}=\argmin_{y}g(x,y)$, $\Bar{h}(x)={\E}[\widetilde{h}(x)|x,y_+]$. Then, we have
\begin{align*}
\E[\|x_{\upsilon}^i-x\|^2]\leq18&\tau_i^2(\alpha^i)^2\Big[3M_f^2\E[\|y_+-y^\ast\|^2]
\\+3\E[\|&\nabla f(x)\|^2]+B^I(x,y)+3\sigma_h^2+6\sigma_f^2\Big]
\end{align*}
where the bias $B^I(x,y)$ is  defined in \Cref{propsfrvcc}.
\end{lemma}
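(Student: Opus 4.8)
The plan is to bound the deviation of each local iterate from its anchor $x_0^i=x$ by unrolling the \textbf{One-Round-Upper} recursion in \Cref{algorithm4} and then controlling the accumulated update directions. Since $x_0^i=x$ and $x_{\upsilon+1}^i=x_\upsilon^i-\alpha^i h_{i,\upsilon}$, telescoping gives $x_\upsilon^i-x=-\alpha^i\sum_{j=0}^{\upsilon-1}h_{i,j}$. Applying the Cauchy--Schwarz inequality $\big\|\sum_{j=0}^{\upsilon-1}a_j\big\|^2\le \upsilon\sum_{j=0}^{\upsilon-1}\|a_j\|^2$ together with $\upsilon\le\tau_i$, I would first reduce the target to
\[
\E[\|x_\upsilon^i-x\|^2]\le (\alpha^i)^2\tau_i\sum_{j=0}^{\tau_i-1}\E[\|h_{i,j}\|^2],
\]
so that the whole lemma rests on a per-step bound for $\E[\|h_{i,j}\|^2]$.

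Next I would split $h_{i,\upsilon}$ into the aggregated hypergradient estimate plus an SVRG correction, writing $h_{i,\upsilon}=\widetilde h(x)+\big(\nabla_xF_i(x_\upsilon^i,y_+;\xi_\upsilon^i)-\nabla_xF_i(x,y_+;\xi_\upsilon^i)\big)$ with $\widetilde h(x)=\widetilde h^D(x)-\widetilde h^I(x)$. Young's inequality then yields $\E[\|h_{i,\upsilon}\|^2]\le 2\E[\|\widetilde h(x)\|^2]+2L_f^2\E[\|x_\upsilon^i-x\|^2]$, where the second term uses the $L_f$-Lipschitz continuity of $\nabla_xF_i$ from \Cref{assum:lip} (the arguments $y_+$ and $\xi_\upsilon^i$ are shared). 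For the first term I would perform a bias--variance decomposition around $\bar h(x)=\E[\widetilde h(x)\,|\,x,y_+]$ and the true hypergradient $\nabla f(x)$: the centered variance $\E[\|\widetilde h(x)-\bar h(x)\|^2]$ is controlled by $\sigma_h^2$ for the indirect part via \Cref{propecevd} and by $\sigma_f^2$ for the direct part via \Cref{bounded_variance}, while the mean term is bounded using the optimality gap $M_f^2\E[\|y_+-y^\ast\|^2]$ (sensitivity of the expected hypergradient in $y$) together with the HessIV bias $B^I(x,y)$ from \Cref{propsfrvcc}. Tracking constants so as to match the stated coefficients gives $\E[\|\widetilde h(x)\|^2]$ bounded by $3\E[\|\nabla f(x)\|^2]+3M_f^2\E[\|y_+-y^\ast\|^2]+B^I(x,y)+3\sigma_h^2+6\sigma_f^2$, which is exactly the bracket in the statement.

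The main obstacle is that the correction bound reintroduces $\E[\|x_\upsilon^i-x\|^2]$ on the right-hand side, so the drift bound is self-referential. To close it I would substitute the per-step estimate back into the Cauchy--Schwarz inequality, sum over $\upsilon=0,\dots,\tau_i-1$, and observe that the coefficient multiplying $\sum_\upsilon\E[\|x_\upsilon^i-x\|^2]$ equals $2(\alpha^i)^2L_f^2\tau_i^2=2\alpha^2L_f^2$ after using $\alpha^i=\alpha/\tau_i$. The stepsize restriction $\alpha\le\frac{1}{324M_f^2+6M_f}$ (which, with $L_f\le M_f$, forces $2\alpha^2L_f^2$ well below $1$) then lets me absorb that term into the left-hand side. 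Resolving the resulting linear inequality for the summed drift and back-substituting into the single-index bound produces the clean factor $18\tau_i^2(\alpha^i)^2=18\alpha^2$ multiplying the bracket. The delicate points will be keeping the geometric summation constants consistent so that the residual higher-order-in-$\alpha$ terms are dominated and collapse into the stated constant $18$, and correctly attributing the two distinct variance sources ($\sigma_h^2$ from the indirect estimate and $\sigma_f^2$ from the direct estimate) to reproduce the exact coefficients inside the bracket.
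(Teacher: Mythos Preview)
Your approach is valid and reaches the same conclusion, but it takes a somewhat different route from the paper. The paper does not telescope and apply Cauchy--Schwarz to the whole sum; instead it writes $h_{i,\upsilon}=v_\upsilon^i+\omega_\upsilon^i+z_\upsilon^i$ (bias of $\bar h$, centered noise, and a drift-plus-remainder term), applies the weighted Young inequality $\|a+b\|^2\le(1+\tfrac{1}{2\tau_i-1})\|a\|^2+2\tau_i\|b\|^2$ to obtain the one-step recursion
\[
\E\|x_{\upsilon+1}^i-x\|^2\le\Big(1+\tfrac{1}{\tau_i-1}\Big)\E\|x_\upsilon^i-x\|^2+6\tau_i(\alpha^i)^2[\text{bracket}],
\]
and then telescopes using $\sum_{j<\upsilon}(1+\tfrac{1}{\tau_i-1})^j<3\tau_i$, which produces the factor $6\cdot 3=18$. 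Your Cauchy--Schwarz-then-solve-a-linear-inequality argument is more direct and in fact yields a tighter prefactor (roughly $\tfrac{2}{1-2\alpha^2L_f^2}\le 18$ under the stepsize condition). One caveat: your two-way split $h_{i,\upsilon}=\widetilde h(x)+\text{SVRG correction}$ followed by a further three/two-way split of $\E\|\widetilde h(x)\|^2$ will not reproduce the paper's exact coefficients $(3,3,1,3,6)$ inside the bracket---you get $6$'s in several places instead---so the sentence ``tracking constants so as to match the stated coefficients'' is optimistic. The paper's three-way split of $h_{i,\upsilon}$ itself is what isolates $B^I$ with coefficient $1$ and $\sigma_h^2$ with coefficient $3$. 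This is only a cosmetic discrepancy; the resulting bound is of the same order and suffices for the downstream Lyapunov argument.
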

It can be seen from \Cref{client_dripfscs} that the upper-level client drift is bounded by the lower-level estimation error $\mathbb{E}\|y_+-y^*\|^2$,  the total gradient norm $\mathbb{E}\|\nabla f(x)\|^2$  and the hypergradient estimation bias $B^I(x,y)$, which can be addressed by the descent lemmas on $y$ and $x$ (i.e., \Cref{lemma:descentl}) and \Cref{propsfrvcc} for the stepsize $\alpha^i$ small enough. By combining the above lemmas, we next provide the general convergence analysis.

\begin{figure*}[ht] 
\centering 
\includegraphics[width=4.5cm]{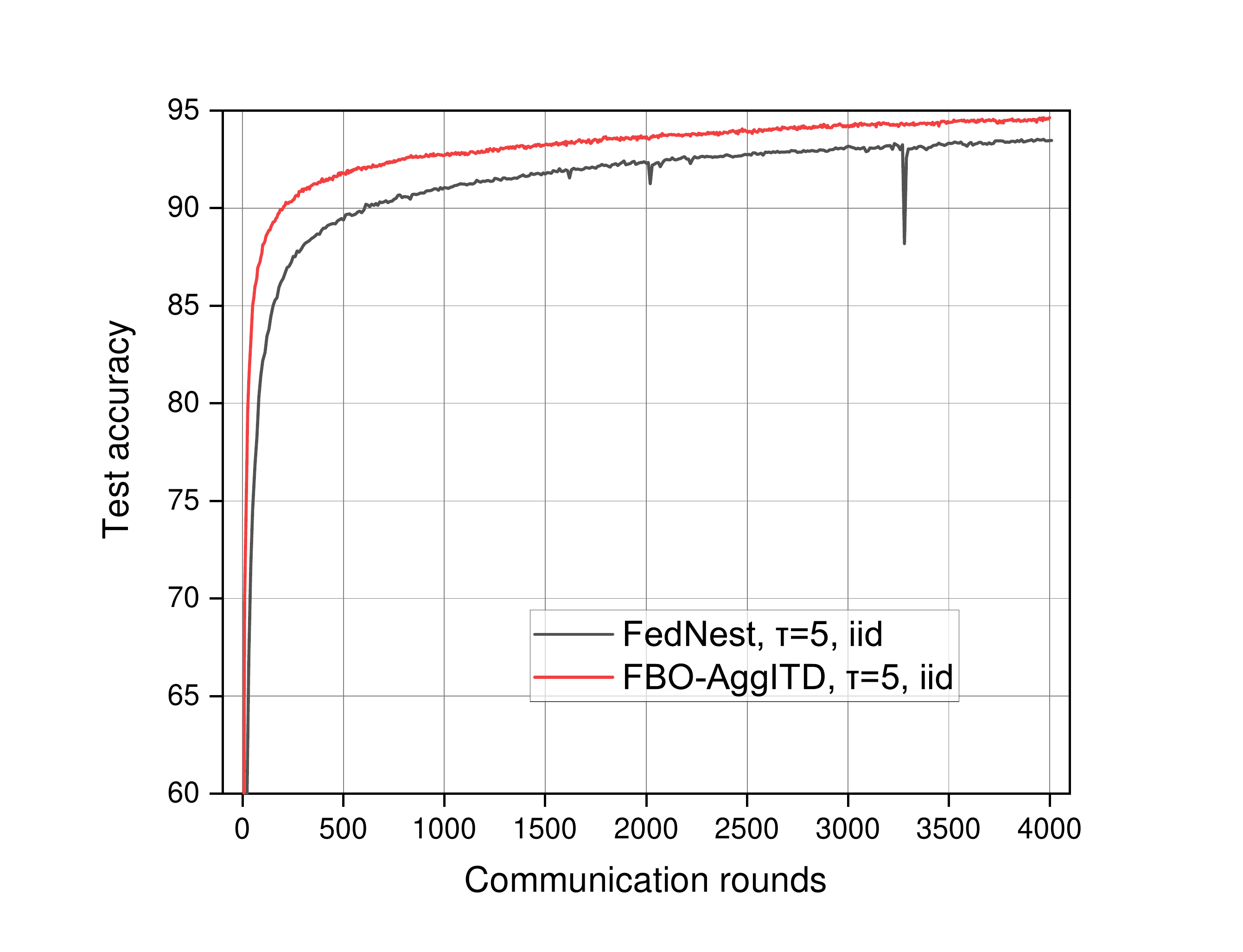}
\hspace{-0.8cm}
\includegraphics[width=4.5cm]{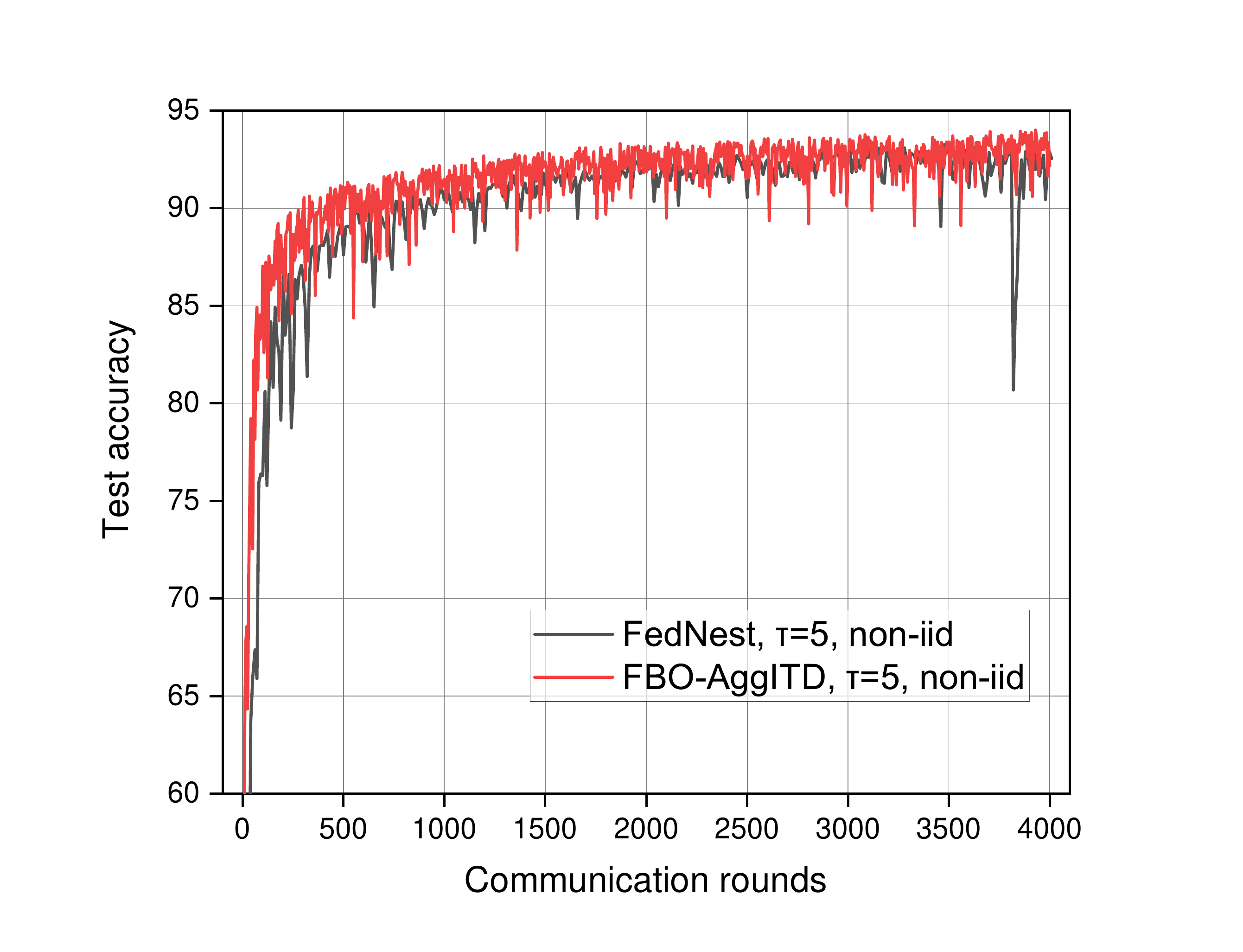}
\hspace{-0.8cm}
\includegraphics[width=4.5cm]{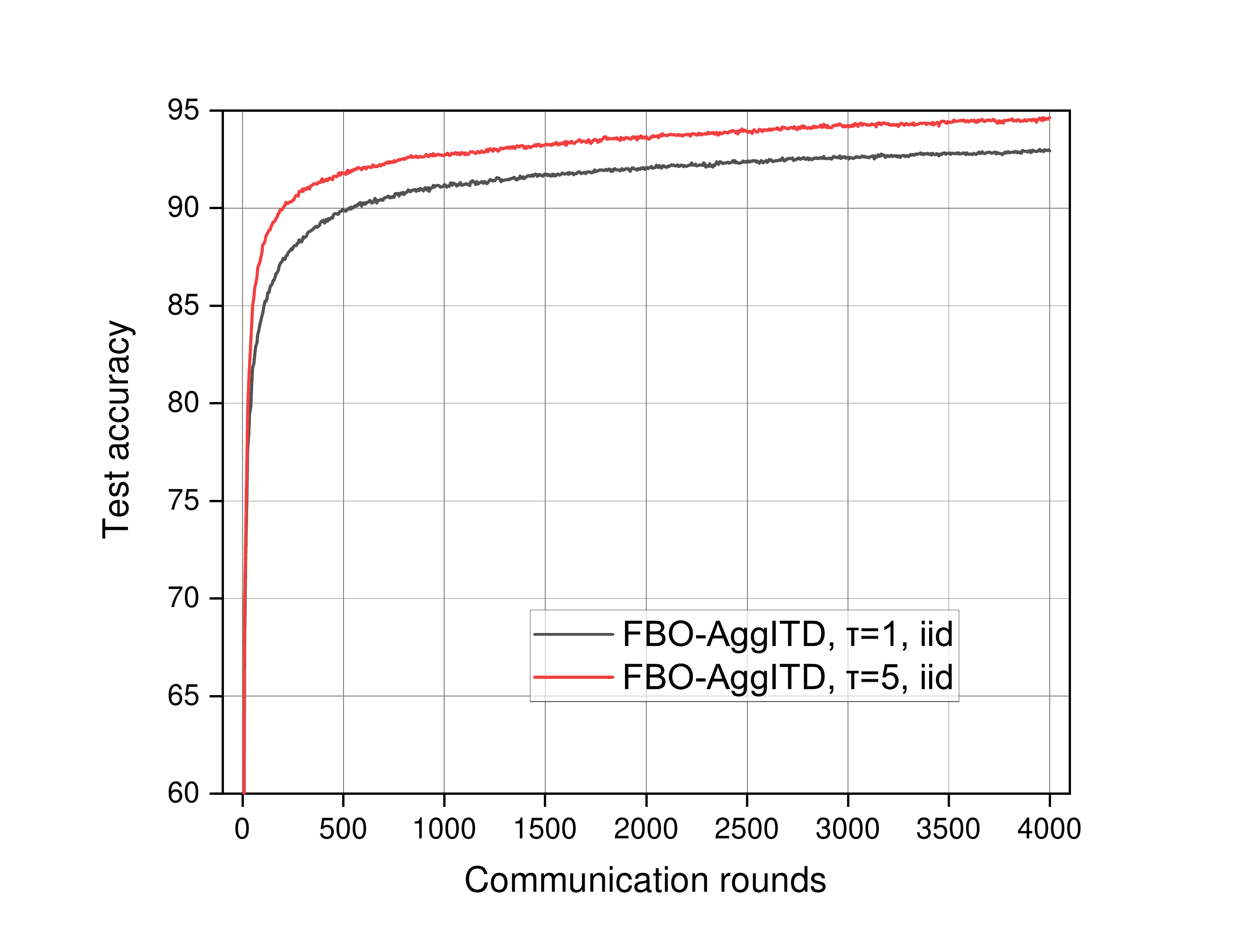}
\hspace{-0.8cm}
\includegraphics[width=4.5cm]{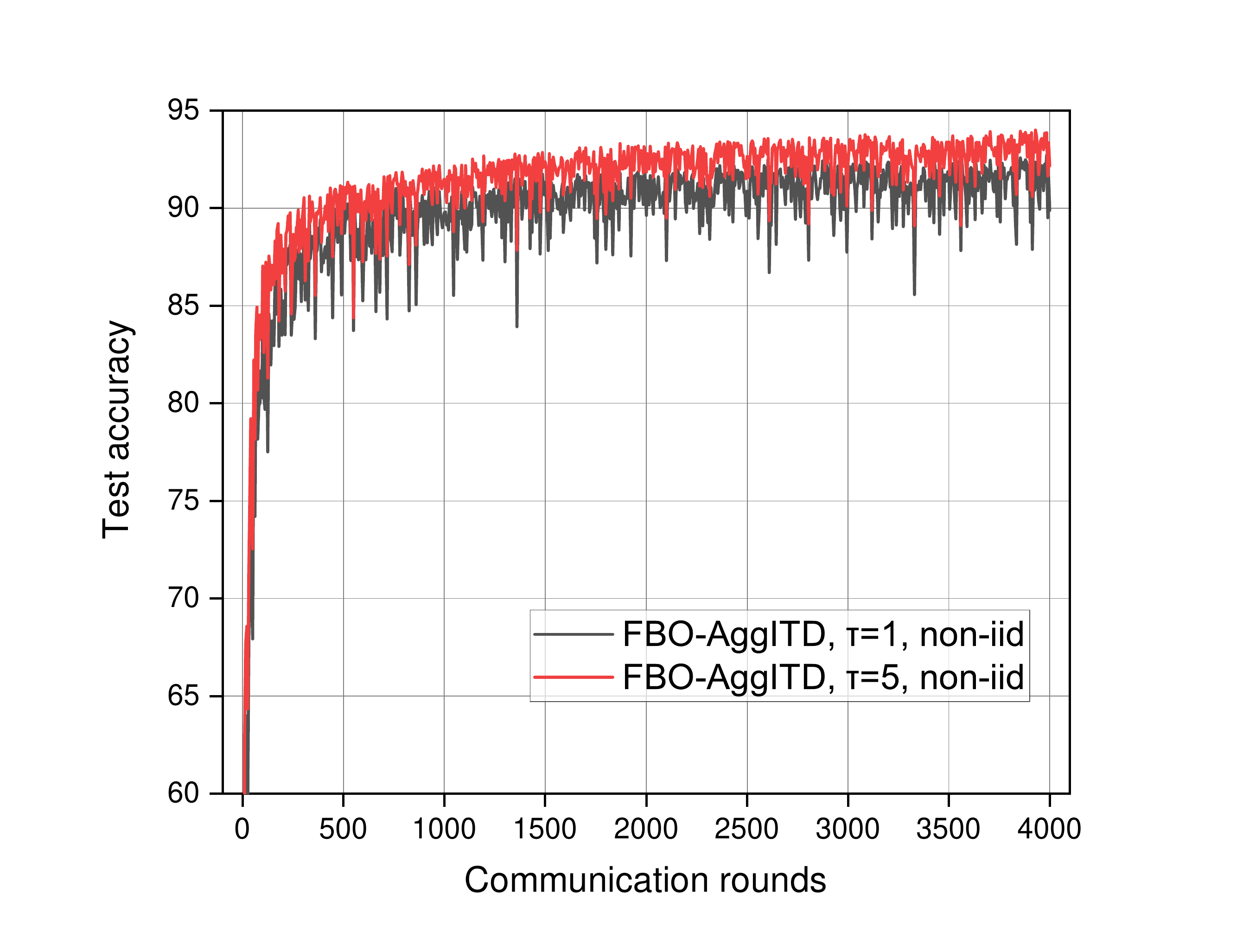}
\hspace{-0.8cm}
\vspace{-0.2cm}
\caption{Hyper-representation on MNIST dataset with a 2-layer MLP with SVRG-type optimizer. Left two plots: comparison of FBO-AggITD and FedNest~\citep{tarzanagh2022fednest} in the i.i.d.~and non-i.i.d.~cases. Right two plots: the impact of the number $\tau$ of local update steps on FBO-AggITD.} 
\label{Fig.main1} 
\end{figure*}

\begin{table*}[ht]
\centering
\small
\begin{tabular}{cclccclcl}
\hline
Algorithm                & \multicolumn{2}{c}{Comm\_rounds/Outer\_itr}            & Data                     & Outer\_ep & \multicolumn{2}{c}{Comm\_rounds (90\%)} & \multicolumn{2}{c}{Final Accuracy} \\ \hline
\multirow{4}{*}{FedNest} & \multicolumn{2}{c}{\multirow{4}{*}{2N+T+3}} & \multirow{2}{*}{IID}& $\tau$=1& \multicolumn{2}{c}{1630}     & \multicolumn{2}{c}{91.68\%}  \\
 & \multicolumn{2}{c}{}                        &                          & $\tau$=5       & \multicolumn{2}{c}{610}                 & \multicolumn{2}{c}{93.48\%}        \\ \cline{4-9}  & \multicolumn{2}{c}{}                        & \multirow{2}{*}{NON-IID} & $\tau$=1       & \multicolumn{2}{c}{1380}                & \multicolumn{2}{c}{91.46\%}        \\
   & \multicolumn{2}{c}{} &  & $\tau$=5       & \multicolumn{2}{c}{760}                 & \multicolumn{2}{c}{92.87\%}        \\ \hline
\multirow{4}{*}{FBO-AggITD} & \multicolumn{2}{c}{\multirow{4}{*}{2N+3}}   & \multirow{2}{*}{IID}     & $\tau$=1       & \multicolumn{2}{c}{530}                 & \multicolumn{2}{c}{92.94\%}        \\
  & \multicolumn{2}{c}{}                        &                          & $\tau$=5       & \multicolumn{2}{c}{195}                 & \multicolumn{2}{c}{94.61\%}        \\ \cline{4-9} 
                         & \multicolumn{2}{c}{}                        & \multirow{2}{*}{NON-IID} & $\tau$=1       & \multicolumn{2}{c}{520}                 & \multicolumn{2}{c}{92.67\%}        \\
                         & \multicolumn{2}{c}{}                        &                          & $\tau$=5       & \multicolumn{2}{c}{305}                 & \multicolumn{2}{c}{93.88\%}        \\ \hline
\end{tabular}
\caption{Quantitative comparison between FBO-AggITD and FedNest.} 
\label{Table1}
\vspace{-0.25cm}
\end{table*}

\begin{theorem}\label{Theorem2-mainB}
Suppose Assumptions~\ref{assum:muconvex}-\ref{bounded_variance} are satisfied.   
Set $\lambda\leq\min\{10, \frac{1}{L_g}\}$, $\alpha_k^i=\frac{\alpha_k}{\tau_i}$ and $\beta_k^i=\frac{\beta_k}{\tau_i}$ for  $i \in S$. Choose parameters such that 
$\alpha_k=\min\{\bar{\alpha}_1,\bar{\alpha}_2,\bar{\alpha}_3, \frac{\bar{\alpha}}{\sqrt{K}}\},
\beta_k\in\big[\max\big\{\frac{\bar{\beta}\alpha_k}{N},\frac{\lambda}{10}\big\},\min\big\{1,\lambda,\frac{1}{6L_g}\big\}\big]$, 
where $\bar{\alpha}_1, \bar{\alpha}_2, \bar{\alpha}_3, \bar{\alpha}$ and $ \bar{\beta}$ are constants independent of $K$, whose specific forms are given in \Cref{proofoftheorem2}. Then, the outputs of the proposed FBO-AggITD algorithms satisfy 
\begin{align*}
\frac{1}{K}\sum_{k=0}^{K-1}&\E[\|\nabla f(x_k)\|^2]=\mathcal{O}\bigg(\frac{1}{\min\{\bar{\alpha}_1,\bar{\alpha}_2,\bar{\alpha}_3\}{K}}+\frac{1}{\bar\alpha\sqrt{K}}
\\&+\frac{\bar{\alpha}\max\{c_0,c_1\sigma_h^2,c_2,c_3\}}{\sqrt{K}}+(1-\lambda\mu)^{2N}\bigg),
\end{align*}
where $c_0, c_1, c_2$, and $c_3$ are positive constants independent of K, whose complete forms are given in~\Cref{proofoftheorem2}.
\end{theorem}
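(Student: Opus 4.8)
The plan is to fold the four structural lemmas into a single potential-function descent and then telescope. First I would start from the one-step objective descent in \Cref{lemma:descentl} and remove the auxiliary quantities on its right-hand side. The upper-level client drift $\frac{1}{m}\sum_i\frac{1}{\tau_i}\sum_\upsilon\E[\|x_\upsilon^i-x\|^2]$ is replaced using \Cref{client_dripfscs}; since that bound carries a factor $\tau_i^2(\alpha^i)^2=\mathcal{O}(\alpha^2)$ and enters the descent multiplied by $\frac{3\alpha}{2}M_f^2$, its contribution is $\mathcal{O}(\alpha^3)$. Hence for $\alpha$ below a threshold $\bar\alpha_1$, the $\E[\|\nabla f(x)\|^2]$ and hypergradient-norm pieces it generates are absorbed into the dominant $-\frac{\alpha}{2}\E[\|\nabla f(x)\|^2]$ and into the nonpositive term $-\frac{\alpha}{2}(1-4\alpha L_f^\prime)\E\|\cdots\|^2$. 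This yields a clean one-step bound of the form $\E f(x_{k+1})-\E f(x_k)\leq -c\alpha\,\E\|\nabla f(x_k)\|^2+C_1\alpha B^I_k+C_2\alpha M_f^2\,\E\|y_{k+1}-y^*_{(x_k)}\|^2+\mathcal{O}(\alpha^2)(\sigma_h^2+\sigma_f^2+M^2)$.

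Next I would substitute the bias estimate from \Cref{propsfrvcc} for $B^I_k$, which replaces it by the lower-level initialization gap $\Delta_k:=\E\|y_k-y^*_{(x_k)}\|^2$ with a coefficient decaying like $(1-\frac{\beta\mu}{2})^N$, together with the $(1-\lambda\mu)^{2N+2}$ term and the client-drift residuals $\mathcal{O}(\lambda^2\beta^2)$ and $\mathcal{O}(\lambda\beta^2)$. The remaining coupling is then between $f$ and the gap $\Delta_k$. To close the recursion on $\Delta_k$ I would combine \Cref{le:warm_start} with a standard contraction bound on the lower-level residual, $\E\|y_{k+1}-y^*_{(x_k)}\|^2\leq(1-\Theta(\beta))\Delta_k+\mathcal{O}(\beta)(\cdots)$, which under $\mu$-strong convexity (\Cref{assum:muconvex}) follows from the SVRG-type One-Round-Lower updates. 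This gives $\Delta_{k+1}\leq b_2(\alpha)(1-\Theta(\beta))\Delta_k+b_1(\alpha)\,\E\|\cdots\|^2+b_3(\alpha)(2\sigma_h^2+2\sigma_f^2+M^2)$, where the flexible parameter $\gamma$ in $b_2(\alpha)=1+4\gamma+\cdots$ is chosen small enough that $b_2(\alpha)(1-\Theta(\beta))<1$ stays a genuine contraction.

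The central step is to form the Lyapunov function $V_k=\E f(x_k)+D\,\Delta_k$ for a suitably large constant $D$, and add $D$ times the $\Delta_k$ recursion to the descent. I would then verify two balances: (i) the $\Delta_k$ coefficient injected by the bias (of order $\alpha$ times a factor decaying in $N$) is dominated by the net contraction $-D\big(1-b_2(\alpha)(1-\Theta(\beta))\big)\Delta_k$; and (ii) the hypergradient-norm term $D\,b_1(\alpha)=\mathcal{O}(\alpha^2)$ brought in by \Cref{le:warm_start} is dominated by the nonpositive hypergradient-norm term of order $\Theta(\alpha)$ already present in \Cref{lemma:descentl}, which forces $\alpha$ below the thresholds $\bar\alpha_2,\bar\alpha_3$. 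Choosing $\beta_k$ in the prescribed window $[\max\{\bar\beta\alpha_k/N,\lambda/10\},\min\{1,\lambda,1/(6L_g)\}]$ ties $\beta_k$ to $\alpha_k/N$, so the client-drift residuals scale like $\alpha^2$ rather than $\alpha$ and enter the final bound only at order $\alpha$. Telescoping $V_k$ over $k=0,\dots,K-1$, dividing by $\alpha K$, and bounding $V_0-V_K$ by a constant then yields $\frac1K\sum_k\E\|\nabla f(x_k)\|^2\lesssim\frac{V_0}{\alpha K}+\alpha\max\{c_0,c_1\sigma_h^2,c_2,c_3\}+(1-\lambda\mu)^{2N}$. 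Plugging in $\alpha_k=\min\{\bar\alpha_1,\bar\alpha_2,\bar\alpha_3,\bar\alpha/\sqrt K\}$ splits $\frac1{\alpha K}$ into $\frac{1}{\min\{\bar\alpha_1,\bar\alpha_2,\bar\alpha_3\}K}+\frac1{\bar\alpha\sqrt K}$ and turns $\alpha\max\{\cdots\}$ into $\frac{\bar\alpha\max\{\cdots\}}{\sqrt K}$, matching the stated rate.

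The hardest part, I expect, is the simultaneous coefficient bookkeeping in the Lyapunov step: the bias $B^I$, the warm-start expansion $b_2(\alpha)$, and the lower-level contraction all feed the same $\Delta_k$ channel, while \Cref{le:warm_start} feeds the hypergradient-norm channel back into $\Delta_{k+1}$. Establishing that a single family of thresholds on $\alpha$ together with the $\beta_k$-window makes all of these couplings net-contractive, without the $N$-dependent factors $\alpha_1(N),\alpha_2(N),\alpha_3(N)$ in \Cref{propsfrvcc} inflating the constants, is the delicate accounting to which the rest of the argument reduces.
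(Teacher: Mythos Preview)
Your proposal is correct and follows essentially the same route as the paper: both form the Lyapunov function $V_k=\E f(x_k)+D\,\E\|y_k-y^*_{(x_k)}\|^2$ (the paper fixes $D=M_f/L_y$ and $\gamma=M_fL_y\alpha_k$), combine \Cref{lemma:descentl}, \Cref{propsfrvcc}, \Cref{le:warm_start}, \Cref{client_dripfscs}, and the $N$-step inner contraction $(1-\beta\mu/2)^N$, then verify the net $\Delta_k$ coefficient is nonpositive via the $\beta_k\ge\bar\beta\alpha_k/N$ window before telescoping. The only imprecision is that the inner residual contracts as $(1-\beta\mu/2)^N$, not $(1-\Theta(\beta))$; this is what makes the coupling $\beta_k\gtrsim\alpha_k/N$ the right scaling, and you already use it downstream.
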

By specifying the parameters $N$ and $\bar\alpha$ properly, we obtain the following complexity results. 

\begin{corollary}\label{corollary2}
Under the same setting as in \Cref{Theorem2-mainB}, if we choose $N=\mathcal{O}(\kappa_g)$, $\bar{\alpha}=\mathcal{O}(\kappa_g^{-4})$, then we have 
\begin{align*}
\frac{1}{K}\sum_{k=0}^{K-1}\E[\|\nabla f(x_k)\|^2]=\mathcal{O}(\frac{\kappa_g^4}{K}+\frac{\kappa_g^{4}}{\sqrt{K}})
\end{align*}
To achieve an $\epsilon$-accurate stationary point, 
the total number of samples required by FBO-AggITD is $\mathcal{O}(\kappa_g^9\epsilon^{-2})$.
\end{corollary}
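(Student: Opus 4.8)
The plan is to specialize the general bound in \Cref{Theorem2-mainB} to the prescribed choices $N=\mathcal{O}(\kappa_g)$ and $\bar\alpha=\mathcal{O}(\kappa_g^{-4})$, simplify the resulting rate, and then convert it into a sample count. First I would trace the $\kappa_g$-dependence of every problem-dependent constant appearing in the theorem. The key bookkeeping facts are that, under $\lambda=\Theta(1/L_g)$ and $N=\mathcal{O}(\kappa_g)$, the variance constant from \Cref{propecevd} satisfies $\sigma_h^2=\frac{\lambda(N+1)L_g^2M^2}{\mu}=\mathcal{O}(\kappa_g^2)$, and likewise $D_h^2=\mathcal{O}(\kappa_g^2)$. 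Feeding these into $c_0,c_1,c_2,c_3$ together with the smoothness parameters of Assumptions~\ref{assum:lip}--\ref{high_lip} (for which $L_y,M_f=\mathcal{O}(\kappa_g)$ in the standard bilevel bookkeeping), I would verify that $\max\{c_0,c_1\sigma_h^2,c_2,c_3\}=\mathcal{O}(\kappa_g^8)$. In parallel I would lower-bound the admissible stepsize, establishing $1/\min\{\bar\alpha_1,\bar\alpha_2,\bar\alpha_3\}=\mathcal{O}(\kappa_g^4)$ and checking that $\bar\alpha=\Theta(\kappa_g^{-4})$ meets all three upper thresholds $\bar\alpha_j$.

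With these scalings the balance is transparent. The term $1/(\bar\alpha\sqrt K)$ contributes $\mathcal{O}(\kappa_g^4/\sqrt K)$, while $\bar\alpha\max\{c_0,c_1\sigma_h^2,c_2,c_3\}/\sqrt K=\mathcal{O}(\kappa_g^{-4}\cdot\kappa_g^{8}/\sqrt K)=\mathcal{O}(\kappa_g^4/\sqrt K)$, so $\bar\alpha=\Theta(\kappa_g^{-4})$ is exactly the choice that equalizes the two stochastic terms; the $1/(\min_j\bar\alpha_j\,K)$ term then contributes the $\mathcal{O}(\kappa_g^4/K)$ piece. The only remaining summand is the deterministic bias $(1-\lambda\mu)^{2N}$; with $\lambda\mu=\Theta(1/\kappa_g)$ and $N=\widetilde{\mathcal{O}}(\kappa_g)$ (the $\log(1/\epsilon)$ factor absorbed into $\widetilde{\mathcal{O}}$), this is $\mathcal{O}(\epsilon)$ and is dominated by the other terms. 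Collecting everything gives $\frac{1}{K}\sum_{k=0}^{K-1}\E[\|\nabla f(x_k)\|^2]=\mathcal{O}(\kappa_g^4/K+\kappa_g^4/\sqrt K)$, the stated rate.

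For the sample complexity I would first solve for the outer-iteration count: forcing the dominant $\mathcal{O}(\kappa_g^4/\sqrt K)$ term below $\epsilon$ yields $K=\mathcal{O}(\kappa_g^8\epsilon^{-2})$, after which the $\kappa_g^4/K$ term is also below $\epsilon$. Next I would count samples per outer iteration: the \textbf{AggITD} sub-procedure runs $N+1$ inner rounds, each drawing $\mathcal{O}(1)$ stochastic gradients and Hessian-vector products per client (with the number of clients and the bounded local-step counts $\tau_i$ absorbed as constants), so each outer iteration consumes $\mathcal{O}(N)=\widetilde{\mathcal{O}}(\kappa_g)$ samples. Multiplying, the total sample complexity is $\widetilde{\mathcal{O}}(\kappa_g)\cdot\mathcal{O}(\kappa_g^8\epsilon^{-2})=\widetilde{\mathcal{O}}(\kappa_g^9\epsilon^{-2})$, which matches the claim (the hidden logarithm coming from $N$).

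The main obstacle I anticipate is the first step, namely faithfully tracking the $\kappa_g$-exponents through the nested constants $c_0,c_1,c_2,c_3$ and $\bar\alpha_1,\bar\alpha_2,\bar\alpha_3$, which are assembled from \Cref{lemma:descentl}, \Cref{le:warm_start} and \Cref{client_dripfscs}. In particular, \Cref{le:warm_start} couples the lower-level initialization gap back to the hypergradient-norm term through $b_1(\alpha)=\mathcal{O}(\alpha^2)$, and one must select the free parameters $\gamma,\eta$ so that this feedback is absorbed into the $-\frac{\alpha}{2}(1-4\alpha L_f^\prime)\|\cdot\|^2$ descent term of \Cref{lemma:descentl} for small $\alpha$. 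Ensuring that the resulting stepsize threshold scales no worse than $\kappa_g^{-4}$ (rather than a larger power) is the delicate part that ultimately fixes the $\kappa_g^9$ exponent in the sample complexity.
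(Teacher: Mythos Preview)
Your approach matches the paper's: compute the $\kappa_g$-order of every constant in \Cref{Theorem2-mainB}, substitute $N=\mathcal{O}(\kappa_g)$ and $\bar\alpha=\mathcal{O}(\kappa_g^{-4})$ to balance the two $1/\sqrt{K}$ terms, and then multiply the resulting $K=\mathcal{O}(\kappa_g^{8}\epsilon^{-2})$ by the per-iteration sample cost $\mathcal{O}(N)=\mathcal{O}(\kappa_g)$. One bookkeeping correction: the paper's \Cref{lemma1} gives $M_f=\mathcal{O}(\kappa_g^{2})$, not $\mathcal{O}(\kappa_g)$ as you wrote, and it is precisely this order that makes $\bar\alpha_2=1/(324M_f^{2}+6M_f)=\mathcal{O}(\kappa_g^{-4})$ the binding stepsize constraint---your stated target orders $1/\min_j\bar\alpha_j=\mathcal{O}(\kappa_g^{4})$ and $\max\{c_0,c_1\sigma_h^{2},c_2,c_3\}=\mathcal{O}(\kappa_g^{8})$ are nonetheless correct, and your handling of the $(1-\lambda\mu)^{2N}$ bias term via a hidden $\log(1/\epsilon)$ in $N$ is in fact more explicit than the paper's own proof.
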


As shown in \Cref{corollary2}, the overall sample complexity (i.e., the total number of data samples required to achieve an $\epsilon$-accurate stationary point) of our 
 FBO-AggITD  is $\mathcal{O}(\kappa^9\epsilon^{-2})$, which matches the sample complexities of stocBiO~\citep{ji2021bilevel}, BSA~\citep{ghadimi2018approximation} and ALSET~\citep{chen2021closing} in the non-federated bilevel optimization and FedNest~\citep{tarzanagh2022fednest} in the federated setting despite the data heterogeneity. Note that our method uses only $(2N+3)/(2N+T+3)$ communication rounds of FedNest (shown in \Cref{tab:compare}) at each outer iteration. As a result, in theory, our method achieves a constant-level improvement over FedNest. To improve the dependence on $\epsilon$, we suspect that the server-level variance reduction or periodic averaging can help, but this goes beyond the focus of this paper. We are happy to leave it for future study.
 


\vspace{-0.2cm}
\section{Experiments}\label{sec:exp}
In this section, we compare the performance of the proposed FBO-AggITD method with FedNest and LFedNest in \citealt{tarzanagh2022fednest} on a hyper-representation problem. 
Following the problem setup in \citealt{franceschi2018bilevel}, we use a 2-layer
multilayer perceptron (MLP) as the backbone, where the hidden layer is optimized at the upper-level problem and the head is optimized at the lower-level problem. We study the impact of data heterogeneity on the comparison algorithms by considering both the i.i.d.~and non-i.i.d.~ways of data partitioning of MNIST, following the setup in~\citealt{mcmahan2017communication}.  



The first two plots in \Cref{Fig.main1} compare our FBO-AggITD method with FedNest in both i.i.d.~and non-i.i.d.~setups with $\tau=5$, respectively. It can be seen  that FBO-AggITD 
converges much faster than FedNest, and achieves a higher test accuracy with much fewer communication rounds. In the non-i.i.d.~case also shows that FBO-AggITD is more stable with lower variance than FedNest. The last two plots in \Cref{Fig.main1} show that local updates are useful to improve communication efficiency and stabilize the training. In \Cref{Table1}, it can be seen that to achieve an accuracy of $90\%$, our FBO-AggITD uses more than $2$-$3$ times fewer communication rounds than FedNest, in both the i.i.d.~and non-i.i.d.~cases and in addition, for all four setups, FBO-AggITD achieves a higher final test accuracy than FedNest. 

\vspace{-0.3cm}
\begin{figure}[H]  
\centering
\includegraphics[width=7cm]{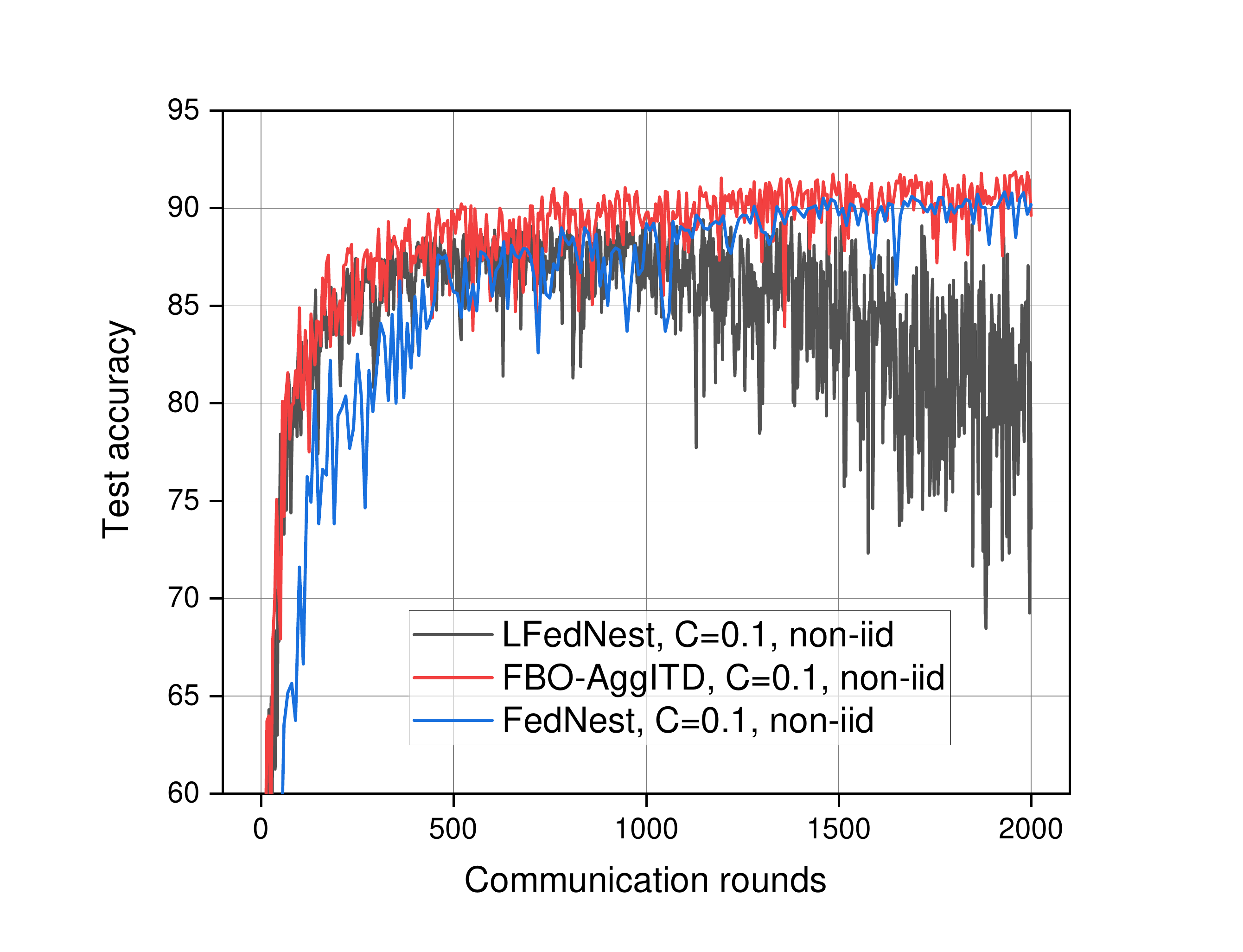}
\hspace{-0.7cm}
\includegraphics[width=7cm]{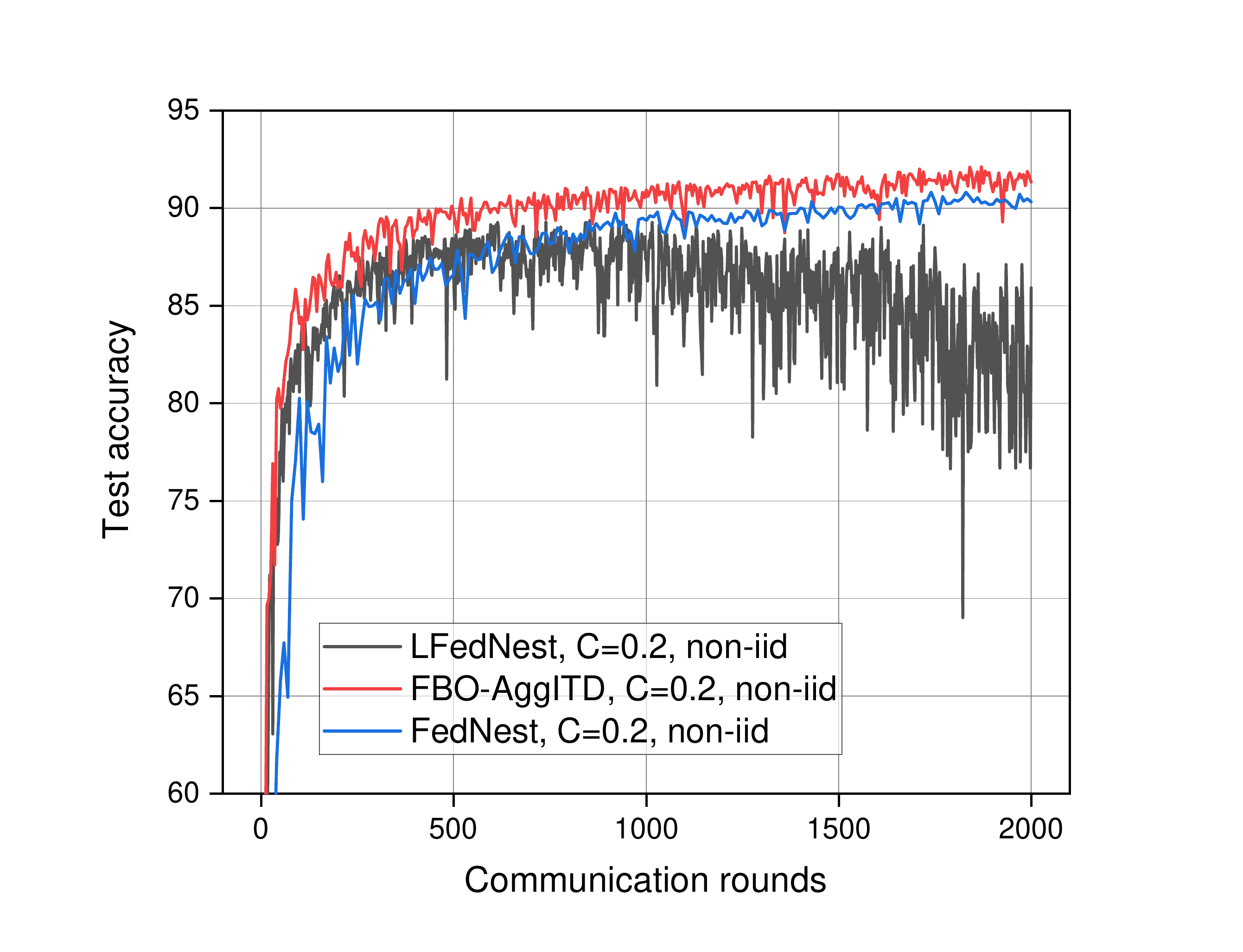}
\vspace{-0.3cm}
\caption{Comparison under different client participation ratios.} 
\label{Fig.ratios} 
\vspace{-0.2cm}
\end{figure}

In \Cref{Fig.process} and \Cref{Fig.ratios}, we compare the performance of our FBO-AggITD, FedNest, and LFedNest (which uses a fully local AID-based hypergradient estimator) given different client participation ratios (denoted as $C$) in the non-i.i.d.~setting. It can be seen that FBO-AggITD outperforms the other two algorithms with higher communication efficiency and higher accuracy. Note that LFedNest has the largest variance and the lowest accuracy, and this validates the importance of federated hypergradient computation. All above experiments use SVRG-type optimizer which outperforms the SGD-type optimizer, shown in \cref{Fig.sgd}. 
\vspace{-0.3cm}
\begin{figure}[h]  
\centering
\includegraphics[width=7cm]{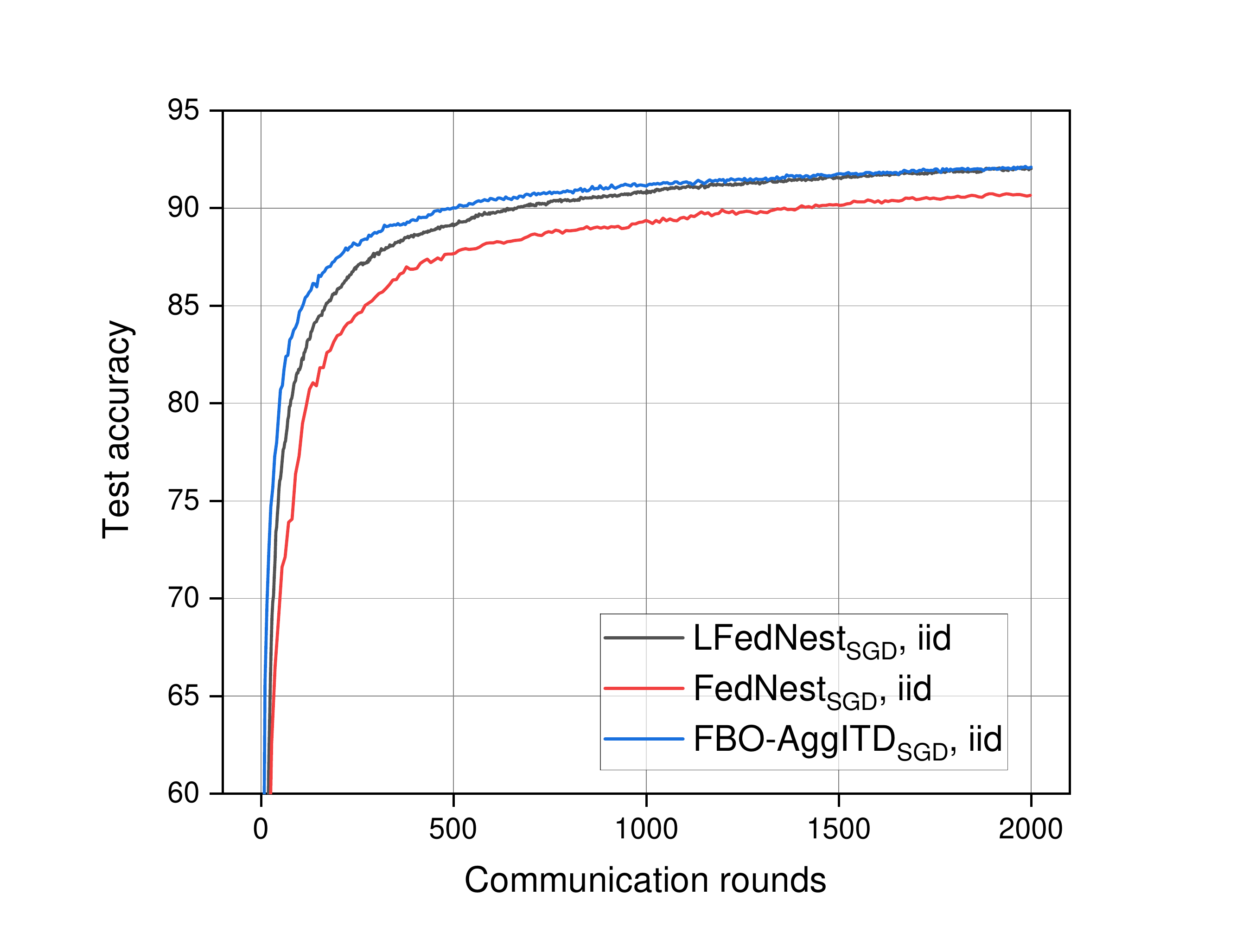}
\hspace{-0.7cm}
\includegraphics[width=7cm]{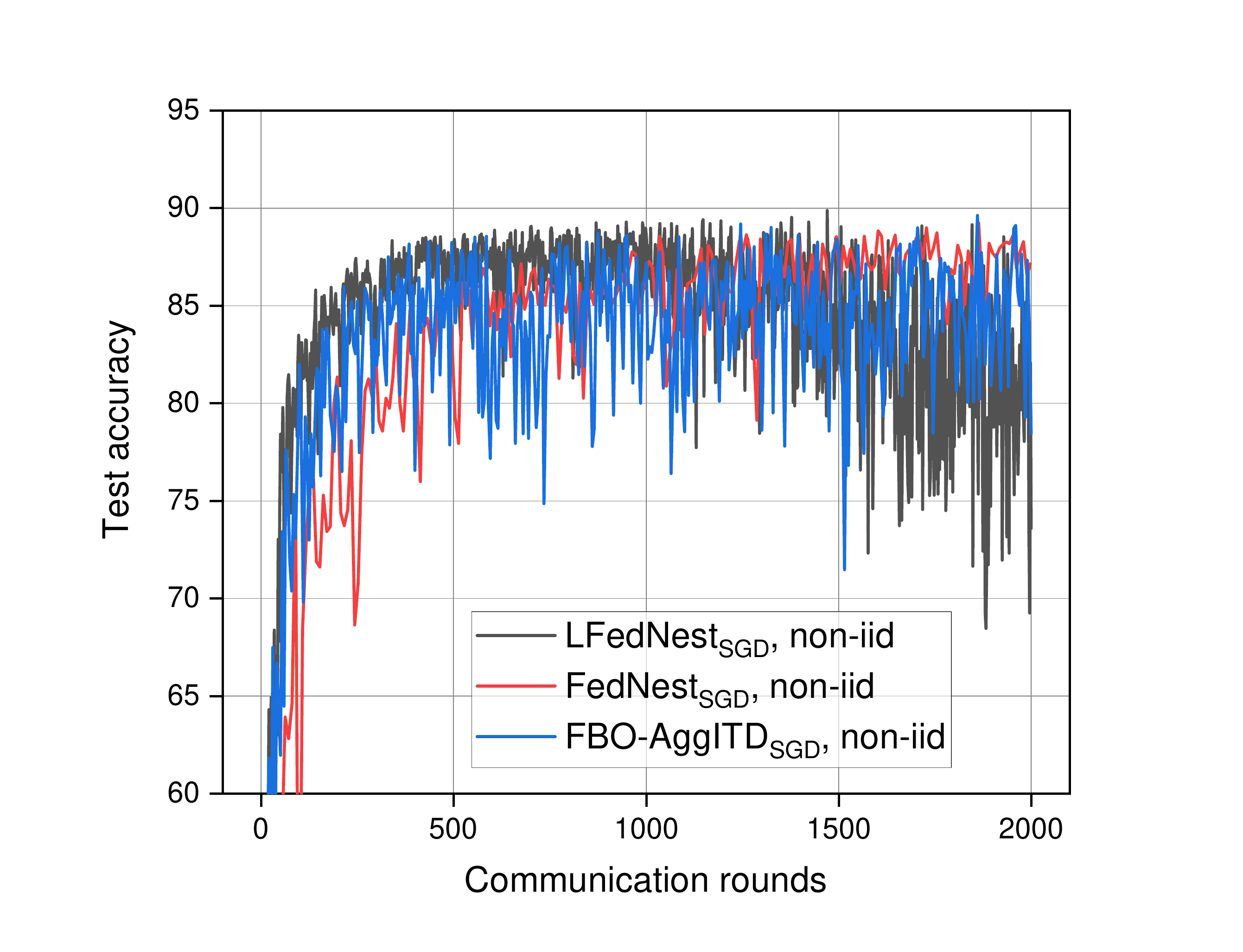}
\vspace{-0.3cm}
\caption{Performance using SGD-type optimizer.}\label{Fig.sgd} 
\vspace{-0.2cm}
\end{figure}

\Cref{Fig.sgd} compares the performance of FBO-AggITD, FedNest and LFedNest when the {\bf One-Round-Lower} uses the SGD-type FedAvg methed. In the both i.i.d. and non-i.i.d.~settings, our method (which is defined as FBO-AggITD$_{SGD}$) still performs the best with the fastest convergence rate w.r.t.~the number of communication rounds. Another observation is that using the the SGD-type lower-level solver introduces a larger variance and fluctuation than the SVRG-type optimizer, by comparing \Cref{Fig.main1} and \Cref{Fig.sgd}. This validates the importance of variance reduction in mitigating the impact of the client drift on the convergence performance. 

Finally, \Cref{Fig.cifarcompare} shows the performance of FBO-AggITD on CIFAR-10 with MLP/CNN network in the i.i.d. setting. We found that FedNest could not converge in this task after an extensive grid search on hyperparameters. However, our method can converge with both MLP and CNN backbones. However, the test accuracy is not satisfactory here. We suspect that it is because the objective function in hyper-representation is not good for federated setting, and a more careful network architecture should be designed for more challenging datasets. We would like to leave this for the future work.


\begin{figure}[ht]  
\centering
\includegraphics[width=9cm]{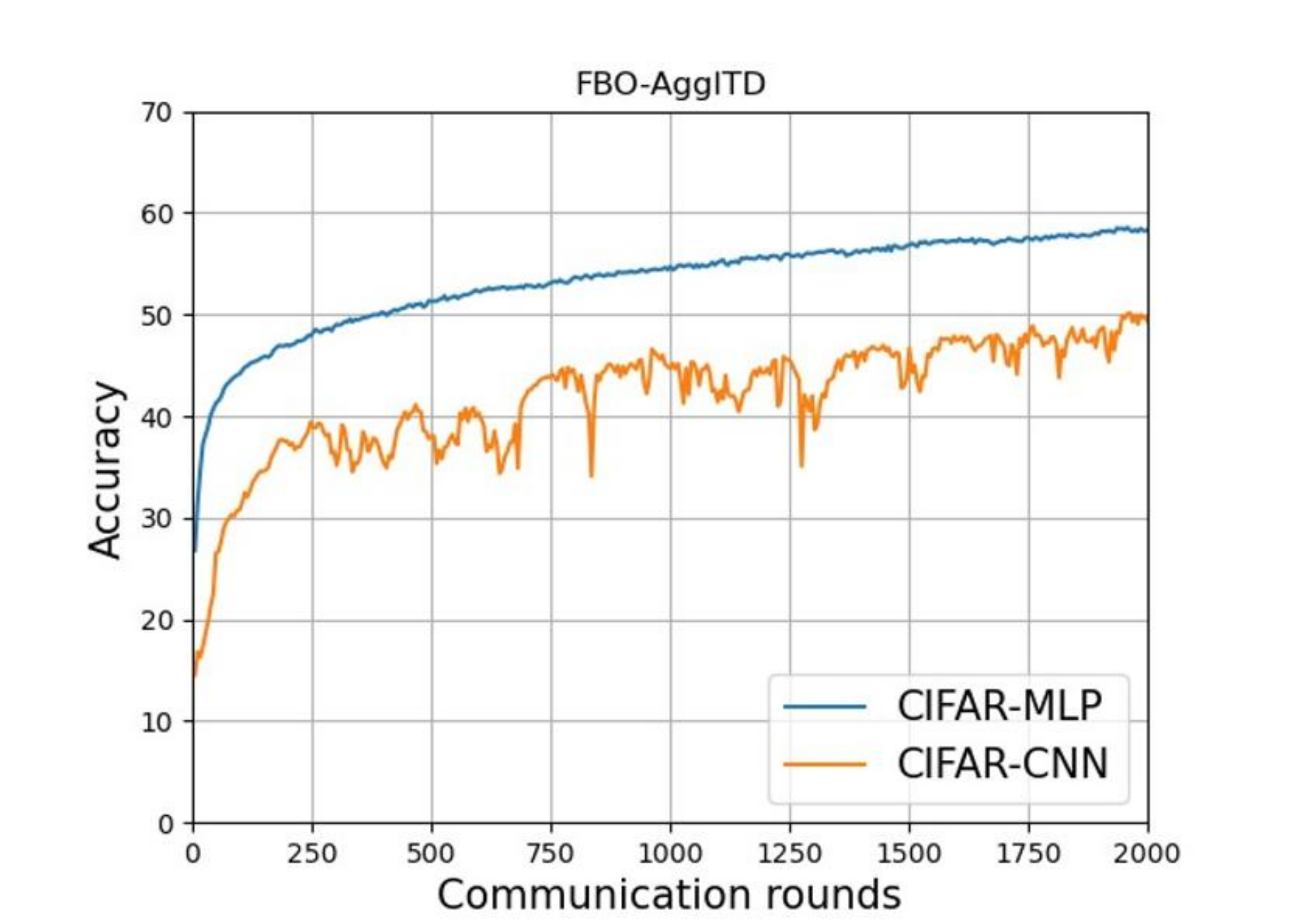}
\caption{Performance on CIFAR-10 with MLP and CNN.} 
\label{Fig.cifarcompare} 
\end{figure}


\vspace{-0.2cm}
\section{Conclusions}
In this paper, we propose a simple and communication-efficient federated hypergradient estimator based on a novel aggregated iterative differentiation (AggITD). 
We show that the proposed AggITD-based algorithm achieves the same sample complexity as existing approaches with much fewer communication rounds on non-i.i.d.~datasets. We anticipate our new estimator can be further applied to other distributed scenarios such as decentralized bilevel optimization.
\bibliography{reference}

\begin{thebibliography}{64}
\providecommand{\natexlab}[1]{#1}
\providecommand{\url}[1]{\texttt{#1}}
\expandafter\ifx\csname urlstyle\endcsname\relax
  \providecommand{\doi}[1]{doi: #1}\else
  \providecommand{\doi}{doi: \begingroup \urlstyle{rm}\Url}\fi

\bibitem[Acar et~al.(2021)Acar, Zhao, Navarro, Mattina, Whatmough, and
  Saligrama]{acar2021federated}
Durmus Alp~Emre Acar, Yue Zhao, Ramon~Matas Navarro, Matthew Mattina, Paul~N
  Whatmough, and Venkatesh Saligrama.
\newblock Federated learning based on dynamic regularization.
\newblock \emph{arXiv preprint arXiv:2111.04263}, 2021.

\bibitem[Arbel \& Mairal(2022)Arbel and Mairal]{arbel2022amortized}
Michael Arbel and Julien Mairal.
\newblock Amortized implicit differentiation for stochastic bilevel
  optimization.
\newblock In \emph{International Conference on Learning Representations
  (ICLR)}, 2022.

\bibitem[Basu et~al.(2019)Basu, Data, Karakus, and Diggavi]{basu2019qsparse}
Debraj Basu, Deepesh Data, Can Karakus, and Suhas Diggavi.
\newblock Qsparse-local-sgd: Distributed sgd with quantization, sparsification
  and local computations.
\newblock \emph{Advances in Neural Information Processing Systems (NeurIPS)},
  32, 2019.

\bibitem[Bracken \& McGill(1973)Bracken and McGill]{bracken1973mathematical}
Jerome Bracken and James~T McGill.
\newblock Mathematical programs with optimization problems in the constraints.
\newblock \emph{Operations Research}, 21\penalty0 (1):\penalty0 37--44, 1973.

\bibitem[Chen et~al.(2021{\natexlab{a}})Chen, Sun, and Yin]{chen2021closing}
Tianyi Chen, Yuejiao Sun, and Wotao Yin.
\newblock Closing the gap: Tighter analysis of alternating stochastic gradient
  methods for bilevel problems.
\newblock In A.~Beygelzimer, Y.~Dauphin, P.~Liang, and J.~Wortman Vaughan
  (eds.), \emph{Advances in Neural Information Processing Systems (NeurIPS)},
  2021{\natexlab{a}}.

\bibitem[Chen et~al.(2021{\natexlab{b}})Chen, Sun, and Yin]{chen2021single}
Tianyi Chen, Yuejiao Sun, and Wotao Yin.
\newblock A single-timescale stochastic bilevel optimization method.
\newblock \emph{arXiv preprint arXiv:2102.04671}, 2021{\natexlab{b}}.

\bibitem[Chen et~al.(2022)Chen, Huang, and Ma]{chen2022decentralized}
Xuxing Chen, Minhui Huang, and Shiqian Ma.
\newblock Decentralized bilevel optimization.
\newblock \emph{arXiv preprint arXiv:2206.05670}, 2022.

\bibitem[Dagr{\'e}ou et~al.(2022)Dagr{\'e}ou, Ablin, Vaiter, and
  Moreau]{dagreou2022framework}
Mathieu Dagr{\'e}ou, Pierre Ablin, Samuel Vaiter, and Thomas Moreau.
\newblock A framework for bilevel optimization that enables stochastic and
  global variance reduction algorithms.
\newblock \emph{arXiv preprint arXiv:2201.13409}, 2022.

\bibitem[Domke(2012)]{domke2012generic}
Justin Domke.
\newblock Generic methods for optimization-based modeling.
\newblock In \emph{Artificial Intelligence and Statistics (AISTATS)}, pp.\
  318--326, 2012.

\bibitem[Feurer \& Hutter(2019)Feurer and Hutter]{feurer2019hyperparameter}
Matthias Feurer and Frank Hutter.
\newblock Hyperparameter optimization.
\newblock In \emph{Automated Machine Learning}, pp.\  3--33. Springer, Cham,
  2019.

\bibitem[Finn et~al.(2017)Finn, Abbeel, and Levine]{finn2017model}
Chelsea Finn, Pieter Abbeel, and Sergey Levine.
\newblock Model-agnostic meta-learning for fast adaptation of deep networks.
\newblock In \emph{Proc. International Conference on Machine Learning (ICML)},
  pp.\  1126--1135, 2017.

\bibitem[Franceschi et~al.(2017)Franceschi, Donini, Frasconi, and
  Pontil]{franceschi2017forward}
Luca Franceschi, Michele Donini, Paolo Frasconi, and Massimiliano Pontil.
\newblock Forward and reverse gradient-based hyperparameter optimization.
\newblock In \emph{International Conference on Machine Learning (ICML)}, pp.\
  1165--1173, 2017.

\bibitem[Franceschi et~al.(2018)Franceschi, Frasconi, Salzo, Grazzi, and
  Pontil]{franceschi2018bilevel}
Luca Franceschi, Paolo Frasconi, Saverio Salzo, Riccardo Grazzi, and
  Massimiliano Pontil.
\newblock Bilevel programming for hyperparameter optimization and
  meta-learning.
\newblock In \emph{International Conference on Machine Learning}, pp.\
  1568--1577. PMLR, 2018.

\bibitem[Gao(2022)]{gao2022convergence}
Hongchang Gao.
\newblock On the convergence of momentum-based algorithms for federated
  stochastic bilevel optimization problems.
\newblock \emph{arXiv preprint arXiv:2204.13299}, 2022.

\bibitem[Ghadimi \& Wang(2018)Ghadimi and Wang]{ghadimi2018approximation}
Saeed Ghadimi and Mengdi Wang.
\newblock Approximation methods for bilevel programming.
\newblock \emph{arXiv preprint arXiv:1802.02246}, 2018.

\bibitem[Gould et~al.(2016)Gould, Fernando, Cherian, Anderson, Cruz, and
  Guo]{gould2016differentiating}
Stephen Gould, Basura Fernando, Anoop Cherian, Peter Anderson, Rodrigo~Santa
  Cruz, and Edison Guo.
\newblock On differentiating parameterized argmin and argmax problems with
  application to bi-level optimization.
\newblock \emph{arXiv preprint arXiv:1607.05447}, 2016.

\bibitem[Grazzi et~al.(2020)Grazzi, Franceschi, Pontil, and
  Salzo]{grazzi2020iteration}
Riccardo Grazzi, Luca Franceschi, Massimiliano Pontil, and Saverio Salzo.
\newblock On the iteration complexity of hypergradient computation.
\newblock In \emph{Proc. International Conference on Machine Learning (ICML)},
  2020.

\bibitem[Griewank \& Walther(2008)Griewank and Walther]{griewank2008evaluating}
Andreas Griewank and Andrea Walther.
\newblock \emph{Evaluating derivatives: principles and techniques of
  algorithmic differentiation}.
\newblock SIAM, 2008.

\bibitem[Guo \& Yang(2021)Guo and Yang]{guo2021randomized}
Zhishuai Guo and Tianbao Yang.
\newblock Randomized stochastic variance-reduced methods for stochastic bilevel
  optimization.
\newblock \emph{arXiv preprint arXiv:2105.02266}, 2021.

\bibitem[Hansen et~al.(1992)Hansen, Jaumard, and Savard]{hansen1992new}
Pierre Hansen, Brigitte Jaumard, and Gilles Savard.
\newblock New branch-and-bound rules for linear bilevel programming.
\newblock \emph{SIAM Journal on Scientific and Statistical Computing},
  13\penalty0 (5):\penalty0 1194--1217, 1992.

\bibitem[Hong et~al.(2020)Hong, Wai, Wang, and Yang]{hong2020two}
Mingyi Hong, Hoi-To Wai, Zhaoran Wang, and Zhuoran Yang.
\newblock A two-timescale framework for bilevel optimization: Complexity
  analysis and application to actor-critic.
\newblock \emph{arXiv preprint arXiv:2007.05170}, 2020.

\bibitem[Hsu et~al.(2019)Hsu, Qi, and Brown]{hsu2019measuring}
Tzu-Ming~Harry Hsu, Hang Qi, and Matthew Brown.
\newblock Measuring the effects of non-identical data distribution for
  federated visual classification.
\newblock \emph{arXiv preprint arXiv:1909.06335}, 2019.

\bibitem[Huang \& Huang(2021)Huang and Huang]{huang2021biadam}
Feihu Huang and Heng Huang.
\newblock Biadam: Fast adaptive bilevel optimization methods.
\newblock \emph{arXiv preprint arXiv:2106.11396}, 2021.

\bibitem[Huang et~al.(2022)Huang, Lin, Street, and Baek]{huang2022federated}
Yankun Huang, Qihang Lin, Nick Street, and Stephen Baek.
\newblock Federated learning on adaptively weighted nodes by bilevel
  optimization.
\newblock \emph{arXiv preprint arXiv:2207.10751}, 2022.

\bibitem[Ji \& Ying(2022)Ji and Ying]{ji2022network}
Kaiyi Ji and Lei Ying.
\newblock Network utility maximization with general and unknown utility
  functions: A distributed, data-driven bilevel optimization approach.
\newblock \emph{Submitted}, 2022.

\bibitem[Ji \& Ying(2023)Ji and Ying]{ji2023network}
Kaiyi Ji and Lei Ying.
\newblock Network utility maximization with unknown utility functions: A
  distributed, data-driven bilevel optimization approach.
\newblock \emph{arXiv preprint arXiv:2301.01801}, 2023.

\bibitem[Ji et~al.(2021)Ji, Yang, and Liang]{ji2021bilevel}
Kaiyi Ji, Junjie Yang, and Yingbin Liang.
\newblock Bilevel optimization: Convergence analysis and enhanced design.
\newblock In \emph{International Conference on Machine Learning}, pp.\
  4882--4892. PMLR, 2021.

\bibitem[Karimireddy et~al.(2020)Karimireddy, Kale, Mohri, Reddi, Stich, and
  Suresh]{karimireddy2020scaffold}
Sai~Praneeth Karimireddy, Satyen Kale, Mehryar Mohri, Sashank Reddi, Sebastian
  Stich, and Ananda~Theertha Suresh.
\newblock Scaffold: Stochastic controlled averaging for federated learning.
\newblock In \emph{International Conference on Machine Learning}, pp.\
  5132--5143. PMLR, 2020.

\bibitem[Khaled et~al.(2019)Khaled, Mishchenko, and
  Richt{\'a}rik]{khaled2019first}
Ahmed Khaled, Konstantin Mishchenko, and Peter Richt{\'a}rik.
\newblock First analysis of local gd on heterogeneous data.
\newblock \emph{arXiv preprint arXiv:1909.04715}, 2019.

\bibitem[Khanduri et~al.(2021)Khanduri, Zeng, Hong, Wai, Wang, and
  Yang]{khanduri2021near}
Prashant Khanduri, Siliang Zeng, Mingyi Hong, Hoi-To Wai, Zhaoran Wang, and
  Zhuoran Yang.
\newblock A near-optimal algorithm for stochastic bilevel optimization via
  double-momentum.
\newblock \emph{Advances in Neural Information Processing Systems (NeurIPS)},
  34:\penalty0 30271--30283, 2021.

\bibitem[Konda \& Tsitsiklis(1999)Konda and Tsitsiklis]{konda1999actor}
Vijay Konda and John Tsitsiklis.
\newblock Actor-critic algorithms.
\newblock \emph{Advances in neural information processing systems}, 12, 1999.

\bibitem[Kone{\v{c}}n{\`y} et~al.(2015)Kone{\v{c}}n{\`y}, McMahan, and
  Ramage]{konevcny2015federated}
Jakub Kone{\v{c}}n{\`y}, Brendan McMahan, and Daniel Ramage.
\newblock Federated optimization: Distributed optimization beyond the
  datacenter.
\newblock \emph{arXiv preprint arXiv:1511.03575}, 2015.

\bibitem[Kone{\v{c}}n{\`y} et~al.(2016)Kone{\v{c}}n{\`y}, McMahan, Ramage, and
  Richt{\'a}rik]{konevcny2016federated}
Jakub Kone{\v{c}}n{\`y}, H~Brendan McMahan, Daniel Ramage, and Peter
  Richt{\'a}rik.
\newblock Federated optimization: Distributed machine learning for on-device
  intelligence.
\newblock \emph{arXiv preprint arXiv:1610.02527}, 2016.

\bibitem[Kunapuli et~al.(2008)Kunapuli, Bennett, Hu, and
  Pang]{kunapuli2008classification}
Gautam Kunapuli, Kristin~P Bennett, Jing Hu, and Jong-Shi Pang.
\newblock Classification model selection via bilevel programming.
\newblock \emph{Optimization Methods \& Software}, 23\penalty0 (4):\penalty0
  475--489, 2008.

\bibitem[LeCun et~al.(1998)LeCun, Bottou, Bengio, and
  Haffner]{lecun1998gradient}
Yann LeCun, L{\'e}on Bottou, Yoshua Bengio, and Patrick Haffner.
\newblock Gradient-based learning applied to document recognition.
\newblock \emph{Proceedings of the IEEE}, 86\penalty0 (11):\penalty0
  2278--2324, 1998.

\bibitem[Li et~al.(2022)Li, Huang, and Huang]{li2022local}
Junyi Li, Feihu Huang, and Heng Huang.
\newblock Local stochastic bilevel optimization with momentum-based variance
  reduction.
\newblock \emph{arXiv preprint arXiv:2205.01608}, 2022.

\bibitem[Li et~al.(2020)Li, Sahu, Zaheer, Sanjabi, Talwalkar, and
  Smith]{li2020federated}
Tian Li, Anit~Kumar Sahu, Manzil Zaheer, Maziar Sanjabi, Ameet Talwalkar, and
  Virginia Smith.
\newblock Federated optimization in heterogeneous networks.
\newblock \emph{Proceedings of Machine Learning and Systems}, 2:\penalty0
  429--450, 2020.

\bibitem[Li et~al.(2019)Li, Huang, Yang, Wang, and Zhang]{li2019convergence}
Xiang Li, Kaixuan Huang, Wenhao Yang, Shusen Wang, and Zhihua Zhang.
\newblock On the convergence of fedavg on non-iid data.
\newblock \emph{arXiv preprint arXiv:1907.02189}, 2019.

\bibitem[Liao et~al.(2018)Liao, Xiong, Fetaya, Zhang, Yoon, Pitkow, Urtasun,
  and Zemel]{liao2018reviving}
Renjie Liao, Yuwen Xiong, Ethan Fetaya, Lisa Zhang, KiJung Yoon, Xaq Pitkow,
  Raquel Urtasun, and Richard Zemel.
\newblock Reviving and improving recurrent back-propagation.
\newblock In \emph{Proc. International Conference on Machine Learning (ICML)},
  2018.

\bibitem[Liu et~al.(2021{\natexlab{a}})Liu, Gao, Zhang, Meng, and
  Lin]{liu2021investigating}
Risheng Liu, Jiaxin Gao, Jin Zhang, Deyu Meng, and Zhouchen Lin.
\newblock Investigating bi-level optimization for learning and vision from a
  unified perspective: A survey and beyond.
\newblock \emph{IEEE Transactions on Pattern Analysis and Machine
  Intelligence}, 2021{\natexlab{a}}.

\bibitem[Liu et~al.(2021{\natexlab{b}})Liu, Liu, Yuan, Zeng, and
  Zhang]{liu2021value}
Risheng Liu, Xuan Liu, Xiaoming Yuan, Shangzhi Zeng, and Jin Zhang.
\newblock A value-function-based interior-point method for non-convex bi-level
  optimization.
\newblock In \emph{International Conference on Machine Learning (ICML)},
  2021{\natexlab{b}}.

\bibitem[Lu et~al.(2022)Lu, Cui, Squillante, Kingsbury, and
  Horesh]{lu2022decentralized}
Songtao Lu, Xiaodong Cui, Mark~S Squillante, Brian Kingsbury, and Lior Horesh.
\newblock Decentralized bilevel optimization for personalized client learning.
\newblock In \emph{ICASSP 2022-2022 IEEE International Conference on Acoustics,
  Speech and Signal Processing (ICASSP)}, pp.\  5543--5547. IEEE, 2022.

\bibitem[Maclaurin et~al.(2015)Maclaurin, Duvenaud, and
  Adams]{maclaurin2015gradient}
Dougal Maclaurin, David Duvenaud, and Ryan Adams.
\newblock Gradient-based hyperparameter optimization through reversible
  learning.
\newblock In \emph{International Conference on Machine Learning (ICML)}, pp.\
  2113--2122, 2015.

\bibitem[McMahan et~al.(2017)McMahan, Moore, Ramage, Hampson, and
  y~Arcas]{mcmahan2017communication}
Brendan McMahan, Eider Moore, Daniel Ramage, Seth Hampson, and Blaise~Aguera
  y~Arcas.
\newblock Communication-efficient learning of deep networks from decentralized
  data.
\newblock In \emph{Artificial intelligence and statistics}, pp.\  1273--1282.
  PMLR, 2017.

\bibitem[Mitra et~al.(2021)Mitra, Jaafar, Pappas, and Hassani]{mitra2021linear}
Aritra Mitra, Rayana Jaafar, George~J Pappas, and Hamed Hassani.
\newblock Linear convergence in federated learning: Tackling client
  heterogeneity and sparse gradients.
\newblock \emph{Advances in Neural Information Processing Systems},
  34:\penalty0 14606--14619, 2021.

\bibitem[Mohri et~al.(2019)Mohri, Sivek, and Suresh]{mohri2019agnostic}
Mehryar Mohri, Gary Sivek, and Ananda~Theertha Suresh.
\newblock Agnostic federated learning.
\newblock In \emph{International Conference on Machine Learning (ICML)}, pp.\
  4615--4625. PMLR, 2019.

\bibitem[Pathak \& Wainwright(2020)Pathak and Wainwright]{pathak2020fedsplit}
Reese Pathak and Martin~J Wainwright.
\newblock Fedsplit: An algorithmic framework for fast federated optimization.
\newblock \emph{Advances in Neural Information Processing Systems},
  33:\penalty0 7057--7066, 2020.

\bibitem[Pedregosa(2016)]{pedregosa2016hyperparameter}
Fabian Pedregosa.
\newblock Hyperparameter optimization with approximate gradient.
\newblock In \emph{International Conference on Machine Learning (ICML)}, pp.\
  737--746, 2016.

\bibitem[Rajeswaran et~al.(2019)Rajeswaran, Finn, Kakade, and
  Levine]{rajeswaran2019meta}
Aravind Rajeswaran, Chelsea Finn, Sham~M Kakade, and Sergey Levine.
\newblock Meta-learning with implicit gradients.
\newblock In \emph{Advances in Neural Information Processing Systems
  (NeurIPS)}, pp.\  113--124, 2019.

\bibitem[Reddi et~al.(2020)Reddi, Charles, Zaheer, Garrett, Rush,
  Kone{\v{c}}n{\`y}, Kumar, and McMahan]{reddi2020adaptive}
Sashank~J Reddi, Zachary Charles, Manzil Zaheer, Zachary Garrett, Keith Rush,
  Jakub Kone{\v{c}}n{\`y}, Sanjiv Kumar, and Hugh~Brendan McMahan.
\newblock Adaptive federated optimization.
\newblock In \emph{International Conference on Learning Representations
  (ICLR)}, 2020.

\bibitem[Shaban et~al.(2019)Shaban, Cheng, Hatch, and
  Boots]{shaban2019truncated}
Amirreza Shaban, Ching-An Cheng, Nathan Hatch, and Byron Boots.
\newblock Truncated back-propagation for bilevel optimization.
\newblock In \emph{International Conference on Artificial Intelligence and
  Statistics (AISTATS)}, pp.\  1723--1732, 2019.

\bibitem[Shi et~al.(2005)Shi, Lu, and Zhang]{shi2005extended}
Chenggen Shi, Jie Lu, and Guangquan Zhang.
\newblock An extended kuhn--tucker approach for linear bilevel programming.
\newblock \emph{Applied Mathematics and Computation}, 162\penalty0
  (1):\penalty0 51--63, 2005.

\bibitem[Shokri \& Shmatikov(2015)Shokri and Shmatikov]{shokri2015privacy}
Reza Shokri and Vitaly Shmatikov.
\newblock Privacy-preserving deep learning.
\newblock In \emph{Proceedings of the 22nd ACM SIGSAC conference on computer
  and communications security}, pp.\  1310--1321, 2015.

\bibitem[Sinha et~al.(2017)Sinha, Malo, and Deb]{sinha2017review}
Ankur Sinha, Pekka Malo, and Kalyanmoy Deb.
\newblock A review on bilevel optimization: from classical to evolutionary
  approaches and applications.
\newblock \emph{IEEE Transactions on Evolutionary Computation}, 22\penalty0
  (2):\penalty0 276--295, 2017.

\bibitem[Stich \& Karimireddy(2019)Stich and Karimireddy]{stich2019error}
Sebastian~U Stich and Sai~Praneeth Karimireddy.
\newblock The error-feedback framework: Better rates for sgd with delayed
  gradients and compressed communication.
\newblock \emph{arXiv preprint arXiv:1909.05350}, 2019.

\bibitem[Stich(2019)]{stich2019local}
Sebastian~Urban Stich.
\newblock Local sgd converges fast and communicates little.
\newblock In \emph{ICLR 2019-International Conference on Learning
  Representations (ICLR)}, 2019.

\bibitem[Tarzanagh et~al.(2022)Tarzanagh, Li, Thrampoulidis, and
  Oymak]{tarzanagh2022fednest}
Davoud~Ataee Tarzanagh, Mingchen Li, Christos Thrampoulidis, and Samet Oymak.
\newblock Fednest: Federated bilevel, minimax, and compositional optimization.
\newblock \emph{arXiv preprint arXiv:2205.02215}, 2022.

\bibitem[Wang \& Joshi(2018)Wang and Joshi]{wang2018cooperative}
Jianyu Wang and Gauri Joshi.
\newblock Cooperative sgd: A unified framework for the design and analysis of
  communication-efficient sgd algorithms.
\newblock \emph{arXiv preprint arXiv:1808.07576}, 2018.

\bibitem[Wang et~al.(2020)Wang, Balasubramanian, Ma, and
  Razaviyayn]{wang2020zeroth}
Zhongruo Wang, Krishnakumar Balasubramanian, Shiqian Ma, and Meisam Razaviyayn.
\newblock Zeroth-order algorithms for nonconvex minimax problems with improved
  complexities.
\newblock \emph{arXiv preprint arXiv:2001.07819}, 2020.

\bibitem[Xing et~al.(2022)Xing, Lu, Wu, and Yu]{xing2022big}
Pengwei Xing, Songtao Lu, Lingfei Wu, and Han Yu.
\newblock Big-fed: Bilevel optimization enhanced graph-aided federated
  learning.
\newblock \emph{IEEE Transactions on Big Data}, 2022.

\bibitem[Yang et~al.(2021)Yang, Ji, and Liang]{yang2021provably}
Junjie Yang, Kaiyi Ji, and Yingbin Liang.
\newblock Provably faster algorithms for bilevel optimization.
\newblock \emph{Advances in Neural Information Processing Systems (NeurIPS)},
  34:\penalty0 13670--13682, 2021.

\bibitem[Yang et~al.(2022)Yang, Zhang, and Wang]{yang2022decentralized}
Shuoguang Yang, Xuezhou Zhang, and Mengdi Wang.
\newblock Decentralized gossip-based stochastic bilevel optimization over
  communication networks.
\newblock \emph{arXiv preprint arXiv:2206.10870}, 2022.

\bibitem[Yousefian(2021)]{yousefian2021bilevel}
Farzad Yousefian.
\newblock Bilevel distributed optimization in directed networks.
\newblock In \emph{2021 American Control Conference (ACC)}, pp.\  2230--2235.
  IEEE, 2021.

\bibitem[Zhang et~al.(2022)Zhang, Zhang, Khanduri, Hong, Chang, and
  Liu]{zhang2022revisiting}
Yihua Zhang, Guanhua Zhang, Prashant Khanduri, Mingyi Hong, Shiyu Chang, and
  Sijia Liu.
\newblock Revisiting and advancing fast adversarial training through the lens
  of bi-level optimization.
\newblock In \emph{International Conference on Machine Learning (ICML)}, pp.\
  26693--26712. PMLR, 2022.

\end{thebibliography}
\bibliographystyle{ref_style}

\newpage
\onecolumn
\allowdisplaybreaks
\appendix
\vspace{0.2cm}

\noindent{\Large\bf Supplementary Materials} 
\vspace{0.2cm}

\section{Further Specifications on Experiments}
\subsection{Additional experiments}\label{additionalexp}

\textbf{Experiments on MNIST with CNN networks. } \Cref{Fig.mnistcnncompare} compares the performance of FBO-AggITD, FedNest and LFedNest on MNIST when the backbone is chosen as CNN and One-Round-Lower uses the SVRG-type method. In the non-i.i.d. setting, it turns out that both FedNest and LFedNest failed to converge depsite of a grid search for stepsizes. The grid search on inner step sizes and outer step sizes of 4 settings are [(0.003, 0.01), (0.001, 0.005), (0.0005, 0.003), (0.0003, 0.001)]. However, our method (which is defined as FBO-AggITD) can have the ability to converge in both non-i.i.d. and i.i.d. cases with high training accuracies. The inner step szie and outer step size are chosen as [0.003, 0.01] after grid search. The training accuracies after 2000 communication rounds in i.i.d. and non-i.i.d. cases are 97.6\% and 96.7\%, respectively. 
\begin{figure*}[ht]  
\centering
\includegraphics[width=5cm]{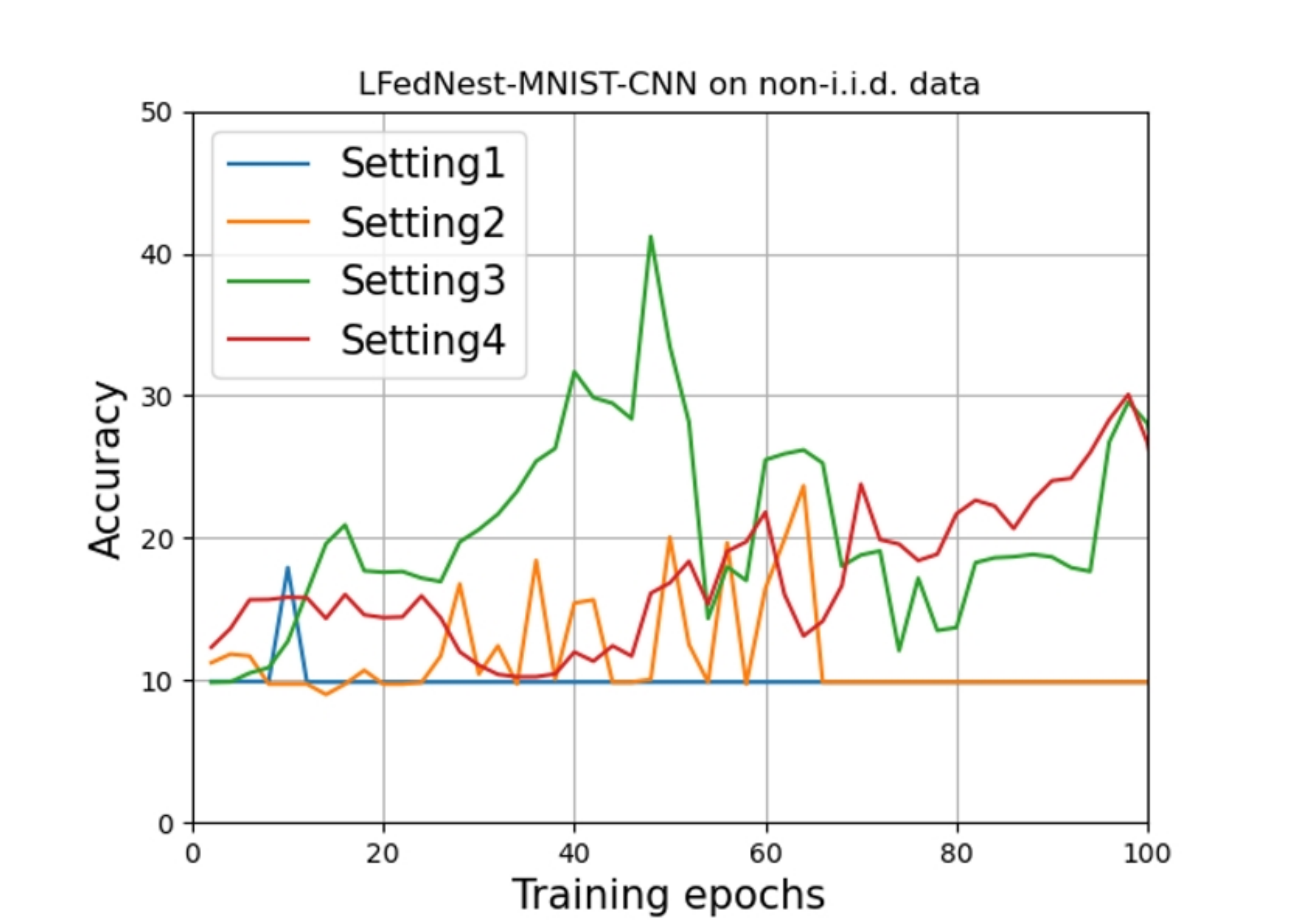}
\hspace{-0.3cm}
\includegraphics[width=5cm]{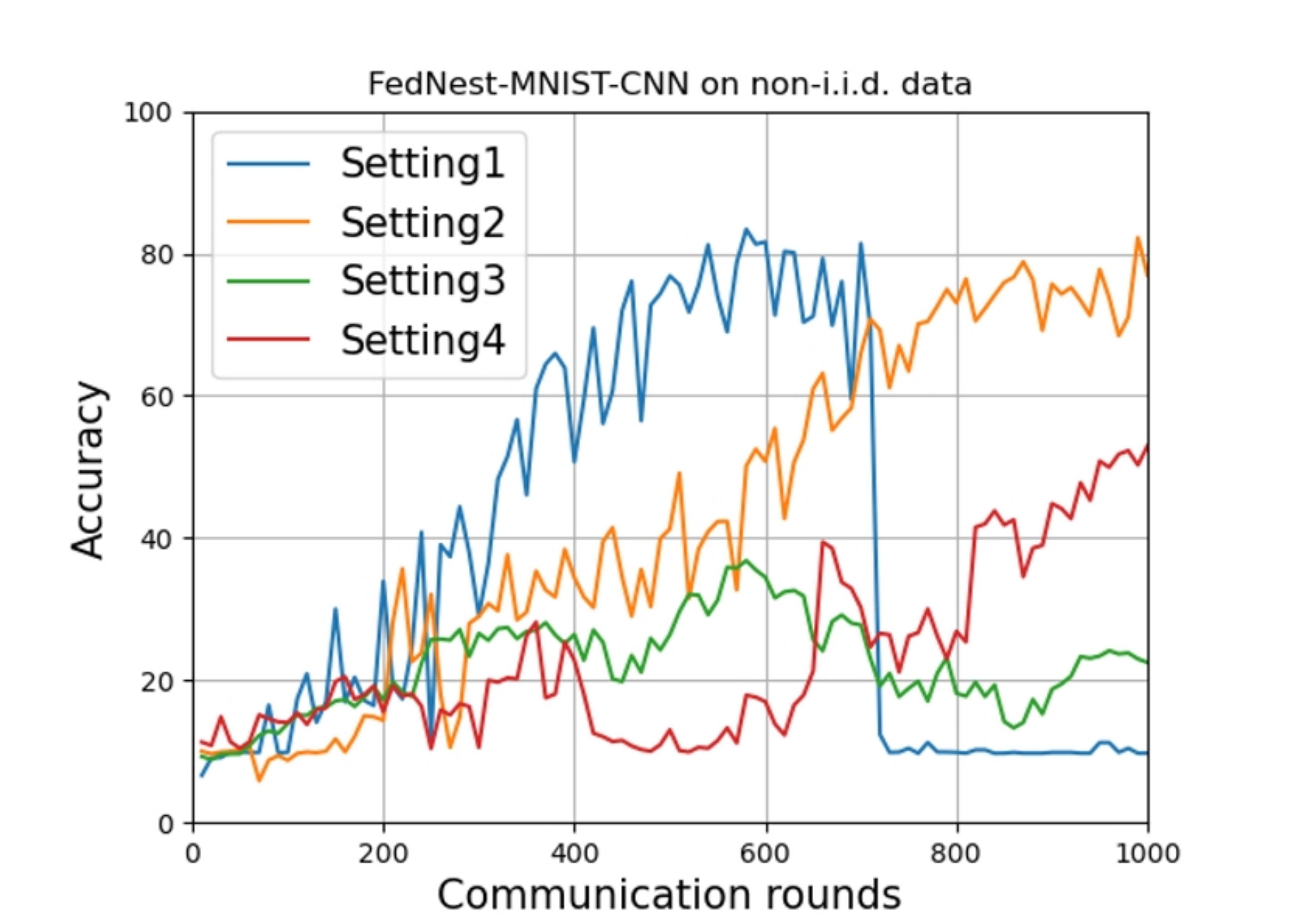}
\hspace{-0.3cm}
\includegraphics[width=5cm]{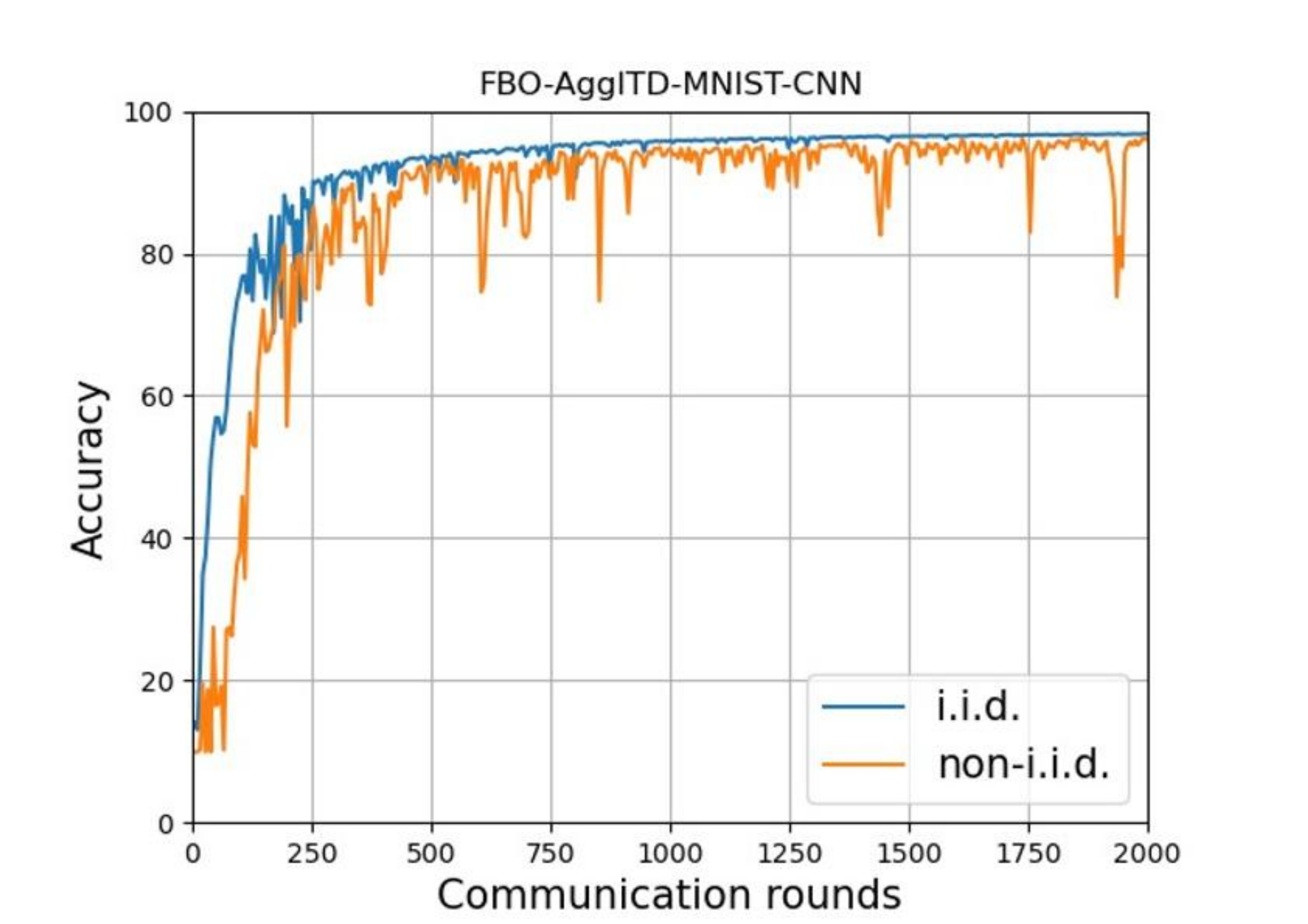}
\caption{Performance of LFedNest, FedNest and FBO-AggITD on MNIST when the backbone is chosen as CNN and One-Round-Lower uses the SVRG-type method.} 
\label{Fig.mnistcnncompare} 
\end{figure*}

\textbf{Running time comparison. } The following \Cref{Fig.runtimecompare} shows the running time comparison between FedNest, FBO-AggITD and gossip-based method, Algorithm 2 in \cite{yang2022decentralized}. Our FBO-AggITD archives a running time comparable to that of FedNest because both methods consume a similar number of gradient and Hessian-vector computations.However, our FBO-AggITD converges much faster than this gossip-based method, which is slower due to the computation of the Hessian and Jacobian matrices. Since no codes are provided in \cite{yang2022decentralized}, we wrote a code for comparison.

\begin{figure*}[ht]  
\centering
\includegraphics[width=9cm]{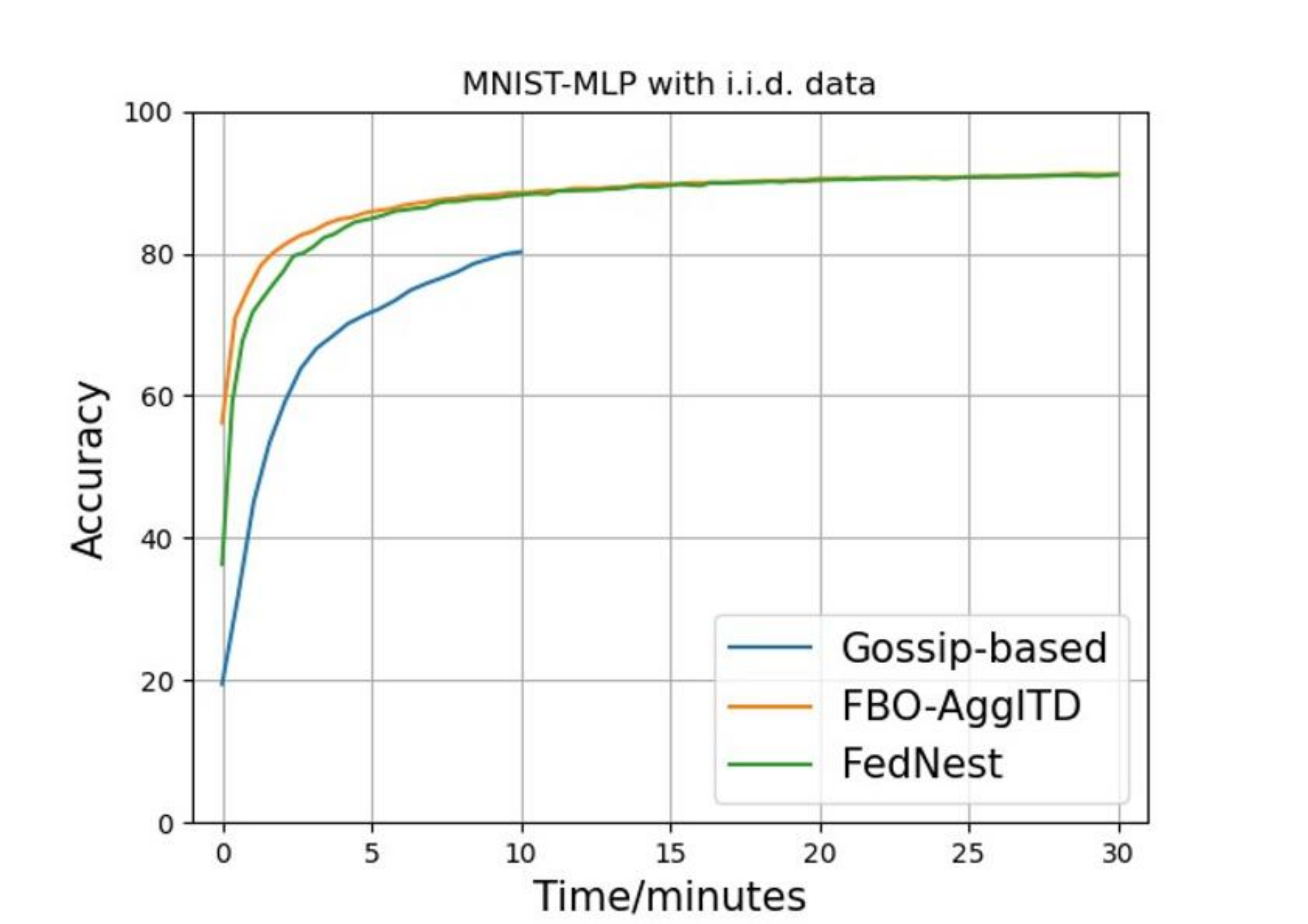}
\caption{Performance of FedNest,  FBO-AggITD and gossip-based method on MNIST when the backbone is chosen as MLP with i.i.d. data.} 
\label{Fig.runtimecompare} 
\end{figure*}


\subsection{Model Architectures}
We first follow the same experiment in \cite{tarzanagh2022fednest}, thus the model is a 2-layer multilayer perceptron (MLP) with 200 hidden units. The
outer problem optimizes the hidden layer with 157,000 parameters, and the inner problem optimizes the output layer with 2,010 parameters. Additionally, for the CIFAR-10-CNN experiment, we use the 7-layer CNN~\citep{lecun1998gradient} model to train CIFAR-10. We optimize the last two fully connected layers' parameters for solving  the lower-level problem and optimize the rest layers' parameters for solving the upper-level problem.
\subsection{Hyperparameter settings}
For all comparison methods, we optimize their hyperparameters via grid search guided by the default values in their source codes, to ensure the best performance given the algorithms are convergent.

\textbf{Parameter selection for the experiments in \Cref{Fig.main1} and \Cref{Fig.runtimecompare}.} For FedNest and FBO-AggITD, we used the same hyperparameter configuration for both the i.i.d.~and non-i.i.d.~settings. In particular, the inner-stepsize is 0.003, the outer-loop stepsize is 0.01, the constant $\lambda=0.01$ and the number of inner-loop steps is $5$. The choice of the number $\tau$ of outer local epochs  and the data setup are indicated in the figures. 
Then the default value for the client participation ratio is $C=0.1$. Here, it is worth mentioning that for all comparison methods, we optimize their hyperparameters via grid search guided by the default values in their source codes, to ensure the best performance given the algorithms are convergent.


\textbf{Parameters selection for the experiments in \Cref{Fig.ratios} and \Cref{Fig.sgd}.} In \Cref{Fig.ratios} and \Cref{Fig.sgd}, the choice of stepsizes and constant $\lambda$ of FedNest and FBO-AggITD is the same as in \Cref{Fig.main1}. For LFedNest, we choose the same hyperparameters  as FedNest and FBO-AggITD, except that in the non-i.i.d.~case, the inner- and outer-stepsizes are set smaller to be $0.001$ and $0.005$ to avoid the overfitting. 
 The number $\tau$ of outer local epochs is set to be $1$ for all cases. In \Cref{Fig.sgd}, the client participation ratio is $C=0.1$, and the update optimizer in the inner loop is the SGD-type FedAvg method rather than FedSVRG. The choice of hyperparameters for \Cref{Fig.mnistcnncompare} is indicated above and  for \Cref{Fig.cifarcompare} the choice of inner step size and the outer step size are 0.002 and 0.01, respectively while the other options keep the same.
\section{Notations}
\noindent 
For simplicity, we remove subscript $k$ as long as  the involved definitions are clear  in the context. In some proof steps, we will use $x$ and $x_+$ (similarly for $y$ and $y_+$) to denote $x_k$ and $x_{k+1}$ (similarly $y_k$ and $y_{k+1}$), where the definitions of $x_k$ and $y_k$ are given in \Cref{algorithm1}. 
Based on \Cref{algorithm1}, we also have the definition of $y_{+}=y^N$. 
We recall and define  useful notations for the ease of presentation. 
\begin{align}\label{auxilary_notations}
&\text{Direct parts.}\quad \widetilde{h}_i^D(x_\upsilon^i,y_+)=\nabla_xF_i(x_\upsilon^i,y_+;\xi_{\upsilon}^i),\;\widetilde{h}_i^D(x)=\nabla_xF_i(x,y_+;\xi_i), \;\bar{\nabla}f_i^D(x,y)=\nabla_xf_i(x,y)   \nonumber
\\&\text{Indirect parts.}\quad  \widetilde{h}_i^I(x) =\lambda(N+1)\nabla_x\nabla_yG_i(x,y^N;\chi_i)\prod_{t=N}^{Q+1}(I-\lambda\nabla_y^2G(x,y^t;u_{t}))\nabla_yF(x,y^Q;\xi_{Q}) \nonumber
\\&\qquad\qquad\qquad\;\;\bar{\nabla}f_i^I(x,y) =\nabla_x\nabla_yg_i(x,y)(\nabla_y^2g(x,y))^{-1}\nabla_yf(x,y),
\end{align}
where $\xi_\upsilon^i$ and $\xi_i$ are different data samples and two crucial components are defined by 
\begin{align*}
\nabla_yF(x,y^Q;\xi_Q)=&\frac{1}{|S|}\sum_{i\in S}\nabla_yF_i(x,y^Q;\xi_{i,Q}),\;\;\nabla_y^2G(x,y^t;u_t)=\frac{1}{|S|}\sum_{i\in S}\nabla_x\nabla_yG_i(x,y^t;u_{i,t}).
\end{align*}
Based on the notations in \cref{auxilary_notations},  
we also recall the important forms of our stochastic hypergradient estimate  $\widetilde{h}(x)$ constructed by the proposed AggITD method as well as its expectation form $\Bar{h}(x)={\E}[\widetilde{h}(x)|x,y_+]$, and an auxiliary hypergradient notation $\bar\nabla f(x,y_+)$, respectively. 
\begin{align}\label{notations}
\widetilde{h}(x)=&\frac{1}{|S|}\sum_{i\in S}\widetilde{h}_i(x)=\frac{1}{|S|}\sum_{i\in S}[\widetilde{h}_i^D(x)-\widetilde{h}_i^I(x)]=\widetilde{h}^D(x)-\widetilde{h}^I(x)\nonumber
\\=&\nabla_xF(x,y_+;\xi)
-\lambda(N+1)\nabla_x\nabla_yG(x,y^N;\chi)\prod_{t=N}^{Q+1}(I-\lambda\nabla_y^2G(x,y^t;u_t))\nabla_yF(x,y^Q;\xi_Q)\nonumber
\\\bar{h}(x)=&\frac{1}{|S|}\sum_{i\in S}\bar{h}_i(x)=\frac{1}{|S|}\sum_{i\in S}[\bar{h}_i^D(x)-\bar{h}_i^I(x)]=\bar{h}^D(x)-\bar{h}^I(x)\nonumber
\\=&\nabla_xf(x,y_+)-\lambda\nabla_x\nabla_yg(x,y^N)\sum_{Q=0}^{N}\prod_{t=N}^{Q+1}(I-\lambda\nabla_y^2g(x,y^t))\nabla_yf(x,y^Q)
\nonumber
\\\bar{\nabla}f(x,y)=&\frac{1}{|S|}\sum_{i\in S}\bar{\nabla}f_i(x,y)=\frac{1}{|S|}\sum_{i\in S}[\bar{\nabla}f_i^D(x,y)-
\bar{\nabla}f_i^I(x,y)]=\bar{\nabla}f^D(x,y)-
\bar{\nabla}f^I(x,y)\nonumber
\\=&\nabla_xf(x,y)-\nabla_x\nabla_yg(x,y)(\nabla_y^2g(x,y))^{-1}\nabla_yf(x,y),
\end{align}
Based on \cref{notations}, it is noted that the hypergradient $\nabla f(x)=\bar{\nabla} f(x,y_{(x)}^\ast)$.   
By the analysis in \citealt{ghadimi2018approximation} and \citealt{chen2021closing}, the following lemma  characterizes the continuity and smoothness properties of  the inner and outer functions $(f_i,g_i) $ for all $ i\in S$. 
\begin{lemma}\label{lemma1}
Suppose \Cref{assum:muconvex}-\Cref{high_lip} hold, for all $x_1$ and $x_2$:
\begin{align}\label{L_fL_yL_yx}
\begin{split}
\|\nabla f(x_1)-\nabla f(x_2)\|\leq&L_f^\prime\|x_1-x_2\|,
\\\|y_{(x_1)}^\ast-y_{(x_2)}^\ast\|\leq& L_y\|x_1-x_2\|,\, 
\\\|\nabla y _{(x_1)}^\ast-\nabla y_{(x_2)}^\ast\|\leq& L_{yx}\|x_1-x_2\|.
\end{split}
\end{align}
Besides, for all $i\in S,x_1,x_2$ and $y$, we have
\begin{align*}
\|\bar{\nabla}f_i(x_1,y)-\bar{\nabla}f_i(x_1,y_{(x_1)}^\ast)\|\leq&M_f\|y_{(x_1)}^\ast-y\|
\\\|\bar{\nabla}f_i(x_2,y)-\bar{\nabla}f_i(x_1,y)\|\leq& M_f\|x_2-x_1\|,
\end{align*}
where all constants are given by 
\begin{align}
\begin{split}
L_y:=&\frac{L_{g}}{\mu}=\mathcal{O}(\kappa_g)
\\L_{yx}:=&\frac{{\rho}+{\rho}L_y}{\mu}+\frac{L_{g}({\rho}+{\rho}L_y)}{\mu^2}=\mathcal{O}(\kappa_g^3)
\\M_f:=&L_{f}+\frac{L_{g}L_{f}}{\mu}+\frac{M}{\mu}({\rho}+\frac{L_{g}{\rho}}{\mu})=\mathcal{O}(\kappa_g^2)
\\L_f^\prime:=&L_{f}+\frac{L_{g}(L_{f}+M_f)}{\mu}+\frac{M}{\mu}({\rho}+\frac{L_{g}{\rho}}{\mu})=\mathcal{O}(\kappa_g^3)
\end{split}
\end{align}
where all other Lipschitzness constants are provided in Assumptions~\ref{assum:muconvex}-\ref{bounded_variance}. 
\end{lemma}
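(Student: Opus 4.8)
The plan is to treat all three Lipschitz statements in \cref{L_fL_yL_yx} as consequences of the implicit characterization of the lower-level solution map, and then to obtain the auxiliary bounds on $\bar{\nabla} f_i$ and the final constant $L_f^\prime$ by propagating these through the product structure of the hypergradient. First I would invoke \Cref{assum:muconvex}, which makes $g(x,\cdot)$ $\mu$-strongly convex, so that $y_{(x)}^\ast$ is the unique minimizer and satisfies the stationarity condition $\nabla_y g(x,y_{(x)}^\ast)=0$. Differentiating this identity in $x$ and using the invertibility of $\nabla_y^2 g$ yields the closed form $\nabla y_{(x)}^\ast=-[\nabla_y^2 g(x,y_{(x)}^\ast)]^{-1}\nabla_x\nabla_y g(x,y_{(x)}^\ast)$. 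Bounding the inverse Hessian by $1/\mu$ (strong convexity) and the Jacobian $\nabla_x\nabla_y g$ by $L_g$ (the $L_g$-Lipschitzness of $\nabla G_i$ in \Cref{assum:lip}) immediately gives $\|\nabla y_{(x)}^\ast\|\le L_g/\mu=L_y$, and the mean-value inequality upgrades this to the stated Lipschitz bound on $y_{(x)}^\ast$.

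For the Lipschitz continuity of the Jacobian $\nabla y_{(x)}^\ast$, I would write $\nabla y_{(x)}^\ast=-A(x)^{-1}B(x)$ with $A(x)=\nabla_y^2 g(x,y_{(x)}^\ast)$ and $B(x)=\nabla_x\nabla_y g(x,y_{(x)}^\ast)$, and split the difference as $A(x_1)^{-1}(B(x_1)-B(x_2))+(A(x_1)^{-1}-A(x_2)^{-1})B(x_2)$. The resolvent identity $A(x_1)^{-1}-A(x_2)^{-1}=A(x_1)^{-1}(A(x_2)-A(x_1))A(x_2)^{-1}$ turns the second term into something controlled by $\|A(x_1)-A(x_2)\|$. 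Since $A$ and $B$ are compositions of the $\rho$-Lipschitz second-order maps (\Cref{high_lip}) with the $L_y$-Lipschitz map $x\mapsto y_{(x)}^\ast$ just established, both $\|A(x_1)-A(x_2)\|$ and $\|B(x_1)-B(x_2)\|$ are bounded by $\rho(1+L_y)\|x_1-x_2\|$. Collecting the factors $1/\mu$, $1/\mu^2$ and $\|B\|\le L_g$ reproduces exactly $L_{yx}=\frac{\rho+\rho L_y}{\mu}+\frac{L_g(\rho+\rho L_y)}{\mu^2}$.

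The auxiliary bounds on $\bar{\nabla} f_i$ follow the same product-rule bookkeeping applied to its indirect part $\nabla_x\nabla_y g_i(x,y)(\nabla_y^2 g(x,y))^{-1}\nabla_y f(x,y)$, now perturbing only $y$ (at fixed $x$) or only $x$ (at fixed $y$). Writing the three factors as $J,H,v$ with $\|J\|\le L_g$, $\|H\|\le 1/\mu$, $\|v\|\le M$, and using that $J$ is $\rho$-Lipschitz, $H$ is $\rho/\mu^2$-Lipschitz (again via the resolvent identity), and $v$ is $L_f$-Lipschitz, the telescoping product bound gives the indirect contribution $\frac{M\rho}{\mu}+\frac{ML_g\rho}{\mu^2}+\frac{L_g L_f}{\mu}$; adding the direct part's constant $L_f$ recovers $M_f$. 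The symmetry of these Lipschitz constants in $x$ and $y$ yields the same $M_f$ for both auxiliary inequalities.

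Finally, since $\nabla f(x)=\bar{\nabla} f(x,y_{(x)}^\ast)$, I would split $\nabla f(x_1)-\nabla f(x_2)=[\bar{\nabla} f(x_1,y_{(x_1)}^\ast)-\bar{\nabla} f(x_2,y_{(x_1)}^\ast)]+[\bar{\nabla} f(x_2,y_{(x_1)}^\ast)-\bar{\nabla} f(x_2,y_{(x_2)}^\ast)]$ and apply the two auxiliary bounds together with $\|y_{(x_1)}^\ast-y_{(x_2)}^\ast\|\le L_y\|x_1-x_2\|$, obtaining $L_f^\prime=M_f(1+L_y)$, which is algebraically identical to the stated form $L_f^\prime=L_f+\frac{L_g(L_f+M_f)}{\mu}+\frac{M}{\mu}(\rho+\frac{L_g\rho}{\mu})$. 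The main obstacle is the Jacobian-Lipschitz step: managing the difference of inverse Hessians and chaining the $\rho$- and $L_y$-Lipschitz estimates through the three-factor product without loosening the constants is where essentially all the care lies, whereas the remaining steps are routine once $L_y$ and $L_{yx}$ are in hand.
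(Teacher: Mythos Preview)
Your proposal is correct and in fact provides considerably more detail than the paper itself: the paper does not prove \Cref{lemma1} but simply attributes it to the analyses in \citealt{ghadimi2018approximation} and \citealt{chen2021closing}. The argument you outline---obtaining $L_y$ from the implicit-function formula $\nabla y_{(x)}^\ast=-[\nabla_y^2 g]^{-1}\nabla_x\nabla_y g$, deriving $L_{yx}$ via the resolvent identity applied to the difference of inverse Hessians, recovering $M_f$ by a three-term telescoping product bound on the indirect part of $\bar\nabla f_i$, and finally obtaining $L_f^\prime=M_f(1+L_y)$ by splitting $\nabla f(x_1)-\nabla f(x_2)$ along the $x$- and $y$-directions---is precisely the route those cited references take, and your constants match the stated ones exactly (your observation that $L_f^\prime=M_f(1+L_y)$ is an algebraic rewriting of the paper's expression is correct). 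There is no gap.
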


\section{Proof of \Cref{propsfrvcc} and \Cref{propecevd}}
For the estimator, recall from \cref{notations} that the indirect part is given by   $${\widetilde{h}}^I(x)=\lambda(N+1)\nabla_x\nabla_yG(x,y^N;\chi)\prod_{t=N}^{Q+1}(I-\lambda\nabla_y^2G(x,y^t;u_t))\nabla_yF(x,y^Q;\xi_Q),$$
where $Q$ is drawn form $\{0,...,N\}$ uniformly at random.
\subsection{Proof of \Cref{propsfrvcc}}
\begin{proof}\label{eq:breakhaars}
First, based on the definition of $\widetilde{h}^I(x)$ in \cref{notations} and conditioning on $x,y^N$, we have
\begin{align}
{\E}[\widetilde{h}^I(x)]=&{\E}\Big[\lambda(N+1)\nabla_x\nabla_yG(x,y^N;\chi)\prod_{t=N}^{Q+1}(I-\lambda\nabla_y^2G(x,y^t;u_t))\nabla_yF(x,y^Q;\xi_Q)\Big]\nonumber
\\\overset{(i)}=&\lambda\nabla_x\nabla_yg(x,y^N)\sum_{Q=0}^{N}\prod_{t=N}^{Q+1}(I-\lambda\nabla_y^2g(x,y^t))\nabla_yf(x,y^Q),
\end{align}
where $(i)$ follows from the fact that $Q$ is drawn from $\{0,...,N\}$ uniformly at random and from the independence among $\chi,u_t, \xi_Q $ for $t=1,...,N$. Then the estimation bias of $\widetilde{h}^I(x)$ is bounded by
\begin{align}\label{estimator2bias}
\|{\E}&[\widetilde{h}^I(x)]-\nabla_x\nabla_yg(x,y^N)(\nabla_y^2g(x,y^N))^{-1}\nabla_yf(x, y^N)\|^2\nonumber
\\{\leq}&\Big[\|\nabla_x\nabla_yg(x,y^N)\|^2\Big\|\lambda\sum_{Q=0}^N\prod_{t=N}^{Q+1}(I-\lambda\nabla_y^2g(x,y^t))\nabla_yf(x,y^Q)-(\nabla_y^2g(x,y^N))^{-1}\nabla_yf(x, y^N)\Big\|^2\Big]\nonumber
\\\overset{(i)}{\leq}&L_g^2\Big[\Big\|\lambda\sum_{Q=0}^N\prod_{t=N}^{Q+1}(I-\lambda\nabla_y^2g(x,y^t))\nabla_y f(x,y^Q)-(\nabla_y^2g(x,y^N))^{-1}\nabla_yf(x,y^N)\Big\|^2\Big]\nonumber
\\=&L_g^2\Big[\Big\|\lambda\sum_{Q=0}^N\prod_{t=N}^{Q+1}(I-\lambda\nabla_y^2g(x,y^t))\nabla_yf(x,y^Q)-\lambda\sum_{Q=0}^N\prod_{t=N}^{Q+1}(I-\lambda\nabla_y^2g(x,y^t))\nabla_yf(x,y^N)\nonumber
\\&+\lambda\sum_{Q=0}^N\prod_{t=N}^{Q+1}(I-\lambda\nabla_y^2g(x,y^t))\nabla_yf(x,y^N)-(\nabla_y^2g(x,y^N))^{-1}\nabla_yf(x,y^N)\Big\|^2\Big]\nonumber
\\\overset{(ii)}{\leq}&2\lambda^2 L_g^2\Big[\Big\|\sum_{Q=0}^N\prod_{t=N}^{Q+1}(I-\lambda\nabla_y^2g(x,y^t))[\nabla_yf(x,y^Q)-\nabla_yf(x,y^N)]\Big\|^2\Big]\nonumber
\\&+2L_g^2\Big[\Big\|\lambda\sum_{Q=0}^N\prod_{t=N}^{Q+1}(I-\lambda\nabla_y^2g(x,y^t))-(\nabla_yg(x,y^N))^{-1}\Big\|^2\|\nabla_yf(x,y^N)\|^2\Big]\nonumber
\\\overset{(iii)}{\leq}&2\lambda^2 L_f^2L_g^2\underbrace{(N+1)\sum_{Q=0}^{N}(1-\lambda\mu)^{2N-2Q}[\|y^Q-y^N\|^2]}_\text{\textcircled{1}}\nonumber
\\&+2L_g^2M^2\Big\|\lambda\sum_{Q=0}^N\prod_{t=N}^{Q+1}(I-\lambda\nabla_y^2g(x,y^t))-(\nabla_yg(x,y^N))^{-1}\Big\|^2,
\end{align}
where
$(i)$ 
uses \Cref{assum:lip}, $(ii)$ follows from Young's inequality, and $(iii)$ follows from \Cref{lemma1} and \Cref{assum:lip}. Then, unconditioning on $x_k,y_k^N$ yields
\begin{align}
    \E\Big[&\|{\E}[\widetilde{h}^I(x)]-\nabla_x\nabla_yg(x,y^N)(\nabla_y^2g(x,y^N))^{-1}\nabla_yf(x, y^N)\|^2\,|\,x,y^N\Big] \nonumber
   \\ \leq&2\lambda^2 L_f^2L_g^2\underbrace{(N+1)\sum_{Q=0}^{N}(1-\lambda\mu)^{2N-2Q}\E[\|y^Q-y^N\|^2]}_\text{\textcircled{1}}\nonumber
\\&+2L_g^2M^2\E\Big[\Big\|\lambda\sum_{Q=0}^N\prod_{t=N}^{Q+1}(I-\lambda\nabla_y^2g(x,y^t))-(\nabla_yg(x,y^N))^{-1}\Big\|^2\Big].
\end{align}
Based on Theorem 4 in \citealt{mitra2021linear}, 
for all $t \in [0,...,N-1]$, we obtain
\begin{align}\label{eq:qmitras}
\E[\|y^{t+1}-y_{(x)}^\ast\|^2]\leq(1-\frac{\beta\mu}{2})\E[\|y^t-y_{(x)}^\ast\|^2]+25\beta^2\sigma_g^2
\end{align}
which, by telescoping over $t$ from $0$ to $Q-1$ for any $Q\in\{0,...,N\}$, yields 
\begin{align}\label{mitra}
\E[\|y^Q-y_{(x)}^\ast\|^2]\leq(1-\frac{\beta\mu}{2})^Q\E[\|y-y_{(x)}^\ast\|^2]+25N\beta^2\sigma_g^2.
\end{align}
Now we provide  the upper bound of the first term on the RHS
of \cref{estimator2bias} as 
\begin{align}\label{textcircled2}
\text{\textcircled{1}}\leq&(N+1)\sum_{Q=0}^{N}(1-\lambda\mu)^{2N-2Q}\Big[2\E[\|y^Q-y_{(x)}^\ast\|^2]+2\E[\|y^N-y_{(x)}^\ast\|^2]\Big]\nonumber
\\\overset{(i)}{\leq}&2(N+1)\sum_{Q=0}^{N}(1-\lambda\mu)^{2N-2Q}\Big[(1-\frac{\beta\mu}{2})^N\E[\|y-y_{(x)}^\ast\|^2] \nonumber
\\&+(1-\frac{\beta\mu}{2})^Q\E[\|y-y_{(x)}^\ast\|^2]+50N\beta^2\sigma_g^2\Big]\nonumber
\\\overset{(ii)}{\leq}&2(N+1)\Big(\frac{(1-\frac{\beta\mu}{2})^N}{\lambda\mu}+\frac{(1-\frac{\beta\mu}{2})^N}{1-\frac{(1-\lambda\mu)^2}{1-\frac{\beta\mu}{2}}}\Big)\E[\|y-y_{(x)}^\ast\|^2]+\frac{100N(N+1)\beta^2\sigma_g^2}{\lambda\mu}\nonumber
\\\leq&2\underbrace{(N+1)\frac{3(1-\frac{\beta\mu}{2})^N}{\lambda\mu}}_\text{$\alpha_3(N)$}\E[\|y-y_{(x)}^\ast\|^2]+\frac{100N(N+1)\beta^2\sigma_g^2}{\lambda\mu},
\end{align}
where $(i)$ follows from \cref{mitra}, $(ii)$ follows because $\frac{(1-\lambda\mu)^2}{1-\frac{\beta\mu}{2}}\leq\frac{1-\lambda\mu}{1-\frac{\beta\mu}{2}}\leq\frac{1-\lambda\mu}{1-\frac{\lambda\mu}{2}}\leq1$ as the selection that $\beta<\lambda\leq\frac{1}{L_g}$. 
Then we provide the upper bound of the second term in \cref{estimator2bias} as 
\begin{align}\label{MN-t2start}
\E\Big[\Big\|\lambda&\sum_{Q=0}^{N}\prod_{t=N}^{Q+1}(I-\lambda\nabla_y^2g(x,y^t))-(\nabla_y^2g(x,y^N))^{-1}\Big\|^2\Big]\nonumber
\\=&\lambda^2\E\Big[\Big\|\sum_{Q=0}^{N}\prod_{t=N}^{Q+1}(I-\lambda\nabla_y^2g(x,y^t))-\sum_{Q=0}^{N}(I-\lambda\nabla_y^2g(x,y^N))^{N-Q} \nonumber
\\&\qquad-\sum_{Q=N+1}^{\infty}(I-\lambda\nabla_y^2g(x,y^N))^Q\Big\|^2\Big]\nonumber
\\\overset{(i)}\leq&2\lambda^2(N+1)\sum_{Q=0}^{N}\E\Big[\Big\|\underbrace{\prod_{t=N}^{Q+1}(I-\lambda\nabla_y^2g(x,y^t))-(I-\lambda\nabla_y^2g(x, y^N))^{N-Q}\Big\|^2}_\text{$M_{N-Q}^2$}\Big]+\frac{2(1-\lambda\mu)^{2N+2}}{\mu^2}	
\end{align}
where $(i)$ 
follows from the Young's inequality. Now we 
provide the upper bound of the term $M_{N-Q}$ as 
\begin{align}\label{eq:mqt_eqpp}
M_{N-Q} =&\Big\|\prod_{t=N}^{Q+1}(I-\lambda\nabla_y^2g(x, y^t))-(I-\lambda\nabla_y^2g(x, y^N))^{N-Q}\Big\|\nonumber
\\=&\Big\|(I-\lambda\nabla_y^2g(x,y^N))\Big[\prod_{t=N}^{Q+2}(I-\lambda\nabla_y^2g(x,y^t))-(I-\lambda\nabla_y^2g(x,y^N))^{N-Q-1}\Big]\nonumber
\\&+(\lambda\nabla_y^2g(x,y^N)-\lambda\nabla_y^2g(x,y^{Q+1}))\prod_{t=N}^{Q+2}(I-\lambda\nabla_y^2g(x,y^t))\Big\|\nonumber
\\\overset{(i)}{\leq}&(1-\lambda\mu)\underbrace{\Big\|\prod_{t=N}^{Q+2}(I-\lambda\nabla_y^2g(x,y^t))-(I-\lambda\nabla_y^2g(x,y^N))^{N-Q-1}\Big\|}_\text{$M_{N-Q-1}$} \nonumber
\\&+\lambda\rho(1-\lambda\mu)^{N-Q-1}\|y^N-y^{Q+1}\|\nonumber
\\\overset{(ii)}{\leq}&(1-\lambda\mu)^{N-Q}M_0+\lambda\rho(1-\lambda\mu)^{N-Q-1}\sum_{\tau=Q+1}^{N}\|y^\tau-y^N\|\nonumber
\\\overset{(iii)}{\leq}&\lambda\rho(1-\lambda\mu)^{N-Q-1}\sum_{\tau=Q+1}^{N}\|y^\tau-y^N\|,
\end{align}	
where $(i)$ follows from the \Cref{assum:muconvex} and \Cref{high_lip}, $(ii)$ can be obtained after telescoping over $t$ from 0 to $N-1$ and $(iii)$ follows from that $M_0 = 0$. Then substitute \cref{eq:mqt_eqpp} into \cref{MN-t2start}, we obtain,
\begin{align}\label{MN-t2bound}
(N+1)\sum_{Q=0}^N&\E[M_{N-Q}^2] \nonumber
\\\leq&\lambda^2\rho^2(N+1)\sum_{Q=0}^N\Big[(1-\lambda\mu)^{2N-2Q-2}\Big](N-Q)\sum_{\tau=Q+1}^{N}\Big[2\Big(1-\frac{\beta\mu}{2}\Big)^\tau\E[\|y-y_{(x)}^\ast\|^2]\nonumber
\\&+2\Big(1-\frac{\beta\mu}{2}\Big)^N\E[\|y-y_{(x)}^\ast\|^2]+50N\beta^2\sigma_g^2+50\tau\beta^2\sigma_g^2\Big]\nonumber
\\\leq&\lambda^2\rho^2(N+1)\sum_{Q=0}^N(1-\lambda\mu)^{2N-2Q-2}(N-Q)\Big[\frac{4(1-\frac{\beta\mu}{2})^{Q+1}}{\beta\mu}\E[\|y-y_{(x)}^\ast\|^2]\nonumber
\\&+2(N-Q)\Big(1-\frac{\beta\mu}{2}\Big)^N\E[\|y-y_{(x)}^\ast\|^2+100N(N-Q)\beta^2\sigma_g^2\Big]\nonumber
\\=&2(N+1)\lambda^2\rho^2\sum_{Q=0}^{N}(1-\lambda\mu)^{2N-2Q-2}(N-Q)^2(1-\frac{\beta\mu}{2})^N\E[\|y-y_{(x)}^\ast\|^2]\nonumber
\\&+4(N+1)\lambda^2\rho^2\sum_{Q=0}^{N}(1-\lambda\mu)^{2N-2Q-2}(N-Q)\frac{(1-\frac{\beta\mu}{2})^{Q+1}}{\beta\mu}\E[\|y-y_{(x)}^\ast\|^2]\nonumber
\\&+100\beta^2\sigma_g^2\lambda^2\rho^2N(N+1)\sum_{Q=0}^N(1-\lambda\mu)^{2N-2Q-2}(N-Q)^2\nonumber
\\<&\underbrace{4(N+1)(1-\frac{\beta\mu}{2})^N\Big(\frac{\rho^2}{\lambda\mu^3}+\frac{4\rho^2}{\beta\mu^3}\Big)}_\text{$\alpha_1(N)$}\E[\|y-y_{(x)}^\ast\|^2]\nonumber
\\&+100\beta^2\rho^2\sigma_g^2\underbrace{\Big[\frac{N(N+1)(1+(1-\lambda\mu)^2)}{\lambda\mu^3}\Big]}_\text{$\alpha_2(N)$},
\end{align}
where the last inequality follows because 
$\sum_{t=0}^{N}(1-\lambda\mu)^{2N-2t-2}(N-t)^2<\frac{1+(1-\lambda\mu)^2}{\lambda^3\mu^3}$ and $\sum_{t=0}^{N}(1-\lambda\mu)^{2N-2t-2}(N-t)(1-\frac{\beta\mu}{2})^{t+1}<\frac{1}{\big(1-\frac{(1-\lambda\mu)^2}{1-\frac{\beta\mu}{2}}\big)^2}\leq\frac{(2-\lambda\mu)^2}{\lambda^2\mu^2}$. 
Substituting~\cref{MN-t2bound} into \cref{MN-t2start}, and applying \cref{MN-t2bound} and \cref{textcircled2} to \cref{estimator2bias}, 
we have 
\begin{align*}
\E\big[\|{\E}&[\widetilde{h}^I(x)]-\nabla_x\nabla_yg(x,y^N)(\nabla_y^2g(x,y^N)^{-1})\nabla_yf(x, y^N)\|^2\,|\,x,y^N\big]\nonumber
\\\leq&4\lambda^2L_f^2L_g^2\alpha_3(N)\E[\|y-y_{(x)}^\ast\|^2]+\frac{200\lambda\beta^2\sigma_g^2L_f^2L_g^2N(N+1)}{\mu}\nonumber\\
&+\frac{4L_g^2M^2(1-\lambda\mu)^{2N+2}}{\mu^2}+4\lambda^2L_g^2M^2\alpha_1(N)\E[\|y-y_{(x)}^\ast\|^2]+400\lambda^2\beta^2L_g^2M^2\sigma_g^2\rho^2\alpha_2(N)\nonumber
\\=&[4\lambda^2L_g^2M^2\alpha_1(N)+4\lambda^2L_f^2L_g^2\alpha_3(N)]\E[\|y-y_{(x)}^\ast\|^2]+\frac{4L_g^2M^2(1-\lambda\mu)^{2N+2}}{\mu^2}\nonumber\\
&+400\lambda^2\beta^2L_g^2M^2\sigma_g^2\rho^2\alpha_2(N)+\frac{200\lambda\beta^2\sigma_g^2L_f^2L_g^2N(N+1)}{\mu},
\end{align*}
which completes the proof. 
\end{proof}
\subsection{Proof of \Cref{propecevd}}
Based on the definition of $\widetilde{h}^I(x)$ and $\Bar{h}^I_i(x)$, using the fact that Var$(X)\leq \mathbb{E}[X^2]$, and conditioning on $x,y^N$, we have 
\begin{align}\label{eq:varincceBBs}
\E\|\widetilde{h}_i^I&(x)-\Bar{h}^I_i(x)\|^2\leq \E\|\widetilde{h}_i^I(x)\|^2\nonumber
\\\leq&\E\Big\|\lambda(N+1)\nabla_x\nabla_yG_i(x,y^N;\chi)\prod_{t=N}^{Q+1}(I-\lambda\nabla_y^2G(x,y^t;u_t))\nabla_yF(x,y^Q;\xi_Q)\Big\|^2\nonumber
\\\overset{(i)}\leq&\lambda^2(N+1)^2L_g^2M^2{\E}\Big\|\prod_{t=N}^{Q+1}(I-\lambda\nabla_y^2G(x,y^t;u_t))\Big\|^2 \nonumber
\\\overset{(ii)}\leq&\lambda^2(N+1)^2L_g^2M^2{\E}_Q(1-\lambda\mu)^{2(N-Q)} \nonumber
\\=&\lambda^2(N+1)L_g^2M^2\sum_{Q=0}^N(1-\lambda\mu)^{2Q} = \lambda^2(N+1)L_g^2M^2\frac{1-(1-\lambda\mu)^{2N}}{1-(1-\lambda\mu)^2} \nonumber
\\\overset{(iii)}\leq & \frac{\lambda(N+1)L_g^2M^2}{\mu},
\end{align}
where $(i)$ follows from \Cref{assum:lip}, $(ii)$ follows from \Cref{assum:muconvex} and $(iii)$ follows from $\lambda\leq\frac{1}{L_g}$. Then, the first part is proved. 
For the second part, conditioning on $x,y_+$, we have 
\begin{align*}
\E\|&\widetilde{h}_i^D(x_{\upsilon}^i,y_+)-\widetilde{h}_i^D(x_0^i,y_+)+\widetilde{h}^D(x)-\widetilde{h}^I(x)\|^2 \nonumber
\\&\leq 4\E\|\widetilde{h}_i^D(x_{\upsilon}^i,y_+)\|^2 + 4\E\|\widetilde{h}_i^D(x_0^i,y_+)\|^2+4\E\Big\|\frac{1}{|S|}\sum_{i\in S}\widetilde{h}^D_i(x)\Big\|^2 +4\E\Big\|\frac{1}{|S|}\sum_{i\in S}\widetilde{h}_i^I(x)\Big\|^2 \nonumber
\\&\overset{(i)}\leq 8M^2+ 4\E\|\widetilde{h}^D_i(x)\|^2 + 4\E\|\widetilde{h}^I_i(x)\|^2 \nonumber
\\&\overset{(ii)}\leq 12M^2 + \frac{4\lambda(N+1)L_g^2M^2}{\mu},
\end{align*}
    where $(i)$ follows from \Cref{assum:lip} and $(ii)$ follows from \cref{eq:varincceBBs}. Then, the proof is complete. 
\section{Proof of \Cref{Theorem2-mainB} and \Cref{corollary2}}
We now provide some auxiliary lemmas to characterize the \Cref{Theorem2-mainB} and \Cref{corollary2}.
\begin{lemma}[Restatement of \Cref{lemma:descentl}]\label{lemma9}
Suppose Assumptions~\ref{assum:muconvex}-\ref{bounded_variance} are satisfied. Let $y^{\ast}=\argmin_{y}g(x,y)$. Further, we set $\lambda\leq\min\{10, \frac{1}{L_g}\}$, $\alpha^i=\frac{\alpha}{\tau_i}$ with $\tau_i\geq1$ for some positive $\alpha$ and $\beta^i=\frac{\beta}{\tau_i}$, where $\beta\leq\min\{1,\lambda,\frac{1}{6L_{g}}\}$ $\forall i \in S$. Then, we have the following inequality 
\begin{align}
\E[f(x_+)]-\E[f(&x)]\leq-\frac{\alpha}{2}\E[\|\nabla f(x)\|^2]+4\alpha^2\sigma_h^2L_f^\prime+4\alpha^2\sigma_f^2L_f^\prime+2\alpha^2M^2L_f^\prime\nonumber
\\&-\frac{\alpha}{2}(1-4\alpha L_f^\prime)\E\Big[\Big\|\frac{1}{m}\sum_{i=1}^{m}\frac{1}{\tau_i}\sum_{\upsilon=0}^{\tau_i-1}\big(\bar{h}_i^D(x_{\upsilon}^i,y_+)-\bar{h}^I(x)\big)\Big\|^2\Big]\nonumber
\\&+\frac{3\alpha}{2}\Big[\big(4\lambda^2L_g^2M^2\alpha_1(N)+4\lambda^2L_f^2L_g^2\alpha_3(N)\big)\E[\|y-y^\ast\|^2]+\frac{4L_g^2M^2(1-\lambda\mu)^{2N+2}}{\mu^2}\nonumber\\
&+400\lambda^2\beta^2L_g^2M^2\sigma_g^2\rho^2\alpha_2(N)+\frac{200\lambda\beta^2\sigma_g^2L_f^2L_g^2N(N+1)}{\mu}
\nonumber
\\&+\frac{M_f^2}{m}\sum_{i=1}^{m}\frac{1}{\tau_i}\sum_{\upsilon=0}^{\tau_i-1}\E[\|x_{\upsilon}^i-x\|^2]+M_f^2\E[\|y_+-y^\ast\|^2]\Big]
\end{align}
where $\Bar{h}^I(x)={\E}[\widetilde{h}^I(x)|x,y_+]$, $\Bar{h}_i^D(x_{\upsilon}^i,y_+)={\E}[\widetilde{h}_i^D(x_{\upsilon}^i,y_+)|x_\upsilon^i]$ and $\alpha_1(N), \alpha_2(N), \alpha_3(N)$ are defined in \Cref{propsfrvcc}.
\end{lemma}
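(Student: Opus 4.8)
The plan is to run the standard descent argument from the $L_f^\prime$-smoothness of $f$ (\Cref{lemma1}) against the direction realized by \textbf{One-Round-Upper} (\Cref{algorithm4}), and then to separate the resulting cross term and second-moment term using the variance-reduction structure of the local updates together with the bias and variance estimates of \Cref{propsfrvcc,propecevd}. First I would record that, with $\alpha^i=\alpha/\tau_i$, the output satisfies $x_+-x=-\frac{\alpha}{m}\sum_{i=1}^m\frac{1}{\tau_i}\sum_{\upsilon=0}^{\tau_i-1}h_{i,\upsilon}$; writing $D:=\frac1m\sum_i\frac1{\tau_i}\sum_\upsilon h_{i,\upsilon}$ and $\bar d:=\frac1m\sum_i\frac1{\tau_i}\sum_\upsilon\bar{h}_i^D(x_\upsilon^i,y_+)-\bar{h}^I(x)$ (the direction obtained after replacing each fresh gradient sample by its conditional mean), smoothness gives
\begin{align*}
\E[f(x_+)]\le \E[f(x)]-\alpha\,\E\langle\nabla f(x),D\rangle+\tfrac{L_f^\prime\alpha^2}{2}\,\E\|D\|^2 .
\end{align*}
The key first observation is that the SVRG control variate makes the base-point contributions cancel under aggregation: conditioning on $x,y_+$, the shared term $\widetilde{h}^D(x)$ has conditional mean $\bar{h}^D(x)=\frac1m\sum_i\nabla_xf_i(x,y_+)$, which exactly offsets $\frac1m\sum_i\frac1{\tau_i}\sum_\upsilon\nabla_xF_i(x,y_+;\xi_\upsilon^i)$ in expectation. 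Hence $\E[D\,|\,x,y_+]=\E[\bar d\,|\,x,y_+]$, and since $\nabla f(x)$ is determined by $x$, the cross term reduces to $-\alpha\,\E\langle\nabla f(x),\bar d\rangle$, with $\bar d$ matching the bracketed quadratic term in the statement.

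I would then apply the polarization identity $\langle\nabla f(x),\bar d\rangle=\tfrac12\|\nabla f(x)\|^2+\tfrac12\|\bar d\|^2-\tfrac12\|\nabla f(x)-\bar d\|^2$ inside the expectation, which produces the leading $-\frac{\alpha}{2}\E\|\nabla f(x)\|^2$, the term $-\frac{\alpha}{2}\E\|\bar d\|^2$, and the gradient-mismatch term $\frac{\alpha}{2}\E\|\nabla f(x)-\bar d\|^2$.

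The heart of the argument is bounding $\|\nabla f(x)-\bar d\|^2$. Using $\nabla f(x)=\bar{\nabla}f(x,y^\ast)$ and the notations of \cref{notations}, I would route it through $\bar{\nabla}f^I(x,y_+)$ and decompose into three pieces: (i) the indirect bias $\bar{h}^I(x)-\bar{\nabla}f^I(x,y_+)$, whose conditional second moment is exactly $B^I(x,y)$ by the definition preceding \Cref{propsfrvcc}; (ii) the client-drift term $\frac1m\sum_i\frac1{\tau_i}\sum_\upsilon\big[\nabla_xf_i(x,y_+)-\nabla_xf_i(x_\upsilon^i,y_+)\big]$; and (iii) the lower-level error, which after regrouping the remaining direct and indirect $y$-differences becomes $\frac1m\sum_i\frac1{\tau_i}\sum_\upsilon\big[\bar{\nabla}f_i(x,y^\ast)-\bar{\nabla}f_i(x,y_+)\big]$. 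By the Lipschitz estimates of \Cref{lemma1}, (ii) is bounded by $M_f^2\|x_\upsilon^i-x\|^2$ and (iii) by $M_f^2\|y_+-y^\ast\|^2$. Applying $\|a+b+c\|^2\le3(\|a\|^2+\|b\|^2+\|c\|^2)$, taking expectations, and multiplying by $\frac{\alpha}{2}$ yields precisely the $\frac{3\alpha}{2}[\,\cdot\,]$ bracket.

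Finally, for the second-moment term $\frac{L_f^\prime\alpha^2}{2}\E\|D\|^2$ I would split $D$ into its mean direction $\bar d$ together with three conditionally-centered noise sources: the aggregated HessIV estimator (second moment $\le\sigma_h^2$ by \Cref{propecevd}), the direct stochastic gradient $\widetilde{h}^D(x)$ (variance $\le\sigma_f^2$ by \Cref{bounded_variance}), and the bounded SVRG residual (controlled by $M$ through the $M$-Lipschitzness of $F_i$ in \Cref{assum:lip}), and bound the sum by a Young-type inequality of the form $\E\|D\|^2\le4\|\bar d\|^2+8\sigma_h^2+8\sigma_f^2+4M^2$. This delivers the $\sigma_h^2,\sigma_f^2,M^2$ terms and a $+2\alpha^2L_f^\prime\|\bar d\|^2$ contribution that merges with the earlier $-\frac{\alpha}{2}\|\bar d\|^2$ into $-\frac{\alpha}{2}(1-4\alpha L_f^\prime)\E\|\bar d\|^2$; substituting the explicit four-term bound on $B^I$ from \Cref{propsfrvcc} then gives the stated inequality. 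I expect the main obstacle to be the decomposition (i)--(iii): because AggITD uses a single global, non-SVRG-corrected HessIV evaluated at the base point $x$ while the direct part is SVRG-corrected and evaluated at the drifted local iterates $x_\upsilon^i$ and the inexact solution $y_+=y^N$, one must insert $\bar{\nabla}f^I(x,y_+)$ so that the bias $B^I$ (defined at $y^N$) appears cleanly, and must regroup the direct and indirect $y$-differences into the \emph{combined} hypergradient difference $\bar{\nabla}f_i(x,y^\ast)-\bar{\nabla}f_i(x,y_+)$ so that the single constant $M_f$ applies. The accompanying variance bookkeeping is delicate because $x_\upsilon^i$ is itself random and correlated with the gradient noise, which is exactly why the precomputed second-moment bounds of \Cref{propecevd} are invoked in place of a naive mean--variance split.
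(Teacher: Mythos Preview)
Your proposal is correct and follows essentially the same route as the paper: descent lemma from $L_f'$-smoothness, conditional cancellation of the SVRG control variate to reduce the cross term to $-\alpha\E\langle\nabla f(x),\bar d\rangle$, polarization, the three-way split of $\|\nabla f(x)-\bar d\|^2$ through $\bar\nabla f(x,y_+)$ (yielding $B^I$, the client-drift piece, and the lower-level piece via \Cref{lemma1}), and the four-way bound $\E\|D\|^2\le 4\|\bar d\|^2+8\sigma_h^2+8\sigma_f^2+4M^2$ for the second-moment term. The only cosmetic difference is ordering: the paper first peels off the control-variate pair $-\widetilde h_i^D(x_0^i,y_+)+\widetilde h^D(x)$ (bounding it by $4\alpha^2M^2$ using $\|F_i\|$-boundedness) and then splits the remaining noise into the $\sigma_f^2$ and $\sigma_h^2$ pieces, whereas you describe the split in a slightly different grouping; the constants and the final merge into $-\tfrac{\alpha}{2}(1-4\alpha L_f')\E\|\bar d\|^2$ are identical.
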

\begin{proof}
From \Cref{algorithm4}, we have, $\forall i \in S$
\begin{align*}
x_+=&x-\frac{1}{m}\sum_{i=1}^{m}\alpha^i\sum_{\upsilon=0}^{\tau_i-1}\big(\widetilde{h}_i^D(x_{\upsilon}^i,y_+)-\widetilde h_i^D(x_0^i,y_+)+\widetilde h^D(x) -\widetilde{h}^I(x)\big),
\end{align*}
where $x_0^i=x$ and the data samples for $\widetilde h_i^D(x_0^i,y_+)$ and $\widetilde h^D(x)$ are different. 
Using the descent lemma yields 
\begin{align}\label{lemma4start}
\E[f(x_+)]-\E[f(x)]\leq&\E[\langle x_+-x,\nabla f(x)\rangle]+\frac{L_f^\prime}{2}\E[\|x_+-x\|^2]\nonumber
\\=&-\E\Big[\Big\langle\frac{1}{m}\sum_{i=1}^m\alpha^i\sum_{\upsilon=0}^{\tau_i-1}\big(\widetilde{h}_i^D(x_{v}^i,y_+)-\widetilde{h}_i^D(x_0^i,y_+)+\widetilde{h}^D(x)-\widetilde{h}^I(x)\big),\nabla f(x)\Big\rangle\Big] \nonumber
\\&+\frac{L_f^\prime}{2}\E\Big[\Big\|\frac{1}{m}\sum_{i=1}^m\alpha^i\sum_{\upsilon=0}^{\tau_i-1}\big(\widetilde{h}_i^D(x_{v}^i,y_+)-\widetilde{h}_i^D(x_0^i,y_+)+\widetilde{h}^D(x)-\widetilde{h}^I(x)\big)\Big\|^2\Big].
\end{align}
We next bound each term of the right hand side (RHS) of \cref{lemma4start}. In specific, for the first term, we have 
\begin{align}\label{eq:emaiscs}
-\E\Big[\Big\langle\frac{1}{m}\sum_{i=1}^m&\alpha^i\sum_{\upsilon=0}^{\tau_i-1}\big(\widetilde{h}_i^D(x_{v}^i,y_+)-\widetilde{h}_i^D(x_0^i,y_+)+\widetilde{h}^D(x)-\widetilde{h}^I(x)\big),\nabla f(x)\Big\rangle\Big]\nonumber
\\{=}&-\E\Big[\E\Big[\Big\langle\frac{1}{m}\sum_{i=1}^m\alpha^i\sum_{\upsilon=0}^{\tau_i-1}\big(\widetilde{h}_i^D(x_{v}^i,y_+)-\widetilde{h}_i^D(x_0^i,y_+)+\widetilde{h}^D(x)-\widetilde{h}^I(x)\big),\nabla f(x)\Big\rangle\Big|x,y_+\Big]\Big]\nonumber
\\\overset{(i)}{=}&-\E\Big[\E\Big[\Big\langle\frac{1}{m}\sum_{i=1}^m\alpha^i\sum_{\upsilon=0}^{\tau_i-1}\big(\widetilde{h}_i^D(x_{v}^i,y_+)-\bar{h}^I(x)\big),\nabla f(x)\Big\rangle\Big|x_\upsilon^i\Big]\Big]\nonumber
\\\overset{(ii)}{=}&-\alpha\E\Big[\Big\langle\frac{1}{m}\sum_{i=1}^m\frac{1}{\tau_i}\sum_{\upsilon=0}^{\tau_i-1}\big(\Bar{h}_i^D(x_{v}^i,y_+)-\bar{h}^I(x)\big),\nabla f(x)\Big\rangle\Big]\nonumber
\\=&-\frac{\alpha}{2}\E\Big[\Big\|\frac{1}{m}\sum_{i=1}^{m}\frac{1}{\tau_i}\sum_{\upsilon=0}^{\tau_i-1}\big(\Bar{h}_i^D(x_{v}^i,y_+)-\bar{h}^I(x)\big)\Big\|^2\Big]-\frac{\alpha}{2}\E[\|\nabla f(x)\|^2]\nonumber
\\&+\frac{\alpha}{2}\E\Big[\Big\|\frac{1}{m}\sum_{i=1}^{m}\frac{1}{\tau_i}\sum_{\upsilon=0}^{\tau_i-1}\big(\Bar{h}_i^D(x_{v}^i,y_+)-\bar{h}^I(x)\big)-\nabla f(x)\Big\|^2\Big],
\end{align}
where $(i)$
follows because $\E\Big[\frac{1}{m}\sum_{i=1}^m\alpha^i\sum_{\upsilon=0}^{\tau_i-1}\big(-\widetilde{h}_i^D(x_0^i,y_+)+\widetilde{h}^D(x)\big)\Big|x,y_+\Big]=0$ and $\Bar{h}^I(x)={\E}[\widetilde{h}^I(x)|x,y_+]$, $(ii)$ follows because
$\Bar{h}_i^D(x_{\upsilon}^i,y_+)={\E}[\widetilde{h}_i^D(x_{\upsilon}^i,y_+)|x_\upsilon^i] $. 
The next step is to upper bound the last term of RHS of \cref{eq:emaiscs}. Based on the notations in \cref{auxilary_notations} and \cref{notations}, we have 
\begin{align}\label{eq:hiDminusHi}
\Big\|\frac{1}{m}\sum_{i=1}^{m}&\frac{1}{\tau_i}\sum_{\upsilon=0}^{\tau_i-1}\big(\Bar{h}_i^D(x_{v}^i,y_+)-\bar{h}^I(x)\big)-\nabla f(x)\Big\|^2\nonumber
\\=&\Big\|\frac{1}{m}\sum_{i=1}^{m}\frac{1}{\tau_i}\sum_{\upsilon=0}^{\tau_i-1}\big(\Bar{h}_i^D(x_{v}^i,y_+)-\bar{h}^I(x)\big)-\bar{\nabla}f(x,y_+)+\bar{\nabla}f(x,y_+)-\nabla f(x)\Big\|^2\nonumber
\\=&\Big\|\frac{1}{m}\sum_{i=1}^{m}\frac{1}{\tau_i}\sum_{\upsilon=0}^{\tau_i-1}\big(\Bar{h}_i^D(x_{v}^i,y_+)-\bar{\nabla}f_i^D(x,y_+)\big)-\bar{h}^I(x)+\bar{\nabla}f^I(x,y_+)+\bar{\nabla}f(x,y_+)-\nabla f(x)\Big\|^2\nonumber
\\\overset{(i)}\leq&3\Big\|\frac{1}{m}\sum_{i=1}^{m}\frac{1}{\tau_i}\sum_{\upsilon=0}^{\tau_i-1}\big(\Bar{h}_i^D(x_{v}^i,y_+)-\bar{\nabla}f_i^D(x,y_+)\big)\|^2 \nonumber
\\&+3\Big\|\bar{h}^I(x)-\bar{\nabla}f^I(x,y_+)\Big\|^2+3\|\bar{\nabla}f(x, y_+)-\nabla f(x)\|^2\nonumber
\\\overset{(ii)}\leq&\frac{3M_f^2}{m}\sum_{i=1}^m\frac{1}{\tau_i}\sum_{\upsilon=0}^{\tau_i-1}\|x_\upsilon^i-x\|^2+3M_f^2\|y_+-y^\ast\|^2+3\|\bar{h}^I(x)-\bar{\nabla}f^I(x,y_+)\|^2
\end{align}
where $(i)$ follows from the Young's inequality and $(ii)$ follows from  \Cref{lemma1} and Assumption \ref{assum:lip}.
Then applying \Cref{propsfrvcc} to \cref{eq:hiDminusHi}, we can obtain
\begin{align}\label{lemma4firstterm}
\E\Big[\Big\|\frac{1}{m}\sum_{i=1}^{m}&\frac{1}{\tau_i}\sum_{\upsilon=0}^{\tau_i-1}\big(\Bar{h}_i^D(x_{v}^i,y_+)-\bar{h}^I(x)\big)-\nabla f(x)\Big\|^2\Big]\nonumber
\\\leq&\frac{12L_g^2M^2(1-\lambda\mu)^{2N+2}}{\mu^2}+[12\lambda^2L_g^2M^2\alpha_1(N)+12\lambda^2L_f^2L_g^3\alpha_3(N)]\E[\|y-y^\ast\|^2]\nonumber
\\&+1200\lambda^2\beta^2L_g^2M^2\sigma_g^2\rho^2\alpha_2(N)+\frac{600\lambda\beta^2\sigma_g^2L_f^2L_g^2N(N+1)}{\mu}\nonumber
\\&+\frac{3M_f^2}{m}\sum_{i=1}^{m}\frac{1}{\tau_i}\sum_{\upsilon=0}^{\tau_i-1}\E[\|x_{\upsilon}^i-x\|^2]+3M_f^2\E[\|y_+-y^\ast\|^2].
\end{align}
Then 
for the second term of \cref{lemma4start}, we have 
\begin{align}\label{lemma4secondterm}
\E\Big[\Big\|&\frac{1}{m}\sum_{i=1}^m \alpha^i\sum_{\upsilon=0}^{\tau_i-1}\big(\widetilde{h}_i^D(x_{v}^i,y_+)-\widetilde h_i^D(x_0^i,y_+)+\widetilde h^D(x) -\widetilde{h}^I(x)\big)\Big\|^2\Big]\nonumber
\\\overset{(i)}{\leq}&2\alpha^2\E\Big[\Big\|\frac{1}{m}\sum_{i=1}^m \frac{1}{\tau_i}\sum_{\upsilon=0}^{\tau_i-1}\big(\widetilde{h}_i^D(x_{v}^i,y_+)-\widetilde{h}^I(x)\big)\Big\|^2\Big]  \nonumber
\\&+2\alpha^2\E\Big[\Big\|\frac{1}{m}\sum_{i=1}^m\frac{1}{\tau_i}\sum_{\upsilon=0}^{\tau_i-1}\big(-\widetilde h_i^D(x_0^i,y_+)+\widetilde h^D(x)\big)\Big\|^2\Big]\nonumber
\\=&2\alpha^2\E\Big[\Big\|\frac{1}{m}\sum_{i=1}^{m}\frac{1}{\tau_i}\sum_{\upsilon=0}^{\tau_i-1}\big(\widetilde{h}_i^D(x_{\upsilon}^i,y_+)-\Bar{h}_i^D(x_{\upsilon}^i,y_+)+\Bar{h}^I(x)-\widetilde{h}^I(x)+\bar{h}_i^D(x_\upsilon^i,y_+)-\bar{h}^I(x)\big)\Big\|^2\Big]\nonumber
\\&+2\alpha^2\E\Big[\Big\|\frac{1}{m}\sum_{i=1}^m\frac{1}{\tau_i}\sum_{\upsilon=0}^{\tau_i-1}\big(\widetilde h_i^D(x_0^i,y_+)\big)\Big\|^2\Big]+2\alpha^2\E[\|\widetilde{h}^D(x)\|^2]\nonumber
\\\overset{(ii)}\leq&4\alpha^2\E\Big[\Big\|\frac{1}{m}\sum_{i=1}^{m}\frac{1}{\tau_i}\sum_{\upsilon=0}^{\tau_i-1}\Bar{h}_i^D(x_{\upsilon}^i,y_+)-\bar{h}^I(x)\Big\|^2\Big] \nonumber
\\&+4\alpha^2\E\Big[\Big\|\frac{1}{m}\sum_{i=1}^{m}\frac{1}{\tau_i}\sum_{\upsilon=0}^{\tau_i-1}\big(\widetilde{h}_i^D(x_{\upsilon}^i,y_+)-\Bar{h}_i^D(x_{\upsilon}^i,y_+)+\Bar{h}^I(x)-\widetilde{h}^I(x)\big)\Big\|^2\Big]+4\alpha^2M^2\nonumber
\\\overset{(iii)}\leq&4\alpha^2\E\Big[\Big\|\frac{1}{m}\sum_{i=1}^{m}\frac{1}{\tau_i}\sum_{\upsilon=0}^{\tau_i-1}\Bar{h}_i^D(x_{\upsilon}^i,y_+)-\bar{h}^I(x)\Big\|^2\Big] \nonumber
\\&+8\alpha^2\E\Big[\Big\|\frac{1}{m}\sum_{i=1}^{m}\frac{1}{\tau_i}\sum_{\upsilon=0}^{\tau_i-1}\big(\widetilde{h}_i^D(x_{\upsilon}^i,y_+)-\Bar{h}_i^D(x_{\upsilon}^i,y_+)\big)\Big\|^2\nonumber
\\&+8\alpha^2\E\Big[\Big\|\frac{1}{m}\sum_{i=1}^{m}\frac{1}{\tau_i}\sum_{\upsilon=0}^{\tau_i-1}\big(\Bar{h}^I(x)-\widetilde{h}^I(x)\big)\Big\|^2\Big]+4\alpha^2M^2\nonumber
\\\overset{(iv)}\leq&4\alpha^2\E\Big[\Big\|\frac{1}{m}\sum_{i=1}^{m}\frac{1}{\tau_i}\sum_{\upsilon=0}^{\tau_i-1}\Bar{h}_i^D(x_{\upsilon}^i,y_+)-\bar{h}^I(x)\Big\|^2\Big]+8\alpha^2\sigma_f^2+8\alpha^2\sigma_h^2+4\alpha^2M^2
\end{align}
where $(i)$ and $(iii)$ follow from the Young's inequality, $(ii)$ follows from Young's inequality and \Cref{assum:lip} and $(iv)$ follows from Assumption \ref{bounded_variance} and lemma \ref{lemma1}.
Plugging \cref{lemma4firstterm} and \cref{lemma4secondterm} into \cref{lemma4start} completes the proof. 
\end{proof}
\begin{lemma}[Restatement of \Cref{le:warm_start}]\label{lemma5}
Suppose Assumptions~\ref{assum:muconvex}-\ref{bounded_variance} are satisfied. Let $y^{\ast}=\argmin_{y}g(x,y)$ and  $y_{(x_+)}^{\ast}=\argmin_{y}g(x_+,y)$. Further, set $\alpha^i=\frac{\alpha}{\tau_i}$ with $\tau_i\geq1$ with some positive $\alpha$, $\forall i \in S$. Then, we have
\begin{align*}
\begin{split}
\E[\|y_+-y_{(x_+)}^\ast\|^2]\leq& b_1(\alpha)\E\Big[\Big\|\frac{1}{m}\sum_{i=1}^{m}\frac{1}{\tau_i}\sum_{\upsilon=0}^{\tau_i-1}\big(\Bar{h}_i^D(x_{v}^i,y_+)-\bar{h}^I(x)\big)\Big\|^2\Big]
+b_2(\alpha)\E[\|y_+-y^\ast\|^2]
\\&+b_3(\alpha)(2\sigma_h^2+2\sigma_f^2+M^2)
\end{split}
\end{align*}
where the constants are given by 
\begin{align*}
\begin{split}
b_1(\alpha):=&4L_y^2\alpha^2+\frac{L_y^2\alpha^2}{4\gamma}+\frac{2L_{yx}\alpha^2}{\eta},\; b_2(\alpha):=1+4\gamma+\frac{\eta L_{yx}D_h^2\alpha^2}{2},\;
b_3(\alpha):=4\alpha^2L_y^2+\frac{2L_{yx}\alpha^2}{\eta} 
\end{split}
\end{align*}
with a flexible parameter $\gamma>0$ decided later.
\end{lemma}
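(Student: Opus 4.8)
The plan is to expand the squared tracking error about the current optimum $y^\ast=y_{(x)}^\ast$ and then control the drift of the moving target from $y_{(x)}^\ast$ to $y_{(x_+)}^\ast$ caused by the upper-level step. I work conditionally on $x,y_+$ and take the outer expectation at the very end. Recall from \Cref{algorithm4} that, writing $\Delta x:=x_+-x$ and $a_{i\upsilon}:=\widetilde h_i^D(x_\upsilon^i,y_+)-\widetilde h_i^D(x_0^i,y_+)+\widetilde h^D(x)-\widetilde h^I(x)$, we have $\Delta x=-\frac1m\sum_{i=1}^m\alpha^i\sum_{\upsilon=0}^{\tau_i-1}a_{i\upsilon}$.

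First I would use the exact identity
\[
\|y_+-y_{(x_+)}^\ast\|^2=\|y_+-y^\ast\|^2+2\langle y_+-y^\ast,\,y^\ast-y_{(x_+)}^\ast\rangle+\|y^\ast-y_{(x_+)}^\ast\|^2 ,
\]
in which $y_+-y^\ast$ is deterministic given $x,y_+$. The first term yields the leading constant $1$ in $b_2$. For the third term, the Lipschitz bound $\|y_{(x)}^\ast-y_{(x_+)}^\ast\|\le L_y\|\Delta x\|$ from \Cref{lemma1} gives $\|y^\ast-y_{(x_+)}^\ast\|^2\le L_y^2\|\Delta x\|^2$, which after taking total expectation and bounding $\E\|\Delta x\|^2\le 4\alpha^2\E\|\bar u\|^2+4\alpha^2(2\sigma_h^2+2\sigma_f^2+M^2)$ (a mean-plus-variance split, with $\bar u:=\frac1m\sum_i\frac1{\tau_i}\sum_\upsilon(\bar h_i^D(x_\upsilon^i,y_+)-\bar h^I(x))$ and the variance controlled by the $\sigma_h^2,\sigma_f^2,M^2$ of \Cref{propecevd}) supplies the $4L_y^2\alpha^2$ contributions to $b_1$ and $b_3$.

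The core of the argument is the cross term. I would expand $y^\ast-y_{(x_+)}^\ast=-\nabla y_{(x)}^\ast\,\Delta x+R$ with remainder $R=-\int_0^1(\nabla y_{(x+s\Delta x)}^\ast-\nabla y_{(x)}^\ast)\Delta x\,ds$ satisfying $\|R\|\le\frac{L_{yx}}2\|\Delta x\|^2$ by the Jacobian-Lipschitz bound of \Cref{lemma1}. A key observation is that the SVRG control variates cancel after aggregation, that is, $\E[\frac1m\sum_i\frac1{\tau_i}\sum_\upsilon(-\widetilde h_i^D(x_0^i,y_+)+\widetilde h^D(x))\mid x,y_+]=0$, so that $\E[\Delta x\mid x,y_+]=-\alpha\,\E[\bar u\mid x,y_+]$. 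Consequently the linear part of the cross term is $2\alpha\langle y_+-y^\ast,\nabla y_{(x)}^\ast\,\E[\bar u\mid x,y_+]\rangle$, and Young's inequality with parameter $\gamma$ together with $\|\nabla y_{(x)}^\ast\|\le L_y$ and Jensen produces the $4\gamma\,\E\|y_+-y^\ast\|^2$ term of $b_2$ and the $\frac{L_y^2\alpha^2}{4\gamma}\E\|\bar u\|^2$ term of $b_1$.

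The main obstacle is the remainder contribution $2\langle y_+-y^\ast,\E[R\mid x,y_+]\rangle\le L_{yx}\|y_+-y^\ast\|\,\E[\|\Delta x\|^2\mid x,y_+]$, which couples a single power of $\|y_+-y^\ast\|$ with a quantity quadratic in $\Delta x$; a naive Young step here would create an uncontrolled fourth-order term in $\Delta x$. The remedy is the weighted split $\|y_+-y^\ast\|\,\E\|\Delta x\|^2\le\frac{\eta}2\|y_+-y^\ast\|^2\,\E\|\Delta x\|^2+\frac1{2\eta}\E\|\Delta x\|^2$, after which I bound $\E[\|\Delta x\|^2\mid x,y_+]\le\alpha^2 D_h^2$ via \Cref{propecevd} (Jensen over $i,\upsilon$) inside the first piece to obtain the $\frac{\eta L_{yx}D_h^2\alpha^2}2$ term of $b_2$, while the second piece, re-expanded with the same mean-plus-variance bound on $\E\|\Delta x\|^2$, supplies the $\frac{2L_{yx}\alpha^2}\eta$ terms of $b_1$ and $b_3$. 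Collecting the coefficients of $\E\|\bar u\|^2$ into $b_1$, those of $\E\|y_+-y^\ast\|^2$ into $b_2$, and those of the variance constant into $b_3$, and then taking total expectation, completes the proof.
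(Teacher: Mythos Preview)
Your proposal is correct and follows essentially the same route as the paper's proof: the same three-term expansion of $\|y_+-y_{(x_+)}^\ast\|^2$, the same Taylor splitting $y^\ast-y_{(x_+)}^\ast=-\nabla y^\ast\Delta x+R$ for the cross term, the same SVRG-cancellation to reduce the linear part to $\bar u$, the same Young's inequalities with parameters $\gamma$ and $\eta$, and the same use of \Cref{propecevd} to bound $\E[\|\Delta x\|^2\mid x,y_+]\le\alpha^2 D_h^2$ inside the weighted piece of the remainder. The only cosmetic difference is that the paper applies Young's inequality to $\|y_+-y^\ast\|\,\|\Delta x\|^2$ pointwise before taking expectation, whereas you condition first and then split; the resulting constants are identical.
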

\begin{proof}
First note that
\begin{align}\label{eq:firsty+}
\begin{split}
\E[\|y_+-y_{(x_+)}^\ast\|^2]=&\E[\|y_+-y^\ast\|^2]+\E[\|y_{(x_+)}^\ast-y^\ast\|^2]+2\E[\langle y_+-y^\ast,y^\ast-y_{(x_+)}^\ast\rangle].
\end{split}
\end{align}
In \cref{eq:firsty+}, 
we bound the second term using \Cref{lemma1} and \cref{lemma4secondterm} as 
\begin{align*}
\begin{split}
\E[\|y_{(x_+)}^\ast-y^\ast\|^2]\leq&L_y^2\E\Big[\Big\|\frac{1}{m}\sum_{i=1}^{m}\alpha^i\sum_{\upsilon=0}^{\tau_i-1}\big(\widetilde{h}_i^D(x_{v}^i,y_+)-\widetilde{h}_i^D(x_0^i,y_+)+\widetilde{h}^D(x)-\widetilde{h}^I(x)\big)\Big\|^2\Big]
\\\leq&4\alpha^2L_y^2\E\Big[\Big\|\frac{1}{m}\sum_{i=1}^{m}\frac{1}{\tau_i}\sum_{\upsilon=0}^{\tau_i-1}\big(\Bar{h}_i^D(x_{v}^i,y_+)-\bar{h}^I(x)\big)\Big\|^2\Big] \nonumber
\\&+8\alpha^2L_y^2\sigma_h^2+8\alpha^2L_y^2\sigma_f^2+4\alpha^2L_y^2M^2,
\end{split}
\end{align*}
and for the third term, we have
\begin{align}\label{eq:y+yexp}
\E[\langle y_+-y^\ast,y^\ast-y_{(x_+)}^\ast\rangle]=&-\E[\langle y_+-y^\ast,\nabla y^\ast(x_+-x)\rangle]\nonumber
\\&-\E[\langle y_+-y^\ast,y_{(x_+)}^\ast-y^\ast-\nabla y^\ast(x_+-x)\rangle].
\end{align}
For the first term on the RHS of 
the above \cref{eq:y+yexp}, 
we have
\begin{align}\label{eq:ey+firstone}
-\E[&\langle y_+-y^\ast,\nabla y^\ast(x_+-x)\rangle]\nonumber
\\=&-\E\Big[\Big\langle y_+-y^\ast,\E\Big[\frac{1}{m}\nabla y^\ast\sum_{i=1}^{m}\alpha^i\sum_{\upsilon=0}^{\tau_i-1}\big(\widetilde{h}_i^D(x_{v}^i,y_+)-\widetilde{h}_i^D(x_0^i,y_+)+\widetilde{h}^D(x)-\widetilde{h}^I(x)\big)\Big|x,y_+\Big]\Big\rangle\Big]\nonumber
\\=&-\E\Big[\Big\langle y_+-y^\ast,\E\Big[\frac{1}{m}\nabla y^\ast\sum_{i=1}^{m}\alpha^i\sum_{\upsilon=0}^{\tau_i-1}\big(\widetilde{h}_i^D(x_{v}^i,y_+)-\bar{h}^I(x)\big)\Big|x_\upsilon^i\Big]\Big\rangle\Big]\nonumber
\\=&-\E\Big[\Big\langle y_+-y^\ast,\frac{1}{m}\nabla y^\ast\sum_{i=1}^{m}\alpha^i\sum_{\upsilon=0}^{\tau_i-1}\big(\Bar{h}_i^D(x_{v}^i,y_+)-\bar{h}^I(x)\big)\Big\rangle\Big]\nonumber
\\\overset{(i)}\leq&\E\Big[\|y_+-y^\ast\|\Big\|\frac{1}{m}\nabla y^\ast\sum_{i=1}^{m}\alpha^i\sum_{\upsilon=0}^{\tau_i-1}\big(\Bar{h}_i^D(x_{v}^i,y_+)-\bar{h}^I(x)\big)\Big\|\Big]\nonumber
\\\overset{(ii)}\leq& L_y\E\Big[\|y_+-y^\ast\|\Big\|\frac{1}{m}\sum_{i=1}^{m}\alpha^i\sum_{\upsilon=0}^{\tau_i-1}\big(\Bar{h}_i^D(x_{v}^i,y_+)-\bar{h}^I(x)\big)\Big\|\Big]\nonumber
\\\overset{(iii)}\leq&2\gamma\E[\|y_+-y^\ast\|^2]+\frac{L_y^2\alpha^2}{8\gamma}\E\Big[\Big\|\frac{1}{m}\sum_{i=1}^{m}\frac{1}{\tau_i}\sum_{\upsilon=0}^{\tau_i-1}\big(\Bar{h}_i^D(x_{v}^i,y_+)-\bar{h}^I(x)\big)\Big\|^2\Big]
\end{align}
where 
$(i)$ follows from the Cauchy–Schwarz inequality, $(ii)$ follows from \Cref{lemma1}, and $(iii)$ follows from Young's inequality that $ab\leq2\gamma a^2+\frac{b^2}{2\gamma}$. 
For the second term of RHS of \cref{eq:y+yexp}, we have
\begin{align}\label{eq:y+secondone}
-\E&[\langle y_+-y^\ast,y_{(x_+)}^\ast-y^\ast-\nabla y^\ast(x_+-x)\rangle]\nonumber
\\\leq&\E[\|y_+-y^\ast\|\|y_{(x_+)}^\ast-y^\ast-\nabla y^\ast(x_+-x)\|]\nonumber
\\\overset{(i)}{\leq}&\frac{L_{yx}}{2}\E[\|y_+-y^\ast\|\|x_+-x\|^2]\nonumber
\\\overset{(ii)}{\leq}&\frac{\eta L_{yx}}{4}\E[\|y_+-y^\ast\|^2\|x_+-x\|^2]+\frac{L_{yx}}{4\eta}\E[\|x_+-x\|^2]\nonumber
\\\leq&\frac{\eta L_{yx}}{4}\frac{1}{m}\sum_{i=1}^{m}\frac{\alpha^2}{\tau_i}\sum_{\upsilon=0}^{\tau_i-1}\E\big[\|y_+-y^\ast\|^2\E[\|\widetilde{h}_i^D(x_{\upsilon}^i,y_+)-\widetilde{h}_i^D(x_0^i,y_+)+\widetilde{h}^D(x)-\widetilde{h}^I(x)\|^2|x,y_+]\big] \nonumber
\\&+\frac{L_{yx}}{4\eta}\E[\|x_+-x\|^2]\nonumber
\\\overset{(iii)}{\leq}&\frac{\eta L_{yx}D_h^2\alpha^2}{4}\E[\|y_+-y^\ast\|^2]+\frac{L_{yx}\alpha^2}{\eta}\E\Big[\Big\|\frac{1}{m}\sum_{i=1}^{m}\frac{1}{\tau_i}\sum_{\upsilon=0}^{\tau_i-1}\big(\Bar{h}_i^D(x_{v}^i,y_+)-\bar{h}^I(x)\big)\Big\|^2\Big]\nonumber
\\&+\frac{L_{yx}\alpha^2}{\eta}(2\sigma_h^2+2\sigma_f^2+M^2)
\end{align}
where 
$(i)$ follows from the decent lemma by the smoothness of $y^*(\cdot)$, $(ii)$ follows from the Young's inequality, and $(iii)$ follows from \Cref{propsfrvcc} and \cref{lemma4secondterm}. 
Substituting \cref{eq:ey+firstone} and \cref{eq:y+secondone} into \cref{eq:y+yexp}, and using \cref{eq:firsty+}, we complete the proof.  
\end{proof}

\begin{lemma}[Restatement of \Cref{client_dripfscs}]\label{lemma6}
 Suppose Assumptions~ \ref{assum:muconvex}-\ref{bounded_variance} are satisfied. Set $\lambda\leq\min\{10, \frac{1}{L_g}\}$, $\alpha^i = \frac{\alpha}{\tau_i}\; and\; \beta^i = \frac{\beta}{\tau_i},\tau_i\geq1\ where\ \alpha\leq\frac{1}{324M_f^2+6M_f}\leq\frac{1}{6M_f},\; \beta\leq\min\{1, \lambda, \frac{1}{6L_{g}}\}$  $\forall i \in S$. Recall the definitions of $y^{\ast}=\argmin_{y}g(x,y)$, $\Bar{h}(x)={\E}[\widetilde{h}(x)|x,y_+]$. Then, we have
\begin{align}
\E[\|x_{\upsilon}^i-x\|^2]\leq&18\tau_i^2(\alpha^i)^2\Big[3M_f^2\E[\|y_+-y^\ast\|^2]+3\E[\|\nabla f(x)\|^2]+\frac{4L_g^2M^2(1-\lambda\mu)^{2N+2}}{\mu^2}\nonumber
\\&+[4\lambda^2L_g^2M^2\alpha_1(N)+4\lambda^2L_f^2L_g^2\alpha_3(N)]\E[\|y-y^\ast\|^2]\nonumber
\\&+400\lambda^2\beta^2L_g^2M^2\rho^2\sigma_g^2\alpha_2(N)+\frac{200\lambda\beta^2L_f^2L_g^2N(N+1)\sigma_g^2}{\mu}+3\sigma_h^2+6\sigma_f^2\Big]
\end{align}
where $\alpha_1(N), \alpha_2(N), \alpha_3(N)$ are defined in \Cref{propsfrvcc}.
\end{lemma}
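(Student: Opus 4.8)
The plan is to unroll the inner upper-level loop of \textbf{One-Round-Upper}, convert the unrolled sum into a self-referential bound on the client drift, solve that recursion using the small-stepsize assumption, and then control the remaining quantity $\E\|\widetilde h(x)\|^2$ by a bias--variance decomposition that invokes \Cref{propsfrvcc} and \Cref{propecevd}.

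\textbf{Unrolling.} Since $x_0^i=x$ and $x_{\upsilon+1}^i=x_\upsilon^i-\alpha^i h_{i,\upsilon}$ with $h_{i,\upsilon}=\widetilde h(x)+\nabla_xF_i(x_\upsilon^i,y_+;\xi_\upsilon^i)-\nabla_xF_i(x,y_+;\xi_\upsilon^i)$, I would write $x_\upsilon^i-x=-\alpha^i\sum_{l=0}^{\upsilon-1}h_{i,l}$ and apply Jensen's inequality to get $\E\|x_\upsilon^i-x\|^2\le(\alpha^i)^2\upsilon\sum_{l=0}^{\upsilon-1}\E\|h_{i,l}\|^2$. The key observation is that the SVRG correction $\nabla_xF_i(x_\upsilon^i,y_+;\xi_\upsilon^i)-\nabla_xF_i(x,y_+;\xi_\upsilon^i)$ reuses the same sample $\xi_\upsilon^i$, so the $L_f$-smoothness in \Cref{assum:lip} yields $\|h_{i,l}\|\le L_f\|x_l^i-x\|+\|\widetilde h(x)\|$ and hence $\E\|h_{i,l}\|^2\le 2L_f^2\E\|x_l^i-x\|^2+2\E\|\widetilde h(x)\|^2$.

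\textbf{Solving the recursion.} Writing $D_\upsilon=\E\|x_\upsilon^i-x\|^2$ and plugging the increment bound back gives $D_\upsilon\le 2(\alpha^i)^2\tau_i\sum_{l=0}^{\tau_i-1}\big(L_f^2 D_l+\E\|\widetilde h(x)\|^2\big)$. Taking the maximum over $\upsilon$ and using $\alpha^i=\alpha/\tau_i$ together with $\alpha\le\frac{1}{6M_f}\le\frac{1}{6L_f}$ (so that $2(\alpha^i)^2\tau_i^2L_f^2=2\alpha^2L_f^2\le\frac{1}{18}<1$, recalling $M_f\ge L_f$ from \Cref{lemma1}), the self-referential term can be absorbed into the left-hand side, leaving $\max_\upsilon D_\upsilon\le c\,\tau_i^2(\alpha^i)^2\,\E\|\widetilde h(x)\|^2$ for an absolute constant $c$. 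I expect this Gronwall-type step to be the main obstacle: the increment bound re-introduces the very drift being estimated, so the recursion does not close until the contraction factor is shown to lie strictly below one, which is precisely what the stepsize restriction $\alpha\le\frac{1}{324M_f^2+6M_f}$ guarantees.

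\textbf{Decomposing $\E\|\widetilde h(x)\|^2$.} Finally I would split $\widetilde h(x)-\nabla f(x)$ into three pieces: the stochastic fluctuation $\widetilde h(x)-\bar h(x)$ with $\bar h(x)=\E[\widetilde h(x)\mid x,y_+]$, the estimation bias $\bar h(x)-\bar\nabla f(x,y_+)$, and the lower-level perturbation $\bar\nabla f(x,y_+)-\nabla f(x)$. For the fluctuation, the direct part has variance at most $\sigma_f^2$ by \Cref{bounded_variance} and the indirect part at most $\sigma_h^2$ by \Cref{propecevd}. For the bias the direct parts cancel, and since $y_+=y^N$ the remaining indirect discrepancy is exactly the quantity bounded by $B^I(x,y)$ in \Cref{propsfrvcc}. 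For the perturbation, the $M_f$-Lipschitzness of $\bar\nabla f_i$ in $y$ from \Cref{lemma1} together with $\nabla f(x)=\bar\nabla f(x,y_{(x)}^\ast)$ gives $\le M_f^2\E\|y_+-y^\ast\|^2$. Combining these via Young's inequality with the constant $c$ from the previous step, and absorbing the loose numerical constants into the stated prefactor $18$, yields exactly the claimed bound with coefficients $3M_f^2$, $3$, $1$, $3$, and $6$ on the lower-level error, the gradient norm, the bias $B^I(x,y)$, $\sigma_h^2$, and $\sigma_f^2$, respectively.
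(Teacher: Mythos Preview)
Your approach is correct and arrives at the same conclusion, but the mechanics differ from the paper's in two places.

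For the recursion, you unroll completely via $x_\upsilon^i-x=-\alpha^i\sum_{l<\upsilon}h_{i,l}$, apply Jensen, take $\max_\upsilon$, and absorb the self-referential piece using $2\alpha^2L_f^2\le\tfrac{1}{18}$. The paper instead runs a one-step recursion: it writes $h_{i,\upsilon}=v_\upsilon^i+\omega_\upsilon^i+z_\upsilon^i$ (bias, noise, and a part containing the expected SVRG correction), obtains
\[
\E\|x_{\upsilon+1}^i-x\|^2\le\Big(1+\tfrac{1}{2\tau_i-1}+18\tau_i(\alpha^i)^2M_f^2\Big)\E\|x_\upsilon^i-x\|^2+6\tau_i(\alpha^i)^2\big[\cdot\big],
\]
uses $\alpha^i\le\frac{1}{6M_f\tau_i}$ to replace the growth factor by $1+\frac{1}{\tau_i-1}$, and telescopes with $\sum_{j<\upsilon}(1+\frac{1}{\tau_i-1})^j<3\tau_i$ to produce the prefactor $18\tau_i^2(\alpha^i)^2$. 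Both are standard Gronwall manoeuvres; yours is shorter and in fact yields smaller numerical constants.

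For the decomposition, you first peel off the SVRG correction pointwise via $L_f$-smoothness and only afterwards split $\E\|\widetilde h(x)\|^2$ into fluctuation, bias, perturbation, and $\|\nabla f(x)\|^2$. The paper instead decomposes $h_{i,\upsilon}$ directly: $v_\upsilon^i=\bar h(x)-\bar\nabla f(x,y_+)$ supplies $B^I$, $\omega_\upsilon^i$ collects all mean-zero noise (the SVRG sample fluctuations and $\widetilde h-\bar h$) giving $6\sigma_f^2+3\sigma_h^2$, and $z_\upsilon^i$ carries the \emph{expected} SVRG correction together with $\bar\nabla f(x,y_+)-\nabla f(x)+\nabla f(x)$, giving $3M_f^2\|x_\upsilon^i-x\|^2+3M_f^2\|y_+-y^\ast\|^2+3\|\nabla f(x)\|^2$. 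This three-part split is what generates precisely the internal coefficients $3M_f^2,\,3,\,1,\,3,\,6$ in the statement; your four-way Young on $\widetilde h(x)$ will not reproduce those exact numbers, but since your overall bound is tighter you can always inflate to match, so your closing remark about ``absorbing the loose numerical constants into the stated prefactor $18$'' is legitimate.
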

\begin{proof}
The result holds for  $\tau_i = 1$ according to line 2 in \Cref{algorithm4} where $x_0^i=x$, and hence we consider the case when $\tau_i > 1$. Based on the notations in~\cref{notations}, we define 
\begin{align}
\begin{split}
v_{\upsilon}^i:=&\bar{h}(x)-\bar{\nabla}f(x,y_+),
\\\omega_{\upsilon}^i:=&\nabla_xF_i(x_{\upsilon}^i,y_+;\xi_{i,\upsilon})-\nabla_xf_i(x_{\upsilon}^i,y_+)+\nabla_xf_i(x,y_+)
\\&-\nabla_xF_i(x,y_+;\xi_{i,\upsilon})+\widetilde{h}(x)-\bar{h}(x),
\\z_{\upsilon}^i:=&\nabla_xf_i(x_{\upsilon}^i,y_+)-\nabla_xf_i(x,y_+)+\bar{\nabla}f(x,y_+)-\nabla f(x)+\nabla f(x).
\end{split}
\end{align}
Based on 
\Cref{algorithm4},
for each $i \in S$, and $\forall \upsilon\in{0,...,\tau_i-1}$, we have,
\begin{align}\label{x updates}
x_{\upsilon+1}^i-x=x_{\upsilon}^i-x-\alpha^i(v_{\upsilon}^i+\omega_{\upsilon}^i+z_{\upsilon}^i).
\end{align}
Based on \Cref{lemma1} and \Cref{propsfrvcc}, we bound $v_{\upsilon}^i$, $\omega_{\upsilon}^i$, and $z_{\upsilon}^i$ as 
\begin{align}\label{vwz}
\begin{split}
\E[\|v_{\upsilon}^i\|^2]\leq&\frac{4L_g^2M^2(1-\lambda\mu)^{2N+2}}{\mu^2}+[4\lambda^2L_g^2M^2\alpha_1(N)+4\lambda^2L_g^2L_f^2\alpha_3(N)]\E[\|y-y^\ast\|^2]
\\&+400\lambda^2\beta^2L_g^2M^2\rho^2\sigma_g^2\alpha_2(N)+\frac{200\lambda\beta^2\sigma_g^2L_f^2L_g^2N(N+1)}{\mu},
\\\E[\|\omega_{\upsilon}^i\|^2]\leq&3\E[\|\nabla_xF_i(x_{\upsilon}^i,y_+;\xi_{i,\upsilon})-\nabla_xf_i(x_{\upsilon}^i,y_+)\|^2\\&+\|\nabla_xf_i(x,y_+)-\nabla_xF_i(x,y_+;\xi_{i,\upsilon})\|^2+\|\widetilde{h}(x)-\bar{h}(x)\|^2]\\
\leq&6\sigma_f^2+3\sigma_h^2,
\\\E[\|z_{\upsilon}^i\|^2]\leq&3\E[\|\nabla_xf_i(x_{\upsilon}^i,y_+)-\nabla_xf_i(x,y_+)\|^2
\\&+\|\bar{\nabla}f(x,y_+)-\nabla f(x)\|^2+\E\|\nabla f(x)\|^2]
\\\leq&3(M_f^2\E[\|x_{\upsilon}^i-x\|^2]+M_f^2\E[\|y_+-y^\ast\|^2]+\E[\|\nabla f(x)\|^2]).
\end{split}
\end{align}
Now, we bound RHS of \cref{x updates} as
\begin{align}\label{xupdatebound}
\E[\|x_{\upsilon}^i-&x-\alpha^i(v_{\upsilon}^i+\omega_{\upsilon}^i+z_{\upsilon}^i)\|^2]\nonumber
\\\overset{(i)}\leq&(1+\frac{1}{2\tau_i-1})\E[\|x_{\upsilon}^i-x\|^2]+2\tau_i\E[\|\alpha^i(v_{\upsilon}^i+\omega_{\upsilon}^i+z_{\upsilon}^i)\|^2]\nonumber
\\\overset{(ii)}\leq&(1+\frac{1}{2\tau_i-1})\E[\|x_{\upsilon}^i-x\|^2]+6\tau_i(\alpha^i)^2\E[\|v_{\upsilon}^i\|^2+\|\omega_{\upsilon}^i\|^2+\|z_{\upsilon}^i\|^2]\nonumber
\\\overset{(iii)}\leq&(1+\frac{1}{2\tau_i-1}+18\tau_i(\alpha^i)^2M_f^2)\E[\|x_{\upsilon}^i-x\|^2]\nonumber
\\&+6\tau_i(\alpha^i)^2\Big[3M_f^2\E[\|y_+-y^\ast\|^2]+3\E[\|\nabla f(x)\|^2]+\frac{4L_g^2M^2(1-\lambda\mu)^{2N+2}}{\mu^2}\nonumber
\\&+[4\lambda^2L_g^2M^2\alpha_1(N)+4\lambda^2L_g^2L_f^2\alpha_3(N)]\E[\|y-y^\ast\|^2]+400\lambda^2\beta^2L_g^2M^2\rho^2\sigma_g^2\alpha_2(N)\nonumber
\\&+\frac{200\lambda\beta^2\sigma_g^2L_f^2L_g^2N(N+1)}{\mu}+6\sigma_f^2+3\sigma_h^2\Big]
\end{align}
where 
$(i)$ 
follows from $\|x+y\|^2\leq(1+c)\|x\|^2+(1+\frac{1}{c})\|y\|^2$,
$(ii)$ follows from the Young's inequality, 
and $(iii)$ uses \cref{vwz}.
Substituting \cref{xupdatebound} into \cref{x updates} yields 
\begin{align}\label{newxupdatebound}
\E[\|x_{\upsilon+1}^i-x\|^2]\leq&(1+\frac{1}{2\tau_i-1}+18\tau_i(\alpha^i)^2M_f^2)\E[\|x_{\upsilon}^i-x\|^2]\nonumber
\\&+6\tau_i(\alpha^i)^2\Big[3M_f^2\E[\|y_+-y^\ast\|^2]+3\E[\|\nabla f(x)\|^2]+\frac{4L_g^2M^2(1-\lambda\mu)^{2N+2}}{\mu^2}\nonumber
\\&+[4\lambda^2L_g^2M^2\alpha_1(N)+4\lambda^2L_g^2L_f^2\alpha_3(N)]\E[\|y-y^\ast\|^2]+400\lambda^2\beta^2L_g^2M^2\rho^2\sigma_g^2\alpha_2(N)\nonumber
\\&+\frac{200\lambda\beta^2\sigma_g^2L_f^2L_g^2N(N+1)}{\mu}+3\sigma_h^2+6\sigma_f^2\Big]\nonumber
\\\leq&(1+\frac{1}{\tau_i-1})\E[\|x_{\upsilon}^i-x\|^2]\nonumber
\\&+6\tau_i(\alpha^i)^2\Big[3M_f^2\E[\|y_+-y^\ast\|^2]+3\E[\|\nabla f(x)\|^2]+\frac{4L_g^2M^2(1-\lambda\mu)^{2N+2}}{\mu^2}\nonumber
\\&+[4\lambda^2L_g^2M^2\alpha_1(N)+4\lambda^2L_g^2L_f^2\alpha_3(N)]\E[\|y-y^\ast\|^2]+400\lambda^2\beta^2L_g^2M^2\rho^2\sigma_g^2\alpha_2(N)\nonumber
\\&+\frac{200\lambda\beta^2\sigma_g^2L_f^2L_g^2N(N+1)}{\mu}+3\sigma_h^2+6\sigma_f^2\Big],
\end{align}
where the last inequality follows because 
$\alpha^i\leq1/(6M_f\tau_i)$.
For all $\tau_i>1$, we have
\begin{align}\label{eq:taurelationship}
\sum_{j=0}^{\upsilon-1}(1+\frac{1}{\tau_i-1})^j=\frac{(1+\frac{1}{\tau_i-1})^\upsilon-1}{(1+\frac{1}{\tau_i-1})-1}\leq\tau_i(1+\frac{1}{\tau_i})^\upsilon\leq\tau_i(1+\frac{1}{\tau_i})^{\tau_i}\leq\exp(1)\tau_i<3\tau_i.
\end{align}
Finally, telescoping \cref{newxupdatebound} and using \cref{eq:taurelationship}, we have 
\begin{align*}
\E[\|x_{\upsilon}^i-x\|^2]\leq&18\tau_i^2(\alpha^i)^2\Big[3M_f^2\E[\|y_+-y^\ast\|^2]+3\E[\|\nabla f(x)\|^2]+\frac{4L_g^2M^2(1-\lambda\mu)^{2N+2}}{\mu^2}\nonumber
\\&+[4\lambda^2L_g^2M^2\alpha_1(N)+4\lambda^2L_g^2L_f^2\alpha_3(N)]\E[\|y-y^\ast\|^2]+400\lambda^2\beta^2L_g^2M^2\rho^2\sigma_g^2\alpha_2(N)\nonumber
\\&+\frac{200\lambda\beta^2\sigma_g^2L_f^2L_g^2N(N+1)}{\mu}+3\sigma_h^2+6\sigma_f^2\Big].
\end{align*}
Then, the proof is complete. 
\end{proof}

\subsection{Proof of~\Cref{Theorem2-mainB}}\label{proofoftheorem2}
\begin{theorem}[Restatement of \Cref{Theorem2-mainB}]\label{Theorem2}
Suppose \Cref{assum:muconvex}-\ref{bounded_variance} hold. Further set $\lambda\leq\min\{10, \frac{1}{L_g}\}$, $\alpha_k^i=\frac{\alpha_k}{\tau_i}$ an $\beta_k^i=\frac{\beta_k}{\tau_i}$ for all $i\in S$. Define $\bar{\beta}=\Big(\frac{M_fL_y}{2}\bar{\alpha}_2+11M_fL_y+\eta L_{yx}D_h^2\bar{\alpha}_2+\frac{(6+\frac{\bar{\alpha}_2}{3})(N+1)\lambda L_yL_g^2}{M_f}\Big(\frac{328\rho^2M^2}{\mu^3}+\frac{6L_f^2}{\mu}\Big)\Big)\frac{1}{\mu}$, $\bar{\alpha}_1=\frac{1}{8L_f^\prime+16M_fL_y+\frac{8M_fL_{yx}}{\eta L_y}}$, $\bar{\alpha}_2=\frac{1}{324M_f^2+6M_f}$, $\bar{\alpha}_3=\frac{N\min\{1,\lambda,\frac{1}{6L_g}\}}{2\bar{\beta}}$, and $\sigma_h^2=\frac{\lambda(N+1)L_g^2M^2}{\mu}$ , where $L_f^\prime=L_{f}+\frac{L_{g}(L_{f}+M_f)}{\mu}+\frac{M}{\mu}({\rho}+\frac{L_{g}{\rho}}{\mu})$, $M_f=L_{f}+\frac{L_{g}L_{f}}{\mu}+\frac{M}{\mu}({\rho}+\frac{L_{g}{\rho}}{\mu})$, $L_y=\frac{L_{g}}{\mu}$, and $L_{yx}=\frac{{\rho}+{\rho}L_y}{\mu}+\frac{L_{g}({\rho}+{\rho}L_y)}{\mu^2}$. Besides, define 
\begin{align*}
c_0=&2L_f^\prime+4M_fL_y+\frac{2L_{yx}M_f}{\eta L_y},
\\c_1=&\frac{1}{4}+4L_f^\prime+8M_fL_y+\frac{4L_{yx}M_f}{\eta L_y},
\\c_2=&\frac{1}{2}+4L_f^\prime+8M_fL_y+\frac{4L_{yx}M_f}{\eta L_y},
\\ c_3=&\frac{25M_f}{L_y}\Big[1+(12+\frac{2\alpha_k}{3})(\frac{2\bar{\alpha}_2\lambda^2L_g^2M^2\rho^2L_y}{NM_f}\alpha_2(N)+\frac{\bar{\alpha}_2\lambda L_f^2L_g^2(N+1)L_y}{\mu M_f})+\frac{M_fL_y\bar{\alpha}_2^2}{4}
\\&\quad\qquad+\frac{11\bar{\alpha}_2M_fL_y}{2}+\frac{\eta L_{yx}D_h^2\bar{\alpha}_2^2}{2}\Big]\frac{\bar{\beta}^2}{N}, 
\end{align*}
where $\eta=\frac{M_f}{L_y}$ and $D_h^2=8M^2+\frac{4\lambda(N+1)L_g^2M^2}{\mu}$.
Choose parameters such that $\alpha_k=\min\{\bar{\alpha}_1,\bar{\alpha}_2,\bar{\alpha}_3,\frac{\bar{\alpha}}{\sqrt{K}}\}$, $\beta_k\in[\max\{\frac{\bar{\beta}\alpha_k}{N},\frac{\lambda}{10}\},\min\{1,\lambda,\frac{1}{6L_g}\}]$, where $\bar{\alpha}$ is a parameter that can be tuned. Then we have
\begin{align*}
\frac{1}{K}\sum_{k=0}^{K-1}\E[\|\nabla f(x_k)\|^2]=\mathcal{O}\Big(\frac{1}{\min\{\bar{\alpha}_1,\bar{\alpha}_2,\bar{\alpha}_3\}{K}}+\frac{1}{\bar\alpha\sqrt{K}}+\frac{\bar{\alpha}\max\{c_0,c_1\sigma_h^2,c_2,c_3\}}{\sqrt{K}}+(1-\lambda\mu)^{2N}).
\end{align*}
\end{theorem}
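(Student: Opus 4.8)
The plan is to compress the per-iteration progress into a single Lyapunov inequality and then telescope. I would track the potential $V_k=\E[f(x_k)]+C\,\Delta_k$, where $\Delta_k:=\E[\|y_k-y^*_{(x_k)}\|^2]$ is the lower-level tracking error and $C>0$ is a constant to be fixed later. Neither $\E[f(x_k)]$ nor $\Delta_k$ decreases on its own: the descent \Cref{lemma9} produces positive multiples of $\Delta_k$ through the bias $B^I$ (whose $\E[\|y-y^\ast\|^2]$ piece \emph{is} $\Delta_k$) and through $\E[\|y_+-y^\ast\|^2]$, while $\Delta_{k+1}$ is only contracted up to the drift and variance captured by the warm-start \Cref{lemma5}. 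Coupling the two into $V_k$ is precisely what lets these cross terms cancel.

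First I would feed the client-drift bound of \Cref{lemma6} into \Cref{lemma9} to eliminate $\frac1m\sum_i\frac1{\tau_i}\sum_\upsilon\E[\|x_\upsilon^i-x\|^2]$; since that bound is $\mathcal{O}(\alpha^2)$ times a bracket containing $\E[\|\nabla f(x)\|^2]$, $\Delta_k$, $B^I$ and the variance constants, its contribution is higher order (an extra $\tfrac{3\alpha}{2}M_f^2$ factor makes it $\mathcal{O}(\alpha^3)$) and is absorbed once $\alpha\le\bar\alpha_2$. Next I would replace $\E[\|y_+-y^\ast\|^2]=\E[\|y_k^N-y^*_{(x_k)}\|^2]$ everywhere by its FedSVRG contraction $(1-\tfrac{\beta\mu}{2})^N\Delta_k+25N\beta^2\sigma_g^2$ from~\eqref{mitra}. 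Under the intended coupling $\beta_k=\Theta(\alpha_k/N)$ (the source of the $\bar\beta\alpha_k/N$ lower bound and of the $\bar\beta^2/N$ factor in $c_3$), the noise $25N\beta^2\sigma_g^2$ is $\mathcal{O}(\alpha_k^2)$, and every iterate-dependent quantity on the right of both lemmas is then expressed through $\Delta_k$, $\E[\|\nabla f(x_k)\|^2]$, the hypergradient-estimate norm $\E[\|G_k\|^2]$ with $G_k=\frac1m\sum_i\frac1{\tau_i}\sum_\upsilon(\bar h_i^D(x_\upsilon^i,y_+)-\bar h^I(x))$, and $\mathcal{O}(\alpha^2)$ noise (where $\sigma_h^2,D_h^2$ come from \Cref{propecevd}).

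The two cancellations are the heart of the argument. Forming $V_{k+1}-V_k$, the coefficient of $\E[\|G_k\|^2]$ is $-\tfrac{\alpha}{2}(1-4\alpha L_f')+C\,b_1(\alpha)$; since $b_1(\alpha)=\mathcal{O}(\alpha^2)$ while the negative part is $\Theta(\alpha)$, this is nonpositive once $\alpha\le\bar\alpha_1$, so the whole $\|G_k\|^2$ block can be dropped. The coefficient of $\Delta_k$ collects $-C$, the contracted feedback $C\,b_2(\alpha)(1-\tfrac{\beta\mu}{2})^N$, and $\mathcal{O}(\alpha)$ pieces from $B^I$ and from $\tfrac{3\alpha}{2}M_f^2(1-\tfrac{\beta\mu}{2})^N$. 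Because $\beta_k=\Theta(\alpha_k/N)$, the contraction is only of order $\alpha$: with $b_2(\alpha)=1+4\gamma+\mathcal{O}(\alpha^2)$ and $(1-\tfrac{\beta\mu}{2})^N\approx 1-\tfrac{\bar\beta\alpha\mu}{2}$, choosing the free parameter $\gamma=\Theta(\alpha)$ small enough gives $1-b_2(1-\tfrac{\beta\mu}{2})^N=\Theta(\alpha)>0$. This $\Theta(\alpha)$ contraction is of the \emph{same} order as the positive $\Delta_k$ contributions, so taking $C$ a sufficiently large constant makes the net $\Delta_k$ coefficient $\le 0$ and lets $\Delta_k$ drop as well. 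What survives is $V_{k+1}-V_k\le-\tfrac{\alpha}{2}\E[\|\nabla f(x_k)\|^2]+\mathcal{O}(\alpha^2)\max\{c_0,c_1\sigma_h^2,c_2,c_3\}+\mathcal{O}(\alpha)(1-\lambda\mu)^{2N+2}$, the last, non-vanishing term being the irreducible bias $\tfrac{4L_g^2M^2(1-\lambda\mu)^{2N+2}}{\mu^2}$ inside $B^I$ from \Cref{propsfrvcc}.

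Finally I would telescope over $k=0,\dots,K-1$, use $V_K\ge\inf_x f(x)>-\infty$, and set $\alpha_k\equiv\alpha=\min\{\bar\alpha_1,\bar\alpha_2,\bar\alpha_3,\bar\alpha/\sqrt K\}$. Dividing by $\tfrac{\alpha K}{2}$ turns the initial gap into $\tfrac{1}{\min\{\bar\alpha_1,\bar\alpha_2,\bar\alpha_3\}K}+\tfrac{1}{\bar\alpha\sqrt K}$, turns the summed $\mathcal{O}(\alpha^2)$ noise into $\mathcal{O}(\alpha)=\mathcal{O}(\bar\alpha/\sqrt K)$ times $\max\{c_0,c_1\sigma_h^2,c_2,c_3\}$, and leaves $(1-\lambda\mu)^{2N}$, matching the claim. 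The main obstacle is the second cancellation: because the lower-level contraction is itself only $\Theta(\alpha)$, it must be balanced against $\Theta(\alpha)$ positive feedback, which forces a delicate joint choice of $C,\gamma,\eta,N$, the coupling constant $\bar\beta$, and the three stepsize caps $\bar\alpha_1,\bar\alpha_2,\bar\alpha_3$ — all while tracking the $N$-dependence of $\sigma_h^2$ and of $\alpha_1(N),\alpha_2(N),\alpha_3(N)$ so that the packaged constants $c_0,c_1,c_2,c_3$ and $\bar\beta$ stay independent of $K$.
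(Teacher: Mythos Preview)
Your proposal is correct and follows essentially the same route as the paper: the paper defines the Lyapunov function $\W_k=f(x_k,y^*_{(x_k)})+\tfrac{M_f}{L_y}\|y_k-y^*_{(x_k)}\|^2$ (so $C=M_f/L_y$ is fixed rather than ``sufficiently large''), substitutes \Cref{lemma9}, \Cref{lemma5}, \Cref{lemma6} and \eqref{mitra} into $\W_{k+1}-\W_k$, sets $\gamma=M_fL_y\alpha_k=\Theta(\alpha)$, and uses the stepsize caps $\bar\alpha_1,\bar\alpha_2$ together with the coupling $\beta_k\ge\bar\beta\alpha_k/N$ to kill the $\|G_k\|^2$ and $\Delta_k$ coefficients before telescoping. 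The only cosmetic difference is that the paper achieves the $\Delta_k$ cancellation by tuning $\bar\beta$ with $C$ fixed, whereas you phrase it as tuning $C$; both amount to the same balance you describe in your final paragraph.
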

\begin{proof}
Now, we define a Lyapunov function
\begin{align*}
{\W}_k:=f(x_k,y_{(x_k)}^\ast)+\frac{M_f}{L_y}\|y_k-y_{(x_k)}^\ast\|^2.
\end{align*}
Motivated by 
\cite{chen2021closing},
we bound the difference between two Lyapunov functions as 
\begin{align}\label{eq:wkplusw1sc}
{\W}_{k+1}-{\W}_k = f(x_{k+1},y_{(x_{k+1})}^\ast)-f(x_k,y_{(x_k)}^\ast)+\frac{M_f}{L_y}(\|y_{k+1}-y_{(x_{k+1})}^\ast\|^2-\|y_k-y_{(x_k)}^\ast\|^2).
\end{align}
Recall that   $\alpha_{k}^i = \frac{\alpha_k}{\tau_i}$, $\beta_k^i=\frac{\beta_k}{\tau_i}, \forall i\in S$. 
Using 
such stepsizes 
and substituting~\Cref{lemma9} into \cref{eq:wkplusw1sc}, we have 
\begin{align}\label{theorem2begins}
\E[{\W}_{k+1}]&-\E[{\W}_k]
\nonumber
\\\leq&-\frac{\alpha_k}{2}\E[\|\nabla f(x_k)]+4\alpha_k^2\sigma_h^2L_f^\prime+4\alpha_k^2\sigma_f^2L_f^\prime+2\alpha_k^2M^2L_f^\prime\nonumber
\\&-\frac{\alpha_k}{2}(1-4\alpha_k L_f^\prime)\E\Big[\Big\|\frac{1}{m}\sum_{i=1}^{m}\frac{1}{\tau_i}\sum_{\upsilon=0}^{\tau_i-1}\big(\bar{h}_i^D(x_{k,\upsilon
}^i,y_+)-\bar{h}^I(x)\big)\Big\|^2\Big]\nonumber
\\&+\frac{3\alpha_k}{2}\Big[\big(4\lambda^2L_g^2M^2\alpha_1(N)+4\lambda^2L_f^2L_g^2\alpha_3(N)\big)\E[\|y_k-y_{(x_k)}^\ast\|^2]+\frac{4L_g^2M^2(1-\lambda\mu)^{2N+2}}{\mu^2}\nonumber\\
&+400\lambda^2\beta_k^2L_g^2M^2\sigma_g^2\rho^2\alpha_2(N)+\frac{200\lambda\beta_k^2\sigma_g^2L_f^2L_g^2N(N+1)}{\mu}
+\frac{M_f^2}{m}\sum_{i=1}^{m}\frac{1}{\tau_i}\sum_{\upsilon=0}^{\tau_i-1}\E[\|x_{k,\upsilon}^i-x_k\|^2]\nonumber
\\&+M_f^2\E[\|y_{k+1}-y_{(x_k)}^\ast\|^2]\Big]+\frac{M_f}{L_y}\E[\|y_{k+1}-y_{(x_{k+1})}^\ast\|^2-\|y_k-y_{(x_k)}^\ast\|^2].
\end{align}
Then, following \Cref{lemma5}, \cref{theorem2begins} can be rewritten as
\begin{subequations}
\begin{align}
\E[{\W}_{k+1}]&-\E[{\W}_k] \nonumber
\\
\leq&4\alpha_k^2\sigma_h^2L_f^\prime+4\alpha_k^2\sigma_f^2L_f^\prime+2\alpha_k^2M^2L_f^\prime+\frac{M_f}{L_y}b_3(\alpha_k)(2\sigma_h^2+2\sigma_f^2+M^2)\nonumber
\\&+\frac{300\alpha_k\lambda\beta_k^2L_f^2L_g^2N(N+1)\sigma_g^2}{\mu}+\frac{6\alpha_kL_g^2M^2(1-\lambda\mu)^{2N+2}}{\mu^2}
+600\alpha_k\lambda^2\beta_k^2 L_g^2M^2\rho^2\sigma_g^2\alpha_2(N)\notag
\\&-\frac{\alpha_k}{2}\E[\|\nabla f(x_k)\|^2]+\frac{3\alpha_kM_f^2}{2m}\sum_{i=1}^{m}\frac{1}{\tau_i}\sum_{\upsilon=0}^{\tau_i-1}\E[\|x_{k,\upsilon}^i-x_k\|^2]\label{apart}
\\&-\big(\frac{\alpha_k}{2}-2\alpha_k^2L_f^\prime-\frac{M_f}{L_y}b_1(\alpha_k)\big)\E\Big[\Big\|\frac{1}{m}\sum_{i=1}^{m}\frac{1}{\tau_i}\sum_{\upsilon=0}^{\tau_i-1}\big(\bar{h}_i(x_{k,\upsilon}^i,y_+)-\bar{h}(x)\big)\Big\|^2\Big]\label{bpart}
\\&+\big(\frac{3\alpha_kM_f^2}{2}+\frac{M_f}{L_y}b_2(\alpha_k)\big)\E[\|y_{k+1}-y_{(x_k)}^\ast\|^2] \notag
\\&+\big(6\alpha_k\lambda^2L_g^2M^2\alpha_1(N)+6\alpha_k\lambda^2L_f^2L_g^2\alpha_3(N)-\frac{M_f}{L_y}\big)\E[\|y_k-y_{(x_k)}^\ast\|^2]\label{cpart}.
\end{align}
\end{subequations}
Set $\gamma=M_fL_y\alpha_k$. 
Then according to the selections in \Cref{Theorem2} that $\alpha_k\leq\frac{1}{324M_f^2+6M_f}$, $\alpha_k\leq\frac{1}{8L_f^\prime+16M_fL_y+\frac{8M_fL_{yx}}{\eta L_y}}$, and substituting \cref{mitra} in \cref{cpart}, the following results can be obtained.
\begin{align}
\eqref{apart}&\leq-\frac{\alpha_k}{4}\E[\|\nabla f(x_k)\|^2]+\frac{\alpha_k^2}{4}\sigma_h^2+\frac{\alpha_k^2}{2}\sigma_f^2+\frac{\alpha_k^2L_g^2M^2(1-\lambda\mu)^{2N+2}}{3\mu^2}\nonumber
\\+&\frac{100\alpha_k^2\lambda^2\beta_k^2L_g^2M^2\rho^2\sigma_g^2}{3}\alpha_2(N)+\frac{50\alpha_k^2\lambda\beta_k^2L_f^2L_g^2N(N+1)\sigma_g^2}{3\mu}+\frac{25M_f}{L_y}(\frac{M_fL_y}{4}\alpha_k^2)N\beta_k^2\sigma_g^2\label{aresultintheorem2}
\\+&\frac{M_f}{L_y}\Big[(\frac{M_fL_y}{4}\alpha_k^2)(1-\frac{\beta_k\mu}{2})^N+\frac{\alpha_k^2\lambda^2L_yL_g^2M^2}{3M_f}\alpha_1(N)+\frac{\alpha_k^2\lambda^2L_yL_f^2L_g^2}{3M_f}\alpha_3(N)\Big]\E\|y_k-y_{(x_k)}^\ast\|^2\nonumber,
\\\text{In }\eqref{bpart}&\text{, we have }\frac{\alpha_k}{2}-2\alpha_k^2L_f^\prime-\frac{M_f}{L_y}b_1(\alpha_k)\geq0\label{bresultintheorem2},
\\\eqref{cpart}\leq&\frac{25M_f}{L_y}\big(\frac{3\alpha_kM_fL_y}{2}+b_2(\alpha_k)\big)N\beta_k^2\sigma_g^2+\frac{M_f}{L_y}\Big[\big(\frac{3\alpha_kM_fL_y}{2}+b_2(\alpha_k)\big)(1-\frac{\beta_k\mu}{2})^N\nonumber
\\&+\frac{6\alpha_k\lambda^2L_g^2L_yM^2\alpha_1(N)}{M_f}+\frac{6\alpha_k\lambda^2L_f^2L_yL_g^2}{M_f}\alpha_3(N)-1\Big]\E[\|y_k-y_{(x_k)}^\ast\|^2]\label{cresultintheorem2}.
\end{align}
Then, adding \cref{aresultintheorem2}, \cref{bresultintheorem2} and \cref{cresultintheorem2} together, we have
\begin{align}\label{addingabc}
\E[{\W}&_{k+1}]-\E[{\W}_k] \nonumber
\\\leq&-\frac{\alpha_k}{4}\E[\|\nabla f(x_k^\ast)\|^2]+\frac{\alpha_k^2\sigma_f^2}{2}+\frac{\alpha_k^2\sigma_h^2}{4}+\frac{50\alpha_k\lambda\beta_k^2L_f^2L_g^2N(N+1)\sigma_g^2}{\mu}(6+\frac{\alpha_k}{3})\nonumber
\\&+2\alpha_k^2L_f^\prime(2\sigma_f^2+2\sigma_h^2+M^2)+100\alpha_k\lambda^2\beta_k^2L_g^2M^2\rho^2\sigma_g^2(6+\frac{\alpha_k}{3})\alpha_2(N)\nonumber
\\&+\frac{M_f(2\sigma_f^2+2\sigma_h^2+M^2)}{L_y}b_3(\alpha_k)\nonumber
\\&+\frac{2\alpha_kL_g^2M^2(1-\lambda\mu)^{2N+2}}{\mu^2}(3+\frac{\alpha_k}{6})+\frac{25M_f}{L_y}(\frac{M_fL_y}{4}\alpha_k^2+\frac{3M_fL_y\alpha_k}{2}+b_2(\alpha_k))N\beta_k^2\sigma_g^2\nonumber
\\&+\frac{M_f}{L_y}\Big(\big(\frac{M_fL_y}{4}\alpha_k^2+\frac{3M_fL_y\alpha_k}{2}+b_2(\alpha_k)\big)(1-\frac{\beta_k\mu}{2})^N-1+\frac{2\alpha_k\lambda^2L_yL_g^2M^2}{M_f}\alpha_1(N)(3+\frac{\alpha_k}{6})\nonumber
\\&\hspace{1cm}+\frac{2\alpha_k\lambda^2L_yL_f^2L_g^2}{M_f}\alpha_3(N)(3+\frac{\alpha_k}{6})\Big)\E[\|y_k-y_{(x_k)}^\ast\|^2]\nonumber
\\\leq&-\frac{\alpha_k}{4}\E[\|\nabla f(x_k^\ast)\|^2]+\frac{\alpha_k^2\sigma_f^2}{2}+\frac{\alpha_k^2\sigma_h^2}{4}+\frac{50\alpha_k\lambda\beta_k^2L_f^2L_g^2N(N+1)\sigma_g^2}{\mu}(6+\frac{\alpha_k}{3})\nonumber
\\&+2\alpha_k^2L_f^\prime(2\sigma_f^2+2\sigma_h^2+M^2)+100\alpha_k\lambda^2\beta_k^2L_g^2M^2\rho^2\sigma_g^2(6+\frac{\alpha_k}{3})\alpha_2(N) \nonumber
\\&+\frac{M_f(2\sigma_f^2+2\sigma_h^2+M^2)}{L_y}b_3(\alpha_k)\nonumber
\\&+\frac{2\alpha_kL_g^2M^2(1-\lambda\mu)^{2N+2}}{\mu^2}(3+\frac{\alpha_k}{6})+\frac{25M_f}{L_y}(\frac{M_fL_y}{4}\alpha_k^2+\frac{3M_fL_y\alpha_k}{2}+b_2(\alpha_k))N\beta_k^2\sigma_g^2\nonumber
\\&+\frac{M_f}{L_y}\Big[\big(\frac{M_fL_y}{4}\alpha_k^2+\frac{3M_fL_y\alpha_k}{2}+b_2(\alpha_k)\big)(1-\frac{\beta_k\mu}{2})^N-1\nonumber
\\&\hspace{1cm}+\frac{2\alpha_k\lambda^2L_yL_g^2M^2}{M_f}(6+\frac{\alpha_k}{3})(N+1)(1-\frac{\beta_k\mu}{2})^N\Big(\frac{2\rho^2}{\lambda\mu^3}+\frac{80\rho^2}{\lambda\mu^3}\Big)\nonumber
\\&\hspace{1cm}+\frac{3\alpha_k\lambda L_yL_f^2L_g^2}{\mu M_f}(6+\frac{\alpha_k}{3})(N+1)(1-\frac{\beta_k\mu}{2})^N\Big]\E[\|y_k-y_{(x_k)}^\ast\|^2],
\end{align}
where in the last inequality, recalling from \Cref{lemma5} and \Cref{propsfrvcc} that $b_2(\alpha):=1+4\gamma+\frac{\eta L_{yx}D_h^2\alpha_k^2}{2}$, $\alpha_1(N)=4(N+1)(1-\frac{\beta_k\mu}{2})^N\Big(\frac{\rho^2}{\lambda\mu^3}+\frac{4\rho^2}{\beta_k\mu^3}\Big),$ $\alpha_3(N)=3(N+1)\frac{(1-\beta_k\mu)^N}{\lambda\mu}$, we choose $\beta_k\geq\frac{\lambda}{10}$. Based on the parameters selections in
\Cref{Theorem2} that $\beta_k\geq\Big(\frac{M_fL_y}{2}\alpha_k+11M_fL_y+\eta L_{yx}D_h^2\alpha_k+\frac{(6+\frac{\alpha_k}{3})(N+1)\lambda L_yL_g^2}{M_f}\Big(\frac{328\rho^2M^2}{\mu^3}+\frac{6L_f^2}{\mu}\Big)\Big)\frac{\alpha_k}{\mu N}$ and $\gamma=M_fL_y\alpha_k$, we have
\begin{align}\label{firstinequalityintheorem2}
&\Rightarrow\exp\Big(\frac{M_fL_y}{4}\alpha_k^2+\frac{11M_fL_y\alpha_k}{2}+\frac{\eta L_{yx}D_h^2\alpha_k^2}{2}+\frac{\alpha_k(6+\frac{\alpha_k}{3})(N+1)\lambda^2L_yL_g^2}{M_f}\times\nonumber
\\&\hspace{1.5cm}\Big(\frac{164\rho^2M^2}{\lambda\mu^3}+\frac{6L_f^2}{\lambda\mu}\Big)\Big)\exp(-\frac{N\beta_k\mu}{2})\leq1\nonumber
\\&\Rightarrow\big(\frac{M_fL_y}{4}\alpha_k^2+\frac{3M_fL_y\alpha_k}{2}+b_2(\alpha_k)\big)(1-\frac{\beta_k\mu}{2})^N+\frac{2\alpha_k\lambda^2L_yL_g^2M^2}{M_f}\alpha_1(N)(3+\frac{\alpha_k}{6})\nonumber
\\&\hspace{1cm}+\frac{2\alpha_k\lambda^2L_yL_f^2L_g^2}{M_f}\alpha_3(N)(3+\frac{\alpha_k}{6})-1\leq0.
\end{align}
Then plugging \cref{firstinequalityintheorem2} into \cref{addingabc}, we can obtain that
\begin{align}\label{eq:ggminmis}
\E[{\W}&_{k+1}]-\E[{\W}_k]\nonumber
\\\leq&-\frac{\alpha_k}{4}\E[\|\nabla f(x_k)\|^2]+\frac{2\alpha_kL_g^2M^2(1-\lambda\mu)^{2N+2}}{\mu^2}(3+\frac{\alpha_k}{6})+\big(2\alpha_k^2L_f^\prime+\frac{M_f}{L_y}b_3(\alpha_k)\big)M^2\nonumber
\\&+\big(4\alpha_k^2L_f^\prime+\frac{\alpha_k^2}{4}+\frac{2M_f}{L_y}b_3(\alpha_k)\big)\sigma_h^2+\big(4\alpha_k^2L_f^\prime+\frac{\alpha_k^2}{2}+\frac{2M_f}{L_y}b_3(\alpha_k)\big)\sigma_f^2\nonumber
\\&+\frac{25M_f}{L_y}\big(\frac{\alpha_k\lambda^2L_g^2M^2\rho^2L_y}{NM_f}(24+\frac{4\alpha_k}{3})\alpha_2(N)+\frac{\alpha_k\lambda L_f^2L_g^2(N+1)L_y}{\mu M_f}(12+\frac{2\alpha_k}{3})\nonumber
\\&+\frac{M_fL_y\alpha_k^2}{4}+\frac{3\alpha_kM_fL_y}{2}+b_2(\alpha_k)\big)\frac{\bar{\beta}^2}{N}\alpha_k^2\sigma_g^2\nonumber
\\\leq&-\frac{\alpha_k}{4}\E[\|\nabla f(x_k)\|^2]+\frac{2\alpha_kL_g^2M^2(1-\lambda\mu)^{2N+2}}{\mu^2}(3+\frac{\alpha_k}{6})+c_0\alpha_k^2M^2+c_1\alpha_k^2\sigma_h^2+c_2\alpha_k^2\sigma_f^2+c_3\alpha_k^2\sigma_g^2
\end{align}
where $c_0, c_1, c_2, c_3$ are defined in \Cref{Theorem2}. Finally, telescoping \cref{eq:ggminmis}  yields 
\begin{align}\label{estimator2final}
\frac{1}{K}\sum_{k=0}^{K-1}&\E[\|\nabla f(x_k)\|^2] \nonumber
\\\leq&\frac{4{\W}^0}{\sum_{k=0}^{K-1}\alpha_k}+\frac{4c_0\sum_{k=0}^{K-1}\alpha_k^2}{\sum_{k=0}^{K-1}\alpha_k}M^2+\frac{4c_1\sum_{k=0}^{K-1}\alpha_k^2}{\sum_{k=0}^{K-1}\alpha_k}\sigma_h^2+\frac{4c_2\sum_{k=0}^{K-1}\alpha_k^2}{\sum_{k=0}^{K-1}\alpha_k}\sigma_f^2\nonumber
\\&+\frac{4c_3\sum_{k=0}^{K-1}\alpha_k^2}{\sum_{k=0}^{K-1}\alpha_k}\sigma_g^2+\frac{8L_g^2M^2\sum_{k=0}^{K-1}\alpha_k(1-\lambda\mu)^{2N+2}}{\mu^2\sum_{k=0}^{K-1}\alpha_k}(3+\frac{\alpha_k}{6})\nonumber
\\\leq&\frac{4{\W}^0}{\min\{\bar{\alpha}_1,\bar{\alpha}_2,\bar{\alpha}_3,\frac{\bar{\alpha}}{\sqrt{K}}\}K}+\frac{4c_0\bar{\alpha}}{\sqrt{K}}M^2+\frac{4c_1\bar{\alpha}}{\sqrt{K}}\sigma_h^2+\frac{4c_2\bar{\alpha}}{\sqrt{K}}\sigma_f^2+\frac{4c_3\bar{\alpha}}{\sqrt{K}}\sigma_g^2\nonumber
\\&+\frac{8L_g^2M^2(1-\lambda\mu)^{2N+2}}{\mu^2}(3+\frac{\alpha_k}{6})\nonumber
\\\leq&\frac{4{\W}^0}{\min\{\bar{\alpha}_1,\bar{\alpha}_2,\bar{\alpha}_3\}K}+\frac{4{\W}^0}{\bar{\alpha}\sqrt{K}}+\frac{4c_0\bar{\alpha}}{\sqrt{K}}M^2+\frac{4c_1\bar{\alpha}}{\sqrt{K}}\sigma_h^2+\frac{4c_2\bar{\alpha}}{\sqrt{K}}\sigma_f^2+\frac{4c_3\bar{\alpha}}{\sqrt{K}}\sigma_g^2\nonumber
\\&+\frac{8L_g^2M^2(1-\lambda\mu)^{2N+2}}{\mu^2}(3+\frac{\alpha_k}{6})\nonumber
\\=&\mathcal{O}\bigg(\frac{1}{\min\{\bar{\alpha}_1,\bar{\alpha}_2,\bar{\alpha}_3\}{K}}+\frac{1}{\bar\alpha\sqrt{K}}+\frac{\bar{\alpha}\max\{c_0,c_1\sigma_h^2,c_2,c_3\}}{\sqrt{K}}+(1-\lambda\mu)^{2N}\bigg).
\end{align}
The proof is complete.
\end{proof}
\subsection{Proof of \Cref{corollary2}}
\begin{proof}
Let $\eta=\frac{M_f}{L_y}=\mathcal{O}(\kappa_g)$. It follows from \Cref{lemma1} and  \Cref{Theorem2} that
\begin{align}\label{orderofalphaprime}
\begin{split}
L_y=&\mathcal{O}(\kappa_g),\
L_{yx}=\mathcal{O}(\kappa_g^3),\
M_f=\mathcal{O}(\kappa_g^2),\
L_f^\prime=\mathcal{O}(\kappa_g^3),\
\sigma_h^2=\mathcal{O}(N\kappa_g),\
\\\bar{\alpha}_1=&\mathcal{O}(\kappa_g^{-3}),\ 
\bar{\alpha}_2=\mathcal{O}(\kappa_g^{-4}),\ 
\bar{\alpha}_3=\mathcal{O}(N\kappa_g^{-4}+\kappa
_g^{-3}),\ 
\bar{\beta}=\mathcal{O}(\kappa_g^4+N\kappa_g^3),\
\\c_0=&\mathcal{O}(\kappa_g^3),\
c_1=\mathcal{O}(\kappa_g^3),\ c_2=\mathcal{O}(\kappa_g^3),\
c_3=\mathcal{O}\Big(\big(\frac{\kappa_g^8}{N}+N\kappa_g^6\big)\big(\kappa_g+N\kappa_g^{-1}\big)\Big).
\end{split}
\end{align}
Now, if we select $N=\mathcal{O}(\kappa_g)$, $\bar{\alpha}=\mathcal{O}(\kappa_g^{-4})$, we obtain from \cref{estimator2final} that
\begin{align*}
&\bar{\beta}=\mathcal{O}(\kappa_g^4),\; c_3=\mathcal{O}(\kappa_g^8),\; \frac{1}{K}\sum_{k=0}^{K-1}\E[\|\nabla f(x_k)\|^2]=\mathcal{O}(\frac{\kappa_g^4}{K}+\frac{\kappa_g^{4}}{\sqrt{K}}).
\end{align*}
To achieve an $\epsilon$-stationary point, it requires 
$K=\mathcal{O}(\kappa_g^8\epsilon^{-2})$ and  the number of samples in $\xi$ and $\zeta$ are both $\mathcal{O}(\kappa_g^9\epsilon^{-2})$. Then the proof is complete. 
\end{proof}
\end{document}